\documentclass{article}

\usepackage{arxiv}

\usepackage{array}
\newcolumntype{C}[1]{>{\centering\arraybackslash}p{#1}}

\usepackage[utf8]{inputenc} % allow utf-8 input
\usepackage[T1]{fontenc}    % use 8-bit T1 fonts
\usepackage{hyperref}       % hyperlinks
\usepackage{url}            % simple URL typesetting
\usepackage{booktabs}       % professional-quality tables
\usepackage{amsfonts}       % blackboard math symbols
\usepackage{nicefrac}       % compact symbols for 1/2, etc.
\usepackage{graphicx}
\usepackage{natbib}
\usepackage{doi}
\usepackage{tabularx}
\usepackage{wrapfig}
\usepackage{needspace}

\usepackage{enumitem}
\usepackage{amsmath}
\usepackage{amssymb}
\usepackage{bm}
\usepackage{amsthm}
\usepackage{multirow}
\usepackage{subcaption}
\usepackage{algorithm}
\usepackage{algorithmic}

\newtheorem{assumption}{Assumption}
\newtheorem{theorem}{Theorem}
\newtheorem{lemma}{Lemma}
\newtheorem{proposition}{Proposition}
\newtheorem{remark}{Remark}

\title{FedPA-LoRA: Product-Aligned Framework for Mitigating Aggregation and Initialization Errors in Heterogeneous Federated LoRA}

\author{%
\begin{center}
\begin{tabular}{@{}C{0.30\textwidth}C{0.30\textwidth}C{0.30\textwidth}@{}}
\begin{tabular}[t]{c}
\textbf{Juseok~Jeon}\\
DGIST\\
Daegu, Republic of Korea\\
\texttt{jooseuk@dgist.ac.kr}
\end{tabular}
&
\begin{tabular}[t]{c}
\textbf{Ramy E.~Ali}\textsuperscript{\(\ddagger\)}\\
Samsung\\
San Diego, USA\\
\texttt{ramy.ali@ieee.org}
\end{tabular}
&
\begin{tabular}[t]{c}
\textbf{Doyun~Kwon}\\
DGIST\\
Daegu, Republic of Korea\\
\texttt{ehdbs810@dgist.ac.kr}
\end{tabular}
\\
\noalign{\vskip 1.5em}
\begin{tabular}[t]{c}
\textbf{Myungbeom~Her}\\
DGIST\\
Daegu, Republic of Korea\\
\texttt{myungbeom.her@dgist.ac.kr}
\end{tabular}
&
\begin{tabular}[t]{c}
\textbf{Jinhwi~Kim}\\
DGIST\\
Daegu, Republic of Korea\\
\texttt{kjh2159@dgist.ac.kr}
\end{tabular}
&
\begin{tabular}[t]{c}
\textbf{Jinhyun~So}\textsuperscript{\(\dagger\)}\\
DGIST\\
Daegu, Republic of Korea\\
\texttt{jinhyun@dgist.ac.kr}
\end{tabular}
\end{tabular}
\end{center}%
\thanks{\textsuperscript{\(\dagger\)} Corresponding author.\protect\\
\textsuperscript{\(\ddagger\)} The opinions expressed in this work are solely those of the author and do not represent the views of Samsung.}%
}

\date{}

\renewcommand{\shorttitle}{FedPA-LoRA}

\hypersetup{
pdftitle={FedPA-LoRA: Product-Aligned Framework for Mitigating Aggregation and Initialization Errors in Heterogeneous Federated LoRA},
pdfsubject={Machine Learning, Federated Learning},
pdfauthor={Juseok Jeon, Ramy E. Ali, Doyun Kwon, Myungbeom Her, Jinhwi Kim, Jinhyun So},
pdfkeywords={Federated Learning, Large Language Models, LoRA, Parameter-Efficient Fine-Tuning},
}

\begin{document}
\maketitle
\setcounter{footnote}{0}

\begin{abstract}
Low-Rank Adaptation (LoRA) enables efficient federated fine-tuning of large language models, but its factorized parameterization creates a tension between accurate aggregation of local updates and continuity of locally optimized factors.  
Factor-wise aggregation incurs aggregation mismatch but better preserves factor continuity, whereas product-space reconstruction reduces this mismatch at the cost of greater factor-level initialization mismatch from newly reconstructed factors.
We propose FedPA-LoRA, a product-aligned federated LoRA framework that jointly addresses these limitations and provably converges under both homogeneous and heterogeneous client ranks. 
Each client preserves its local factors across communication rounds and aligns its product toward a rank-specific global reference, maintaining local optimization
continuity while promoting global consistency under data heterogeneity.
The server aggregates heterogeneous-rank updates in the common product
space and efficiently reconstructs a rank-constrained global adapter
without forming the dense aggregate. 
This design supports client-specific computation and communication budgets. 
Experiments on natural language understanding and generation tasks show that FedPA-LoRA consistently outperforms representative baselines across varying levels of data heterogeneity and homogeneous- and heterogeneous-rank settings, with up to a $6.82$ percentage-point improvement in average GLUE accuracy under heterogeneous client ranks. The code is available at \href{https://github.com/Juseok-Jeon/FedPA-LoRA.git}{FedPA-LoRA-Code}.
%\url{https://anonymous.4open.science/r/FedPA-LoRA}.
%\href{https://anonymous.4open.science/r/FedPA-LoRA}
%{anonymous code repository}.
\end{abstract}

% keywords can be removed
\keywords{Federated Learning \and LLMs \and LoRA}

\section{Introduction}

Large language models (LLMs) owe much of their downstream performance to
the scale and diversity of the data used to adapt them \citep{kaplan2020scaling},
yet a large share of that data sits on individual devices or within
institutions that cannot release it externally for privacy or regulatory
reasons \citep{rieke2020future}. Federated learning (FL) addresses
this constraint by coordinating model training across participants
without ever transferring their raw data \citep{mcmahan2017communication}.
Even so, fine-tuning the full parameter set in this setting remains
costly: resource-limited clients cannot feasibly compute and transmit
updates at the scale of the complete model every round
\citep{zhang2023fedpetuning,xu2024fwdllm}. Parameter-efficient
fine-tuning (PEFT) alleviates these costs by training a compact set
of additional parameters and leaving the pretrained backbone
unchanged \citep{lialin2023scaling,han2024parameterefficient}. A prominent example is Low-Rank Adaptation (LoRA), which parameterizes
each weight update $\Delta \bm W$ as the product of two low-rank matrices
$\bm B$ and $\bm A$, i.e., $\Delta \bm W=\bm B \bm A$ \citep{hu2022lora}. By substantially reducing the number of trainable
and communicated parameters, LoRA has become a practical approach for
federated fine-tuning of large models
\citep{babakniya2023slora,kuang2024federatedscope}.

Despite these advantages, LoRA's factorized parameterization complicates federated aggregation. Factor-wise averaging preserves a compact low-rank adapter, but the product of the averaged factors generally differs from the average of the locally optimized updates, resulting in an aggregation mismatch \citep{bai2024federated,chen2025robust,zhang2026fedrotlora}. Product-space aggregation avoids the cross-client terms introduced by factor-wise averaging by aggregating the local updates before low-rank reconstruction.
However, the reconstructed global factors may differ substantially from the local  factors, and repeatedly replacing the latter can disrupt factor-level continuity across communication rounds. Existing approaches, therefore, face a tension between global aggregation fidelity and local optimization continuity. 

Resource heterogeneity further complicates federated LoRA. Clients
with different system capabilities may require different LoRA ranks,
yielding incompatible factor dimensions and necessitating rank-aware
aggregation and client-specific redistribution
\citep{cho-etal-2024-heterogeneous,bai2024federated,zhang2026heterogeneous}. Federated LoRA must, therefore, support heterogeneous adaptation capacities while maintaining  global knowledge sharing and
stable local optimization.

\textbf{Contributions}. We propose
FedPA-LoRA, a product-aligned federated LoRA framework that jointly
preserves local optimization continuity, improves aggregation
fidelity, and supports heterogeneous client resources. Each client
retains its locally optimized factors and aligns their product with a
rank-specific global reference, \emph{without overwriting the preserved
factors}. The server then aggregates heterogeneous-rank updates in the
common product space and efficiently reconstructs an optimal rank-constrained global adapter.
% This formulation allows the local rank $r_i$ and reference rank $R_i$ to independently control computation and uplink costs, and downlink communication costs, respectively.
This formulation decouples the local rank $r_i$, which controls computation and uplink cost, from the reference rank $R_i$, which controls downlink communication.
Our  contributions are summarized as follows.

\begin{itemize}[leftmargin=*, itemsep=4pt, parsep=0pt]
    \item We propose FedPA-LoRA to jointly address aggregation and
    factor-level initialization mismatches through product-space
    aggregation and local factor preservation. FedPA-LoRA uses global product-guided alignment to maintain
    global consistency under data heterogeneity and supports
    client-specific computation and communication budgets.

    \item  We prove that FedPA-LoRA converges to a stationary point of the
    product-guided local objective at the standard non-convex federated
    stochastic gradient descent (SGD) rate in the homogeneous setting, and at
    the same rate up to an additive term controlled by the reference-truncation
    error in the heterogeneous setting.

    \item We demonstrate the  superiority of FedPA-LoRA across diverse LLM fine-tuning tasks under both data and resource heterogeneity, achieving up to a $6.82$ percentage-point improvement in  accuracy over state-of-the-art baselines.
\end{itemize}
\section{Related Work}
We now review the federated LoRA methods most closely related to FedPA-LoRA, and defer discussion of
additional related methods to  Appendix \ref{appendix:additional_related_work}.

\subsection{Federated LoRA}

Federated LoRA methods primarily differ in how they aggregate and
redistribute the two low-rank factors. \textbf{\emph{(i)} Factor-wise
aggregation} preserves a fixed-rank global adapter by aggregating the
factors independently. FedIT \citep{10447454} directly averages
$\bm B$ and $\bm A$, whereas FedRot-LoRA
\citep{zhang2026fedrotlora} aligns client factors before averaging to reduce cross-client subspace misalignment. Although alignment reduces aggregation error, the resulting update is still obtained by factor-wise averaging and therefore does not generally equal the average of the local products.
\textbf{\emph{(ii)} Product-space reconstruction} approximates aggregated local updates with a low-rank global adapter. FlexLoRA \citep{bai2024federated} applies truncated singular value decomposition
(SVD) to the aggregated product, while FedSRD
\citep{10.1145/3774904.3792144} reconstructs the global update from
sparsified client updates before decomposition. Although these methods improve aggregation fidelity, they require dense reconstruction and
high-dimensional decomposition, and redistributing the reconstructed
factors can disrupt local optimization continuity. LoRA-FAIR
\citep{Bian_2025_ICCV} instead optimizes a residual for one global factor to better approximate the aggregated update, avoiding high-dimensional SVD at the cost of iterative server optimization and additional hyperparameter tuning.
% FedEx-LoRA \citep{singhal2025fedexlora} removes the mismatch entirely by adding the difference between the average of the local products and the factor-wise average directly to the frozen pre-trained weights, achieving exact aggregation. This modifies the shared backbone at every round, however, and the
% method does not address heterogeneous client ranks.
FedPA-LoRA instead uses reduced QR factorization
and performs SVD only on a small core matrix, reducing server-side computation and memory.
\textbf{\emph{(iii)} Partial training or sharing} simplifies aggregation by restricting which factors are optimized or globally shared. FFA-LoRA \citep{ICLR2024_4e243e95} freezes $\bm A$ and trains only $\bm B$, whereas
RoLoRA \citep{chen2025robust} alternates the trainable factor across communication rounds. Although these approaches make aggregation linear, freezing one factor restricts joint optimization of the two factors and may slow convergence or limit adaptation capacity. FedSA-LoRA \citep{guo2025selective} instead trains both factors locally but aggregates only $\bm A$, leading to an incomplete global adapter.
\textbf{\emph{(iv)} High-communication aggregation} achieves exact aggregation through additional communication. FedEx-LoRA \citep{singhal2025fedexlora} transmits the residual between the average of the local products and the factor-wise average and incorporates it into the shared pre-trained weights, eliminating aggregation mismatch. However, this modifies the shared backbone at every round and does not support heterogeneous client ranks. FLoRA \citep{3737916.3738624} instead achieves exact aggregation by stacking local factors, increasing the global rank and communication overhead. These approaches therefore undermine the motivation for LoRA in bandwidth-constrained FL.

\subsection{Heterogeneous Federated LoRA}

Resource heterogeneity motivates assigning client-specific LoRA ranks
according to local computation and communication budgets. HetLoRA
\citep{cho-etal-2024-heterogeneous} zero-pads heterogeneous factors to
a common rank for factor-wise aggregation and truncates the global
factors for each client. However, independently averaging the padded
factors still introduces aggregation mismatch. FlexLoRA \citep{bai2024federated} aggregates local products and applies
SVD-based reconstruction to redistribute rank-specific factors. FLoRA instead exactly aggregates heterogeneous-rank updates by stacking local factors. However, its global rank grows with the sum of client ranks, increasing communication and adapter size, and its per-round factor reinitialization disrupts optimization continuity.
Ravan \citep{raje2025ravan} employs multiple fixed-basis heads and
trains resource-dependent intermediate matrices, restricting local
adaptation to subspaces determined at initialization. Fed-PLoRA
\citep{zhang2026heterogeneous} decomposes LoRA into parallel rank-one
modules and allows each client to train a resource-dependent subset
through Select-N-Fold, but distributes the complete collection of
global modules to every client.

\textbf{Our work}. Unlike the prior approaches, FedPA-LoRA jointly preserves local optimization continuity and improves aggregation fidelity by retaining client-specific factors while aggregating heterogeneous-rank updates in a common product space. It further decouples the local computation rank from the communication rank.
\section{Motivating Example}
\label{sec:motivating_example}

Federated LoRA exhibits a fundamental tension between accurate aggregation of local weight updates and continuity of locally optimized factors. We illustrate this trade-off using FedIT, FlexLoRA, and FedRot-LoRA as representative baselines and show that FedPA-LoRA jointly reduces aggregation mismatch and eliminates factor-level initialization mismatch.

\subsection{Aggregation Mismatch}

We consider an FL setting with $N$ clients. At communication round $t$, client $i$ obtains a local LoRA update
$\Delta \bm W_i^{(t)}=\bm B_i^{(t)}\bm A_i^{(t)}$, where
$\bm B_i^{(t)}\in\mathbb{R}^{d\times r}$,
$\bm A_i^{(t)}\in\mathbb{R}^{r\times d}$, with $d$ denoting the feature dimension and $r \ll d$ representing the LoRA rank. 
The ideal global update is the average of the locally optimized products,
$\Delta \bm W_{\mathrm{ideal}}^{(t)}
=
\frac{1}{N}
\sum_{i=1}^{N}
\bm B_i^{(t)}\bm A_i^{(t)}$, whose rank can be as large as $\min(Nr, d)$.

FedIT preserves rank $r$ by independently averaging the local factors,
$\bm B_g^{(t)}=\frac{1}{N}\sum_i\bm B_i^{(t)}$ and
$\bm A_g^{(t)}=\frac{1}{N}\sum_i\bm A_i^{(t)}$. However,
$\bm B_g^{(t)}\bm A_g^{(t)}\neq\Delta\bm W_{\mathrm{ideal}}^{(t)}$ in general, because the product of the averaged factors contains cross-client terms $\bm B_i^{(t)}\bm A_j^{(t)}$ for $i\neq j$, which do not correspond to
any locally optimized update. We refer to this discrepancy as the \emph{aggregation mismatch}.

FedRot-LoRA reduces this aggregation mismatch by applying a client-specific orthogonal rotation matrix
$\bm R_i^{(t)}\in\mathbb{R}^{r\times r}$ to align the local factors before
averaging. Specifically, it transforms the local factors as
$\widetilde{\bm B}_i^{(t)}=\bm B_i^{(t)}\bm R_i^{(t)}$ and
$\widetilde{\bm A}_i^{(t)}=(\bm R_i^{(t)})^\top\bm A_i^{(t)}$, where
$(\bm R_i^{(t)})^\top\bm R_i^{(t)}=\bm I_r$ and
$\det(\bm R_i^{(t)})>0$. This transformation preserves each local product while improving consistency among the factors being aggregated.
Nevertheless, the aligned factors are still averaged independently and therefore do not generally recover
$\Delta\bm W_{\mathrm{ideal}}^{(t)}$ exactly.

\subsection{Factor-Level Initialization Mismatch}

FlexLoRA avoids factor-wise aggregation by
computing the rank-$r$ truncated SVD of the ideal update,
$\Delta\bm W_{\mathrm{ideal}}^{(t)}
\approx
\bm U_r^{(t)}\bm\Sigma_r^{(t)}(\bm V_r^{(t)})^\top$, where
$\bm U_r^{(t)},\bm V_r^{(t)}\in\mathbb{R}^{d\times r}$ have orthonormal
columns and $\bm\Sigma_r^{(t)}\in\mathbb{R}^{r\times r}$ is diagonal.
It then reconstructs the global factors as
$\bm B_g^{(t)}=\bm U_r^{(t)}\bm\Sigma_r^{(t)}$ and
$\bm A_g^{(t)}=(\bm V_r^{(t)})^\top$. This yields the optimal rank-$r$ approximation of the ideal update under the Frobenius norm.
However, low-rank factorization is not unique. The SVD-based pair $(\bm B_g^{(t)},\bm A_g^{(t)})$ is only one representative of an entire
family of factor pairs that realize the same update. In particular,
for any invertible matrix
$\bm Q\in\mathbb{R}^{r\times r}$, the transformed factors
$\bm B_g^{(t)}\bm Q$ and $\bm Q^{-1}\bm A_g^{(t)}$ satisfy
$(\bm B_g^{(t)}\bm Q)(\bm Q^{-1}\bm A_g^{(t)})
=
\bm B_g^{(t)}\bm A_g^{(t)}$.
Product-level equivalence therefore does not imply consistency between
the corresponding factors. Replacing the locally optimized factors
$(\bm B_i^{(t)},\bm A_i^{(t)})$ with newly reconstructed global factors at
the next round can disrupt local optimization continuity, resulting
in a \emph{factor-level initialization mismatch}.

FedRot-LoRA provides an intermediate compromise. Its rotational alignment
can reduce factor-wise aggregation error, but it modifies the local factors, and clients are still initialized with the aggregated global factors in the next round. It therefore preserves neither the exact average of the local products nor the locally optimized factors.

\subsection{Empirical Observation}

We empirically examine the trade-off between aggregation fidelity and
factor-level continuity using RoBERTa-Large \citep{liu2019roberta} on
MNLI with three clients, each using a common LoRA rank of $r=4$.
The local datasets are partitioned by a Dirichlet distribution with
concentration parameter $\beta=0.5$ to introduce data heterogeneity. 
We quantify the aggregation mismatch as the Frobenius distance between the aggregated global update and the average local update:
\begin{equation}
E_{\mathrm{agg}}^{(t)}
=
\left\lVert
\bm B_g^{(t)} \bm A_g^{(t)} - \Delta \bm W_{\mathrm{ideal}}^{(t)}
\right\rVert_{\mathrm F},
\label{eq:aggregation_error}
\end{equation}
where $\Delta \bm W_{\mathrm{ideal}}^{(t)}=\frac{1}{N}\sum_{i=1}^{N}\bm B_i^{(t)} \bm A_i^{(t)}$. 

To account for the different scales of the LoRA factors $\bm B$ and $\bm A$, we measure factor-level initialization mismatch using a relative error. For each factor $\bm Y\in\{\bm B,\bm A\}$, we define
\begin{equation}
E_{\mathrm{init}}^{(t)}(\bm Y)
=
\frac{1}{N}
\sum_{i=1}^{N}
\frac{
\left\lVert
\bm Y_i^{(t+1,0)}-\bm Y_i^{(t)}
\right\rVert_{\mathrm F}
}{
\left\lVert
\bm Y_i^{(t+1,0)}
\right\rVert_{\mathrm F}
+\varepsilon
},
\label{eq:initialization_error}
\end{equation}
where $\varepsilon>0$ avoids division by zero,
$\bm Y_i^{(t)}$ denotes client $i$'s locally optimized factor at the
end of round $t$, and $\bm Y_i^{(t+1,0)}$ its initialization at round
$t+1$. This metric measures the factor-level change at next-round
initialization rather than natural differences among locally optimized
factors caused by data heterogeneity.

\begin{figure}[t]
\centering
   % \captionsetup{font=footnotesize}

\begin{subfigure}[t]{0.325\linewidth}
    \centering
    \includegraphics[
    width=\linewidth,
    trim=5 10 5 15,
    clip]
    {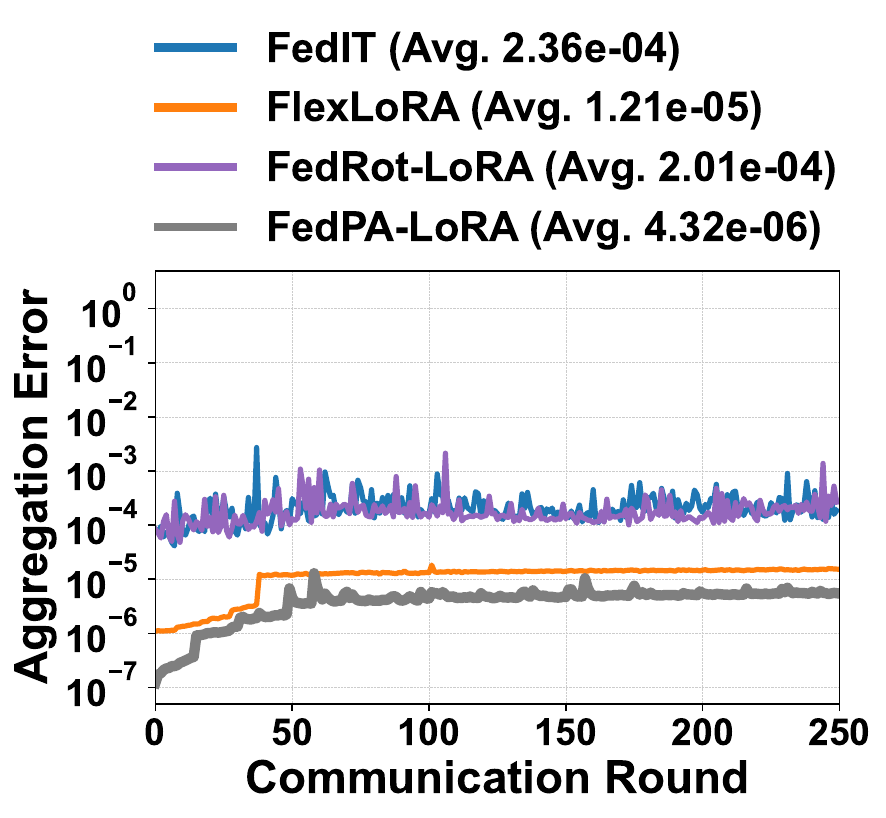}
    \caption{Aggregation Error}
    \label{fig:agg_error}
\end{subfigure}
\hfill
\begin{subfigure}[t]{0.325\linewidth}
    \centering
    \includegraphics[
    width=\linewidth,
    trim=5 10 5 15,
    clip]
    {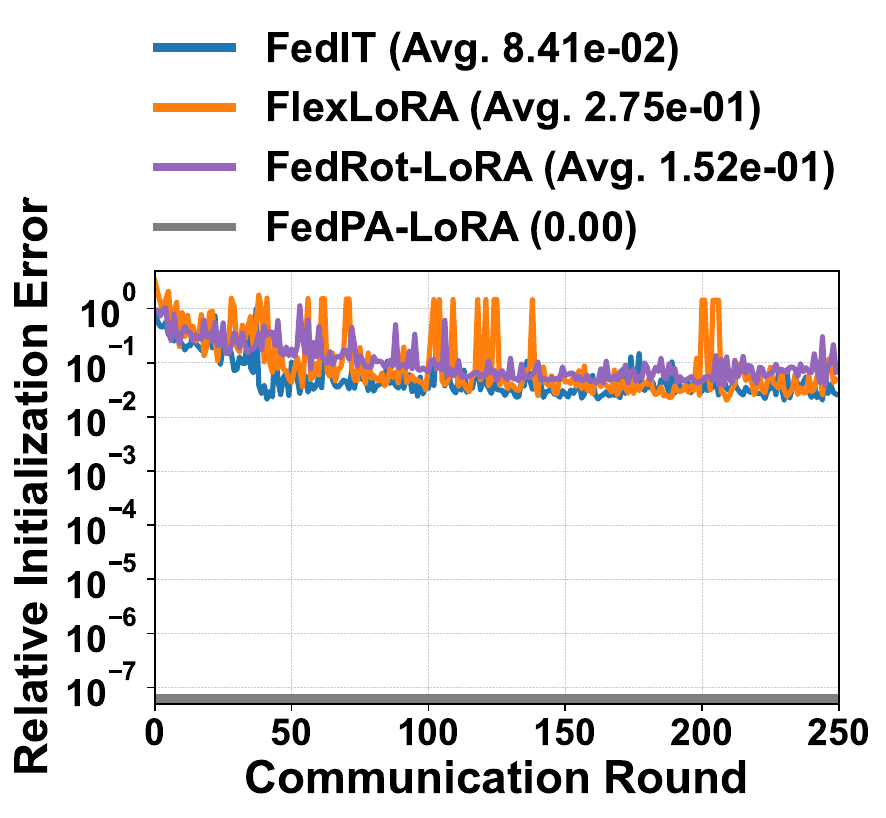}
    \caption{Initialization Error of $\bm B$}
    \label{fig:b_error}
\end{subfigure}
\hfill
\begin{subfigure}[t]{0.325\linewidth}
    \centering
    \includegraphics[
    width=\linewidth,
    trim=5 10 5 15,
    clip]
    {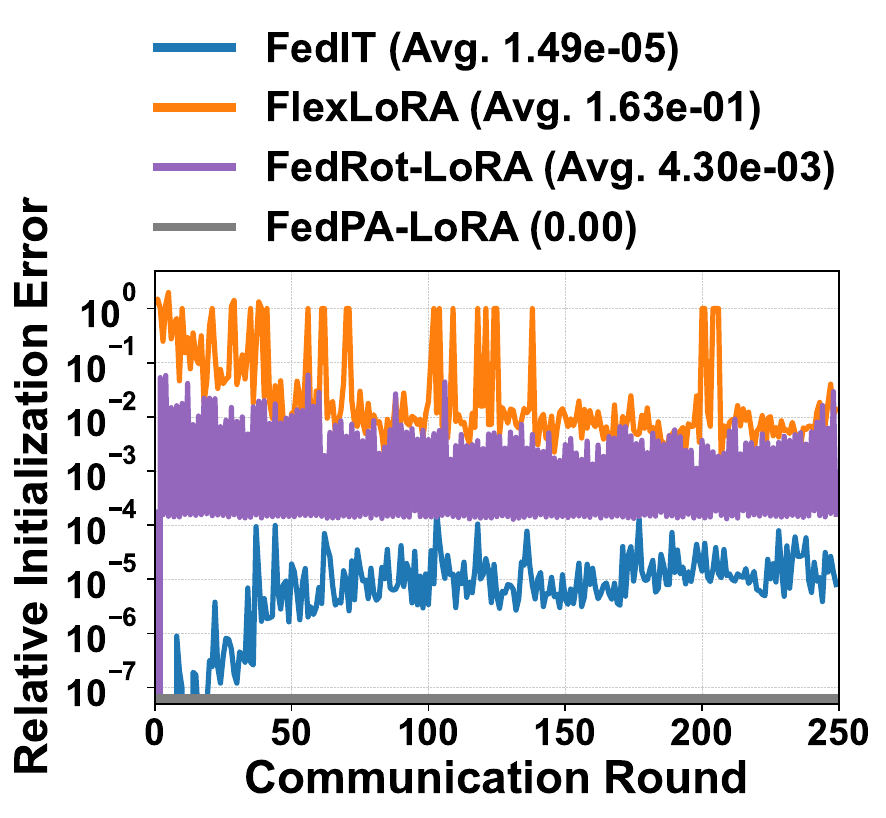}
    \caption{Initialization Error of $\bm A$}
    \label{fig:a_error}
\end{subfigure}

\caption{
Aggregation error and relative factor-level initialization errors on
MNLI with three clients. Panels (a)--(c) show the aggregation error and
the relative initialization errors of $\bm B$ and $\bm A$, respectively,
for the query projection in layer 0 of RoBERTa-Large. The y-axes use
logarithmic scales, with FedPA-LoRA's zero initialization errors shown
at the plotting floor.
}
\label{fig:error_results}

\end{figure}

Fig.~\ref{fig:error_results} reveals a clear trade-off between
aggregation fidelity and factor-level continuity. FedRot-LoRA reduces
the aggregation error of FedIT through factor alignment, while
FlexLoRA further lowers it through optimal rank-$r$ reconstruction of
the averaged local products. FedPA-LoRA achieves the lowest observed
error by combining optimal server-side reconstruction of the averaged
products with product-guided alignment during local training, which
keeps local products close to the global reference.

The factor-level initialization errors exhibit the opposite trend.
FedIT has the smallest mismatch among the baselines because it averages
local factors without refactorization, whereas FlexLoRA has the largest
due to SVD-based factor replacement. FedRot-LoRA lies between them
because rotation modifies the factors less severely than
refactorization. The larger errors for $\bm B$ than for $\bm A$ are
consistent with prior findings that $\bm B$ captures more
client-specific adaptation under non-IID data
\citep{guo2025selective}. FedPA-LoRA preserves local factors across
rounds and therefore has zero initialization error, addressing both
mismatches simultaneously.
\section{Proposed Method: FedPA-LoRA}

We consider an FL system with $N$ clients that collaboratively learn a
global LoRA adapter over $T$ communication rounds, where $\bm W_0$
denotes the frozen weight matrix of each adapted layer. At round
$t\in\{1,\ldots,T\}$, client $i$ performs $\tau$ local updates on its
rank-$r_i$ factors, indexed by $k=0,\ldots,\tau-1$. The factors before
the $k$-th update are denoted by
$(\bm B_i^{(t,k)},\bm A_i^{(t,k)})$, and the update produces
$(\bm B_i^{(t,k+1)},\bm A_i^{(t,k+1)})$. We denote the factors after
$\tau$ local updates by
$(\bm B_i^{(t)},\bm A_i^{(t)})
:=
(\bm B_i^{(t,\tau)},\bm A_i^{(t,\tau)})$.
The server provides client $i$ with a rank-$R_i$ global LoRA reference
that guides its locally preserved rank-$r_i$ factors through
product-level alignment. Here, $r_i$ determines computation and uplink
costs, whereas $R_i$ determines downlink cost. The server then
aggregates the local updates directly in the product space and
efficiently reconstructs a low-rank global adapter without forming the
dense aggregate, as illustrated in Fig.~\ref{fig:framework}.

\begin{figure}[t]
    \centering
    \includegraphics[
        width=\linewidth
    ]{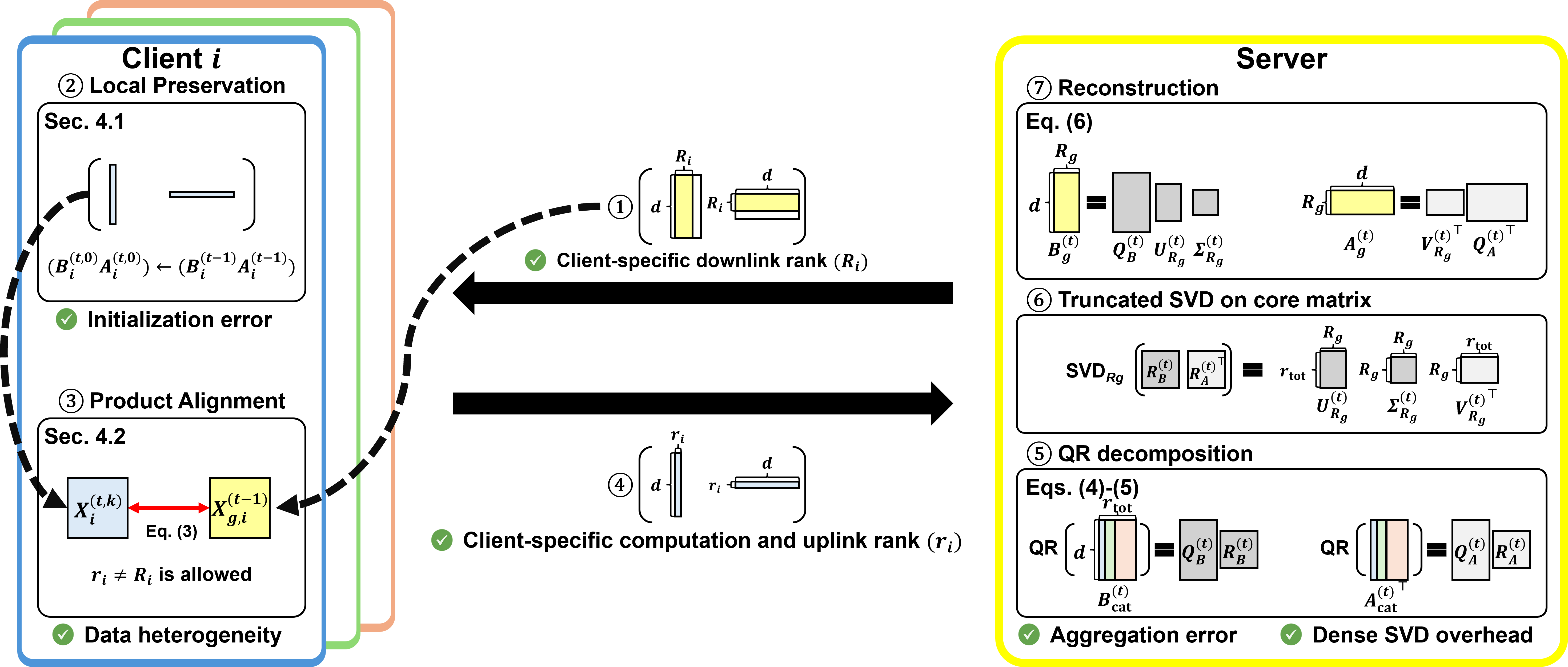}
    \caption{
    Overview of FedPA-LoRA. Clients preserve local factors to avoid initialization mismatch and align their product $\bm X_i^{(t,k)}$ with a rank-$R_i$ global reference. The server reconstructs the optimal rank-$R_g$ adapter via reduced QR and core SVD without forming the dense aggregate.
    }
    \label{fig:framework}
\end{figure}

\subsection{Local Factor Preservation}

Conventional FL algorithms initialize local training from the global
parameters at every communication round. In federated LoRA, however,
replacing locally optimized factors with newly reconstructed global
factors can introduce a factor-level initialization mismatch and
disrupt optimization continuity. FedPA-LoRA instead preserves each
client's locally optimized factors across rounds.

At initialization, the server constructs global LoRA factors
$(\bm B_g^{(0)},\bm A_g^{(0)})$ with rank
$r_g=\max_{1\leq i\leq N}r_i$, where $\bm B_g^{(0)}$ is initialized to zero and $\bm A_g^{(0)}$ is initialized using Kaiming uniform initialization, following the default PEFT implementation~\citep{peft}.
Client $i$ obtains the rank-$r_i$ factors $(\bm B_{g,i}^{(0)},\bm A_{g,i}^{(0)})$ by retaining the first
$r_i$ columns of $\bm B_g^{(0)}$ and the first $r_i$ rows of
$\bm A_g^{(0)}$, and initializes
$(\bm B_i^{(0)},\bm A_i^{(0)})
\leftarrow
(\bm B_{g,i}^{(0)},\bm A_{g,i}^{(0)})$.
At each subsequent round $t$, client $i$ preserves its previously
optimized factors by setting
$(\bm B_i^{(t,0)},\bm A_i^{(t,0)})
\leftarrow
(\bm B_i^{(t-1)},\bm A_i^{(t-1)})$.

\subsection{Global Product-Guided Alignment}

While local factor preservation maintains continuity, local training may still induce client drift under heterogeneous data. FedPA-LoRA therefore introduces product-level guidance without replacing the preserved factors.

At round $t$, the server constructs a rank-$R_i$ reference $(\bm B_{g,i}^{(t-1)},\bm A_{g,i}^{(t-1)})$ for client $i$ by retaining the first $R_i$ columns of $\bm B_g^{(t-1)}$ and the first $R_i$ rows of $\bm A_g^{(t-1)}$, where $R_i$ is determined by its communication budget. Starting from $(\bm B_i^{(t,0)},\bm A_i^{(t,0)})$, client $i$ performs $\tau$ local updates to minimize 
\begin{align}
\label{eq:product_guided_objective}
\mathcal L_i^{(t)}(\bm B_i,\bm A_i)
&=
f_i\left(\bm W_0+\bm B_i\bm A_i\right)
+
\frac{\lambda}{2}
\left\|
\bm B_i\bm A_i
-
\bm B_{g,i}^{(t-1)}\bm A_{g,i}^{(t-1)}
\right\|_{\mathrm F}^{2},
\end{align}
where  $\bm W_0$ denotes the frozen pre-trained weight matrix, $f_i$ denotes the local task objective of client $i$ and
$\lambda$ is a common regularization weight shared by all clients.
The regularization term
penalizes deviations of the local product from the global reference,
thereby mitigating client drift while preserving local adaptation. By
operating in the common product space, it accommodates different local
and reference ranks.

\subsection{Product-Space Aggregation}

After local training, each client sends its LoRA factors
$(\bm B_i^{(t)},\bm A_i^{(t)})$ to the server. Independently averaging these factors
introduces an aggregation mismatch. We therefore aggregate the local updates directly in the
product space as
$\Delta \bm W_{\mathrm{ideal}}^{(t)}
=\frac{1}{N}\sum_{i=1}^{N}\bm B_i^{(t)}\bm A_i^{(t)}$.
Since all products $\bm B_i^{(t)}\bm A_i^{(t)}$ have the same dimensions as the
adapted weight matrix, product-space aggregation naturally supports
heterogeneous local ranks while avoiding factor-wise aggregation
mismatch.

Directly constructing and decomposing the dense aggregated update can
incur substantial server-side computation and memory costs. To avoid
this overhead, the server exploits the low-rank structure of the local
updates. Denoting the total concatenated rank by
$r_{\mathrm{tot}}=\sum_{i=1}^{N}r_i$, we define
\begin{equation}
\bm B_{\mathrm{cat}}^{(t)}
=
\frac{1}{\sqrt{N}}
\left[\bm B_1^{(t)},\ldots,\bm B_N^{(t)}\right],
\qquad
\bm A_{\mathrm{cat}}^{(t)}
=
\frac{1}{\sqrt{N}}
\left[(\bm A_1^{(t)})^\top,\ldots,
(\bm A_N^{(t)})^\top\right]^\top.
\label{eq:concatenated_local_factors}
\end{equation}
These concatenated factors satisfy
$\bm B_{\mathrm{cat}}^{(t)}\bm A_{\mathrm{cat}}^{(t)}
=\Delta\bm W_{\mathrm{ideal}}^{(t)}$
and thus represent the product-space aggregate without explicitly
forming the dense update matrix.

Under the typical LoRA setting $r_{\mathrm{tot}}\ll d$, the server
computes the reduced QR factorizations
\begin{equation}
\bm B_{\mathrm{cat}}^{(t)}
=
\bm Q_B^{(t)}\bm R_B^{(t)},
\qquad
\left(\bm A_{\mathrm{cat}}^{(t)}\right)^\top
=
\bm Q_A^{(t)}\bm R_A^{(t)},
\label{eq:reduced_qr}
\end{equation}
where $\bm Q_B^{(t)}$ and $\bm Q_A^{(t)}$ have orthonormal columns.
The resulting core matrix
$\bm H^{(t)}=\bm R_B^{(t)}(\bm R_A^{(t)})^\top$
satisfies
$\Delta\bm W_{\mathrm{ideal}}^{(t)}
=\bm Q_B^{(t)}\bm H^{(t)}(\bm Q_A^{(t)})^\top$.
Setting $R_g=\max_{1\leq i\leq N}R_i$, the server computes
$\operatorname{SVD}_{R_g}(\bm H^{(t)})
=
\bm U_{R_g}^{(t)}
\bm\Sigma_{R_g}^{(t)}
(\bm V_{R_g}^{(t)})^\top$.
The global LoRA factors are then reconstructed as
\begin{equation}
\bm B_g^{(t)}
=
\bm Q_B^{(t)}\bm U_{R_g}^{(t)}\bm\Sigma_{R_g}^{(t)},
\qquad
\bm A_g^{(t)}
=
\left(
\bm Q_A^{(t)}\bm V_{R_g}^{(t)}
\right)^\top.
\label{eq:efficient_global_lora_reconstruction}
\end{equation}
By the optimality of truncated SVD, the resulting global update
$\Delta \bm W_g^{(t)}=\bm B_g^{(t)}\bm A_g^{(t)}$ satisfies
\begin{equation}
\Delta \bm W_g^{(t)}
\in
\operatorname*{arg\,min}_{\operatorname{rank}(\bm X)\leq R_g}
\left\|
\bm X-\Delta \bm W_{\mathrm{ideal}}^{(t)}
\right\|_{\mathrm F}.
\label{eq:optimal_rank_projection}
\end{equation}
Appendix~\ref{app:complexity} provides detailed complexity analyses together with empirical wall-clock and performance evaluations. Compared with dense product aggregation and SVD, whose dominant
server-side complexity is $\mathcal O(d^3)$, the proposed reduced-QR and core-SVD reconstruction reduces the dominant complexity to $\mathcal O(N^2dr^2)$ while recovering the same optimal rank-$R_g$ approximation. A randomized extension further reduces this complexity to $\mathcal O(Ndr^2)$ at the cost of replacing exact rank-constrained optimality with a probabilistic approximation guarantee, as detailed in Remark~\ref{remark:randomized_reconstruction}.

\subsection{Convergence Analysis}
\label{subsec:convergence_analysis}

We first analyze FedPA-LoRA under homogeneous local and reference
ranks, $r_i=R_i=r$, with full client participation. For client $i$ at local step $k$ of communication round $t$, we define
$\bm X_i^{(t,k)}=\bm B_i^{(t,k)}\bm A_i^{(t,k)}$ and
$\bm W_i^{(t,k)}=\bm W_0+\bm X_i^{(t,k)}$.
The end-of-round iterates are denoted by
$\bm X_i^{(t)}=\bm X_i^{(t,\tau)}$ and
$\bm W_i^{(t)}=\bm W_i^{(t,\tau)}$, and the global product by
$\bm X_g^{(t)}=\bm B_g^{(t)}\bm A_g^{(t)}$.
The round-wise joint objective is defined as
\begin{equation}
\Psi^{(t)}
=
\frac{1}{N}
\sum_{i=1}^{N}
\left[
f_i\left(\bm W_i^{(t)}\right)
+
\frac{\lambda}{2}
\left\|
\bm X_i^{(t)}-\bm X_g^{(t)}
\right\|_{\mathrm F}^{2}
\right].
\label{eq:conv_joint_objective}
\end{equation}

% Our analysis relies on the following assumptions. Assumptions~\ref{assumption:conv_smoothness}--
% \ref{assumption:conv_factor_regularity} are used to establish the
% stationarity guarantee in Theorem~\ref{theorem:conv_fedpa}, while
% Assumption~\ref{assumption:conv_lipschitz} is additionally used to
% derive the global-loss guarantee in Proposition~\ref{proposition:conv_global_loss}.
Our analysis relies on Assumptions~\ref{assumption:conv_smoothness}--
\ref{assumption:conv_lipschitz}. These assumptions are used to establish
the stationarity guarantee in Theorem~\ref{theorem:conv_fedpa}, while
Assumption~\ref{assumption:conv_lipschitz} is also used to derive the
global-loss guarantee in Proposition~\ref{proposition:conv_global_loss}.

\begin{assumption}[$L_s$-smoothness]
\label{assumption:conv_smoothness}
For every client $i$ and any $\bm W,\bm W'$,
\begin{equation}
\left\|
\nabla_{\bm W}f_i(\bm W)
-
\nabla_{\bm W}f_i(\bm W')
\right\|_{\mathrm F}
\leq
L_s
\left\|
\bm W-\bm W'
\right\|_{\mathrm F}.
\label{eq:conv_task_smoothness}
\end{equation}
\end{assumption}

\begin{assumption}[Stochastic task gradients]
\label{assumption:conv_stochastic_gradients}
For every client $i$, communication round $t$, and local step $k$,
the stochastic task gradient computed using mini-batch
$\xi_i^{(t,k)}$ is unbiased:
\begin{equation}
\mathbb E_{\xi_i^{(t,k)}}
\left[
\nabla_{\bm W}
f_i\left(
\bm W_i^{(t,k)};\xi_i^{(t,k)}
\right)
\right]
=
\nabla_{\bm W}
f_i\left(
\bm W_i^{(t,k)}
\right).
\label{eq:conv_unbiased_task_gradient}
\end{equation}
Moreover, there exists a constant $G_f>0$ such that
\begin{equation}
\left\|
\nabla_{\bm W}
f_i\left(
\bm W_i^{(t,k)};\xi_i^{(t,k)}
\right)
\right\|_{\mathrm F}
\leq
G_f.
\label{eq:conv_bounded_task_gradient}
\end{equation}
\end{assumption}

\begin{assumption}[LoRA factor regularity]
\label{assumption:conv_factor_regularity}
There exist constants $C_A,C_B>0$ such that, for every client $i$,
communication round $t$, and local step $k$,
\begin{equation}
\left\|\bm A_i^{(t,k)}\right\|_{\mathrm F}\leq C_A,
\qquad
\left\|\bm B_i^{(t,k)}\right\|_{\mathrm F}\leq C_B.
\label{eq:conv_bounded_factors}
\end{equation}
In addition, there exists a constant $c>0$ such that
\begin{equation}
\left\|
\nabla_{\bm W}\mathcal L_i^{(t)}
\left(\bm W_i^{(t,k)}\right)
\left(\bm A_i^{(t,k)}\right)^\top
\right\|_{\mathrm F}^{2}
+
\left\|
\left(\bm B_i^{(t,k)}\right)^\top
\nabla_{\bm W}\mathcal L_i^{(t)}
\left(\bm W_i^{(t,k)}\right)
\right\|_{\mathrm F}^{2}
\geq
c
\left\|
\nabla_{\bm W}\mathcal L_i^{(t)}
\left(\bm W_i^{(t,k)}\right)
\right\|_{\mathrm F}^{2}.
\label{eq:conv_gradient_preservation}
\end{equation}
\end{assumption}

\begin{remark}
Eq.~\eqref{eq:conv_gradient_preservation} is the gradient-preservation condition
of \citet{guo2025selective}, stated here in the equivalent norm form and in the
weaker summed version: we require only that the two terms jointly dominate
$\|\nabla_{\bm W}\mathcal L_i^{(t)}\|_{\mathrm F}^{2}$, rather than each
separately. 
\end{remark}

\begin{assumption}[$L_c$-Lipschitz continuity]
\label{assumption:conv_lipschitz}
For every client $i$ and any $\bm W,\bm W'$,
\begin{equation}
\left|
f_i(\bm W)-f_i(\bm W')
\right|
\leq
L_c
\left\|
\bm W-\bm W'
\right\|_{\mathrm F}.
\label{eq:conv_task_lipschitz}
\end{equation}
\end{assumption}

\begin{theorem}[Convergence of FedPA-LoRA]
\label{theorem:conv_fedpa}
Under Assumptions~\ref{assumption:conv_smoothness}--
\ref{assumption:conv_lipschitz}, suppose that
$\Psi^\star$ is a uniform lower bound on the average task loss, i.e., $\frac{1}{N}\sum_{i=1}^{N} f_i(\bm W)\geq\Psi^\star$
for all $\bm W$, and that
$\mathbb E[\Psi^{(0)}]-\Psi^\star\leq D$ for some constant $D>0$.
Defining $\widetilde D:=D+L_cC_AC_B$, there exists a constant $M_\lambda>0$, depending only on
$L_s$, $\lambda$, $G_f$, $C_A$, and $C_B$, and independent of $T$ and $\tau$, such that, for any learning rate $0<\eta\leq1$,
\begin{equation}
\frac{1}{NT\tau}
\sum_{t=1}^{T}
\sum_{i=1}^{N}
\sum_{k=0}^{\tau-1}
\mathbb E\!\left[
\left\|
\nabla_{\bm W}\mathcal L_i^{(t)}
\bigl(\bm W_i^{(t,k)}\bigr)
\right\|_{\mathrm F}^{2}
\right]
\leq
\frac{\widetilde D}{c\eta T\tau}
+
\frac{M_\lambda\eta}{c}.
\label{eq:conv_average_gradient_bound}
\end{equation}
Consequently, when $\widetilde D\leq M_\lambda T\tau$, choosing
$\eta=\sqrt{\widetilde D/(M_\lambda T\tau)}$
yields an averaged stationarity rate of
$\mathcal O((T\tau)^{-1/2})$.
\end{theorem}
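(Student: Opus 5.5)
The plan is a descent-lemma argument on the per-round objective $\mathcal L_i^{(t)}$ in factor space, summed over local steps. Within each round we then telescope across rounds by tracking how $\Psi$ changes at the server step.

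\textbf{Step 1: one-step descent within a round.} Fix client $i$ and round $t$. The local update is a stochastic gradient step on $(\bm B_i,\bm A_i)$ with gradients $\bm G\bm A^\top$ and $\bm B^\top\bm G$, where $\bm G=\nabla f_i(\bm W;\xi)+\lambda(\bm X-\bm X_{g,i}^{(t-1)})$. By Assumption~\ref{assumption:conv_factor_regularity}, the reference product $\bm X_{g,i}^{(t-1)}$ is bounded in Frobenius norm by $C_AC_B$ (homogeneous case: it is the optimal rank-$r$ approximation of an average of bounded products, so its norm is at most $\|\Delta\bm W_{\mathrm{ideal}}\|_{\mathrm F}\le C_AC_B$). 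Hence $\bm G$ is bounded by $G:=G_f+2\lambda C_AC_B$, and $\mathcal L_i^{(t)}$ is $(L_s+\lambda)$-smooth in $\bm W$. The product change $\bm X^{(t,k+1)}-\bm X^{(t,k)}=-\eta(\bm G\bm A^\top\bm A+\bm B\bm B^\top\bm G)+\eta^2\bm B^\top$-type terms is therefore first order $-\eta(\cdot)$ plus an $O(\eta^2)$ remainder bounded by a constant in $G,C_A,C_B$. Applying the smoothness inequality in $\bm W$ and taking the conditional expectation (unbiasedness) gives
\[
\mathbb E\,\mathcal L_i^{(t)}(\bm W^{(t,k+1)})\le \mathcal L_i^{(t)}(\bm W^{(t,k)})-\eta\Bigl(\|\nabla\mathcal L\,\bm A^\top\|_{\mathrm F}^2+\|\bm B^\top\nabla\mathcal L\|_{\mathrm F}^2\Bigr)+M_\lambda\eta^2 .
\]
The gradient-preservation condition~\eqref{eq:conv_gradient_preservation} then lower bounds the bracket by $c\|\nabla_{\bm W}\mathcal L_i^{(t)}\|_{\mathrm F}^2$. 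The constant $M_\lambda$ collects $(L_s+\lambda)$ times the squared step bound plus the second-order cross terms, and depends only on $L_s,\lambda,G_f,C_A,C_B$.

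\textbf{Step 2: sum over $k$ and handle the round transition.} Summing over $k$ gives $\mathcal L_i^{(t)}$ at the end of the round, which is exactly the client's term of $\Psi^{(t-1)}$ but with reference $\bm X_g^{(t-1)}$, evaluated at the end point. The local start of round $t$ equals the end of round $t-1$ by factor preservation. So the start value $\mathcal L_i^{(t)}(\bm W_i^{(t,0)})$ equals $f_i(\bm W_i^{(t-1)})+\frac\lambda2\|\bm X_i^{(t-1)}-\bm X_g^{(t-1)}\|^2$, which is precisely the $i$-th term of $\Psi^{(t-1)}$. The end value is $f_i(\bm W_i^{(t)})+\frac\lambda2\|\bm X_i^{(t)}-\bm X_g^{(t-1)}\|^2$.

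To close the recursion I must show that replacing $\bm X_g^{(t-1)}$ by $\bm X_g^{(t)}$ does not increase the averaged penalty. This is the key step. Since $\frac1N\sum_i\|\bm X_i^{(t)}-\bm Y\|^2=\frac1N\sum_i\|\bm X_i^{(t)}-\Delta\bm W_{\mathrm{ideal}}^{(t)}\|^2+\|\bm Y-\Delta\bm W_{\mathrm{ideal}}^{(t)}\|^2$, it is minimized over rank-$r$ matrices $\bm Y$ by the truncated SVD~\eqref{eq:optimal_rank_projection}. Because $\bm X_g^{(t-1)}$ also has rank at most $r$, the averaged penalty can only decrease. Averaging over $i$ yields $\mathbb E\Psi^{(t)}\le\mathbb E\Psi^{(t-1)}-\frac{c\eta}{N}\sum_{i,k}\mathbb E\|\nabla\mathcal L_i^{(t)}\|^2+\tau M_\lambda\eta^2$.

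\textbf{Step 3: telescope.} Summing over $t=1,\dots,T$ gives the average gradient norm bounded by $(\mathbb E\Psi^{(0)}-\mathbb E\Psi^{(T)})/(c\eta T\tau)+M_\lambda\eta/c$. Lower bound $\Psi^{(T)}$ by dropping the penalty, leaving $\frac1N\sum_if_i(\bm W_i^{(T)})$. This differs from $\frac1N\sum_i f_i$ at a common point only through the individual $\bm W_i$, not through the average loss assumption. Using Lipschitzness (Assumption~\ref{assumption:conv_lipschitz}) and $\|\bm X_i^{(T)}\|_{\mathrm F}\le C_AC_B$, I compare each $f_i(\bm W_0+\bm X_i^{(T)})$ with $f_i(\bm W_0)$, which costs $L_cC_AC_B$. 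This gives $\Psi^{(T)}\ge\Psi^\star-L_cC_AC_B$, and hence $\widetilde D$. The rate follows by plugging in $\eta$, and the condition $\widetilde D\le M_\lambda T\tau$ ensures $\eta\le1$.

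\textbf{Main obstacle.} The delicate part is Step 1's $O(\eta^2)$ control. The factor update induces a product update with a bilinear second-order term $\eta^2\bm G\bm A^\top\bm B^\top\bm G$. This term and the smoothness remainder must be bounded uniformly, which requires the Frobenius bounds on the factors and the reference. I would bound it crudely using $\eta\le1$, absorbing $\eta^3,\eta^4$ terms into $\eta^2$.
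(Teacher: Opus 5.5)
Your proposal is correct and follows the paper's proof essentially step for step. The shared structure is: the $(L_s+\lambda)$-smooth descent lemma in $\bm W$, using the expanded product increment (including the $\eta^2\bm G\bm A^\top\bm B^\top\bm G$ interaction term) and gradient preservation; the round transition via factor preservation, the variance decomposition, and rank-$r$ feasibility of $\bm X_g^{(t-1)}$; and the Lipschitz comparison to $\bm W_0$ that yields $\widetilde D$. The only detail you leave implicit is the norm bound on the round-one reference $\bm X_g^{(0)}$. The paper obtains it from $\bm X_g^{(0)}=\bm X_i^{(1,0)}$, since all clients start from the same global factors, so $\|\bm X_g^{(0)}\|_{\mathrm F}\le C_AC_B$ as well.
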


\begin{proposition}[Global LoRA Loss]
\label{proposition:conv_global_loss}
Under Assumption~\ref{assumption:conv_lipschitz}, define
$F(\bm W)=N^{-1}\sum_{i=1}^{N}f_i(\bm W)$ and
$\bm W_g^{(t)}=\bm W_0+\bm X_g^{(t)}$.
Then,
\begin{equation}
F\left(\bm W_g^{(t)}\right)
\leq
\Psi^{(t)}
+
\frac{L_c^2}{2\lambda}.
\label{eq:conv_global_loss_control}
\end{equation}
Thus, the global LoRA task loss is controlled by the joint objective
up to an additive term depending on $L_c$ and $\lambda$.
\end{proposition}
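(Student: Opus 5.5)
We prove Proposition~\ref{proposition:conv_global_loss} pointwise, client by client, and then average over clients. The idea is to compare the task loss at the global point $\bm W_g^{(t)}=\bm W_0+\bm X_g^{(t)}$ with the task loss at each local end-of-round point $\bm W_i^{(t)}=\bm W_0+\bm X_i^{(t)}$. Assumption~\ref{assumption:conv_lipschitz} controls their gap linearly in the product discrepancy. The proximal term $\frac{\lambda}{2}\|\bm X_i^{(t)}-\bm X_g^{(t)}\|_{\mathrm F}^2$ inside $\Psi^{(t)}$ is quadratic in the same discrepancy, so the quadratic term absorbs the linear one up to a constant.

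\textbf{Step 1: transfer the task loss to the local point.} For each client $i$, Assumption~\ref{assumption:conv_lipschitz} gives
\begin{equation*}
f_i\bigl(\bm W_g^{(t)}\bigr)\leq f_i\bigl(\bm W_i^{(t)}\bigr)+L_c\bigl\|\bm W_g^{(t)}-\bm W_i^{(t)}\bigr\|_{\mathrm F}
= f_i\bigl(\bm W_i^{(t)}\bigr)+L_c\,\delta_i,
\end{equation*}
where $\delta_i:=\|\bm X_i^{(t)}-\bm X_g^{(t)}\|_{\mathrm F}$. The equality holds because the frozen weight $\bm W_0$ cancels.

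\textbf{Step 2: absorb the linear term.} For any $\delta\geq0$, the scalar inequality
\begin{equation*}
L_c\delta\leq \frac{\lambda}{2}\delta^2+\frac{L_c^2}{2\lambda}
\end{equation*}
holds. It is Young's inequality, or equivalently the expansion of $\frac{\lambda}{2}\bigl(\delta-L_c/\lambda\bigr)^2\geq0$. Applying it with $\delta=\delta_i$ yields
\begin{equation*}
f_i\bigl(\bm W_g^{(t)}\bigr)\leq f_i\bigl(\bm W_i^{(t)}\bigr)+\frac{\lambda}{2}\bigl\|\bm X_i^{(t)}-\bm X_g^{(t)}\bigr\|_{\mathrm F}^2+\frac{L_c^2}{2\lambda}.
\end{equation*}

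\textbf{Step 3: average.} Average this bound over $i=1,\dots,N$. The left-hand side becomes $F(\bm W_g^{(t)})$. The first two terms on the right average to exactly $\Psi^{(t)}$ as defined in \eqref{eq:conv_joint_objective}, and the constant is unchanged. This gives \eqref{eq:conv_global_loss_control}.

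No step is a real obstacle. The only point to check is that the same $\bm X_g^{(t)}$ appears both in $\bm W_g^{(t)}$ and in the proximal term of $\Psi^{(t)}$, which is true by definition. The bound is deterministic, so it also holds in expectation if needed.
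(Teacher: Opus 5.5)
Your proposal is correct and follows the paper's proof essentially step for step. The paper likewise transfers the task loss from $\bm W_i^{(t)}$ to $\bm W_g^{(t)}$ via Lipschitz continuity (with $\bm W_0$ canceling), applies the scalar Young inequality $L_c a\leq \frac{\lambda}{2}a^2+\frac{L_c^2}{2\lambda}$ with $a=\|\bm X_i^{(t)}-\bm X_g^{(t)}\|_{\mathrm F}$, and averages over clients to recover $\Psi^{(t)}$.
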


The complete proofs for the homogeneous setting are provided in
Appendix~\ref{appendix:convergence_analysis}.
Appendix~\ref{app:convergence_heterogeneous} extends the analysis to heterogeneous local and reference ranks, introducing an additional term due to bounded reference-truncation error, while Appendix~\ref{app:partial_participation} extends the convergence
result to partial client participation with $K$ active clients per round.
\section{Experiments}

% We evaluate FedPA-LoRA under homogeneous- and heterogeneous-rank settings.
% We conduct experiments to answer four questions: \emph{(i)} Does FedPA-LoRA consistently improve federated LoRA across natural language understanding and generation tasks? \emph{(ii)} Does it remain effective under data and resource heterogeneity, varying numbers of clients, and different LoRA ranks? \emph{(iii)} Do local factor preservation, product-guided alignment, and rank-decoupled global guidance contribute as intended? \emph{(iv)} What server- and client-side costs does FedPA-LoRA introduce, and how effectively can randomized reconstruction reduce server overhead? The main text presents the primary performance and design analyses, while Appendices~\ref{app} and~\ref{app} provide complete experimental configurations, additional results, and detailed efficiency evaluations.
We conduct experiments to answer four questions: \emph{(i)} Does FedPA-LoRA consistently improve performance across various natural language tasks? \emph{(ii)} Is it robust to data and resource heterogeneity, client scale, and rank variation? \emph{(iii)} Do its core components contribute as intended? \emph{(iv)} What efficiency trade-offs does it introduce? The main text presents the primary performance and design analyses. Appendix~\ref{app:exp_details} provides complete experimental configurations and additional results, while Appendix~\ref{app:complexity} reports detailed complexity, server- and client-side runtime, and randomized-reconstruction evaluations.

\subsection{Experimental Setup}

\subsubsection{Datasets and Models}

We evaluate RoBERTa-Large \citep{liu2019roberta} on SST-2, QNLI, QQP,
RTE, and MNLI from GLUE \citep{wang-etal-2018-glue}. For generation,
we evaluate Llama 3-8B \citep{grattafiori2024llama} on GSM8K
\citep{cobbe2021training} and HumanEval \citep{chen2021evaluating},
using CodeSearchNet \citep{husain2019codesearchnet} for HumanEval
fine-tuning.

\subsubsection{Baselines}

For homogeneous ranks, we compare with FedIT \citep{10447454},
FlexLoRA \citep{bai2024federated}, FFA-LoRA
\citep{ICLR2024_4e243e95}, RoLoRA \citep{chen2025robust}, and
FedRot-LoRA \citep{zhang2026fedrotlora}. FedPA-LoRA preserves local
parameters across rounds and uploads only the LoRA factors.
Accordingly, on GLUE, each client retains its local classifier head
and evaluates the reconstructed global LoRA with that head. We
therefore  include the personalized methods FedDPA-LoRA
\citep{yang2024dualpersonalizing} and FedSA-LoRA
\citep{guo2025selective}. For heterogeneous ranks, we compare with
FlexLoRA, HetLoRA \citep{cho-etal-2024-heterogeneous}, Ravan
\citep{raje2025ravan}, and Fed-PLoRA
\citep{zhang2026heterogeneous}.

\subsubsection{Implementation Details}

We implement all methods using FederatedScope-LLM
\citep{kuang2024federatedscope} and report averages over three seeds.
We use 20 local updates for 250 rounds on GLUE and 30 updates for
200 rounds on generation tasks. Implementation details are
provided in Appendix~\ref{app:experimental_details}.

\subsection{Natural Language Understanding}

\begin{table}[t]
\centering

\caption{
GLUE accuracy under homogeneous- and heterogeneous-rank settings with
Dirichlet concentration parameter $\beta=0.5$. The homogeneous setting
uses $N=10$ clients with rank $r=4$, whereas the heterogeneous setting
uses $N=20$ clients divided into three resource groups.
}
\label{tab:nlu_results}

{\scriptsize
\renewcommand{\arraystretch}{0.90}

\begin{tabular*}{\textwidth}{
@{\extracolsep{\fill}}
clcccccc
@{}
}
\toprule
\textbf{Setting}
& \textbf{Method}
& \textbf{SST-2}
& \textbf{QNLI}
& \textbf{QQP}
& \textbf{RTE}
& \textbf{MNLI}
& \textbf{Average} \\
\midrule

\multirow{8}{*}{\textbf{Homogeneous}}
& FedIT
& $0.956{\pm}0.000$
& $0.891{\pm}0.046$
& \underline{$0.861{\pm}0.001$}
& $0.696{\pm}0.018$
& $0.858{\pm}0.001$
& $0.8524$ \\

& FlexLoRA
& $0.956{\pm}0.001$
& $0.642{\pm}0.100$
& $0.848{\pm}0.010$
& $0.742{\pm}0.061$
& $0.858{\pm}0.002$
& $0.8092$ \\

& FFA-LoRA
& $0.946{\pm}0.001$
& $0.890{\pm}0.010$
& $0.848{\pm}0.002$
& $0.611{\pm}0.021$
& $0.840{\pm}0.001$
& $0.8270$ \\

& RoLoRA
& $0.957{\pm}0.002$
& $0.882{\pm}0.046$
& $0.845{\pm}0.024$
& \underline{$0.794{\pm}0.022$}
& $0.852{\pm}0.002$
& \underline{$0.8660$} \\

& FedRot-LoRA
& $0.954{\pm}0.002$
& \underline{$0.908{\pm}0.004$}
& $0.849{\pm}0.002$
& $0.714{\pm}0.038$
& $0.858{\pm}0.002$
& $0.8566$ \\

& FedDPA-LoRA
& $0.957{\pm}0.001$
& $0.851{\pm}0.062$
& $0.811{\pm}0.045$
& $0.612{\pm}0.105$
& $0.879{\pm}0.002$
& $0.8220$ \\

& FedSA-LoRA
& \underline{$0.959{\pm}0.002$}
& $0.905{\pm}0.006$
& $0.814{\pm}0.025$
& $0.766{\pm}0.009$
& \underline{$0.880{\pm}0.001$}
& $0.8648$ \\

& FedPA-LoRA
& $\mathbf{0.969{\pm}0.001}$
& $\mathbf{0.942{\pm}0.003}$
& $\mathbf{0.884{\pm}0.001}$
& $\mathbf{0.871{\pm}0.005}$
& $\mathbf{0.907{\pm}0.002}$
& $\mathbf{0.9146}$ \\

\cmidrule(lr){1-8}

\multirow{5}{*}{\textbf{Heterogeneous}}
& FlexLoRA
& $0.850{\pm}0.015$
& $0.798{\pm}0.098$
& $0.761{\pm}0.033$
& $0.570{\pm}0.011$
& $0.832{\pm}0.031$
& $0.7622$ \\

& HetLoRA
& \underline{$0.957{\pm}0.001$}
& \underline{$0.919{\pm}0.003$}
& $0.850{\pm}0.002$
& $0.671{\pm}0.039$
& \underline{$0.875{\pm}0.000$}
& \underline{$0.8544$} \\

& Ravan
& $0.953{\pm}0.001$
& $0.883{\pm}0.040$
& \underline{$0.851{\pm}0.003$}
& \underline{$0.712{\pm}0.016$}
& $0.866{\pm}0.003$
& $0.8530$ \\

& Fed-PLoRA
& $0.932{\pm}0.027$
& $0.712{\pm}0.137$
& $0.811{\pm}0.006$
& $0.588{\pm}0.013$
& $0.839{\pm}0.004$
& $0.7764$ \\

& FedPA-LoRA
& $\mathbf{0.966{\pm}0.003}$
& $\mathbf{0.959{\pm}0.001}$
& $\mathbf{0.917{\pm}0.009}$
& $\mathbf{0.860{\pm}0.010}$
& $\mathbf{0.911{\pm}0.003}$
& $\mathbf{0.9226}$ \\

\bottomrule
\end{tabular*}
}

\end{table}

In the homogeneous-rank setting, FedPA-LoRA achieves the best performance on all five GLUE tasks, averaging $0.9146$ and outperforming RoLoRA by $4.86$ percentage points (Table~\ref{tab:nlu_results}). This highlights the benefit of local factor preservation and product-space alignment.

\subsubsection{Effect of Data Heterogeneity}

\begin{table}[t]
\centering
\caption{
MNLI accuracy across data heterogeneity levels
$\beta\in\{100,1,0.5,0.1\}$ with $N=3$ and LoRA rank $r=4$.
}
\label{tab:hetero_results}

\begin{tabular}{@{}lcccc@{}}
\toprule
\textbf{Method}
& $\bm{\beta}=\mathbf{100}$
& $\bm{\beta}=\mathbf{1}$
& $\bm{\beta}=\mathbf{0.5}$
& $\bm{\beta}=\mathbf{0.1}$ \\
\midrule

FedIT
& $0.873{\pm}0.002$
& $0.859{\pm}0.001$
& $0.869{\pm}0.001$
& $0.858{\pm}0.004$ \\

FlexLoRA
& $0.868{\pm}0.001$
& \underline{$0.877{\pm}0.002$}
& $0.869{\pm}0.002$
& $0.859{\pm}0.011$ \\

FFA-LoRA
& $0.867{\pm}0.002$
& $0.856{\pm}0.002$
& $0.856{\pm}0.002$
& $0.824{\pm}0.003$ \\

RoLoRA
& $0.867{\pm}0.008$
& $0.676{\pm}0.252$
& $0.862{\pm}0.000$
& $0.822{\pm}0.008$ \\

FedRot-LoRA
& \underline{$0.874{\pm}0.001$}
& $0.868{\pm}0.002$
& $0.868{\pm}0.002$
& $0.857{\pm}0.005$ \\

FedDPA-LoRA
& $0.873{\pm}0.003$
& $0.702{\pm}0.265$
& \underline{$0.879{\pm}0.002$}
& $0.686{\pm}0.257$ \\

FedSA-LoRA
& $0.862{\pm}0.001$
& $0.865{\pm}0.001$
& $0.874{\pm}0.002$
& \underline{$0.863{\pm}0.006$} \\

FedPA-LoRA
& $\bm{0.880{\pm}0.002}$
& $\bm{0.914{\pm}0.001}$
& $\bm{0.906{\pm}0.003}$
& $\bm{0.892{\pm}0.002}$ \\

\bottomrule
\end{tabular}
\end{table}

Table~\ref{tab:hetero_results} reports MNLI accuracy under varying
data heterogeneity. Larger $\beta$ values produce more uniform client
distributions, with $\beta=100$ approximating IID data, while smaller
values induce stronger non-IID label skew.

Global federated LoRA methods remain competitive under relatively
homogeneous data, where client updates are well aligned. As data
heterogeneity increases, personalized methods become more effective
by retaining client-specific information. In particular, FedDPA-LoRA
and FedSA-LoRA outperform all global baselines at $\beta=0.5$ and
$\beta=0.1$, respectively. However, the substantial variation in
FedDPA-LoRA across different values of $\beta$ indicates that
personalization alone does not guarantee robust performance across
data distributions.

FedPA-LoRA achieves the highest accuracy across all evaluated levels of data heterogeneity. Local factor preservation retains knowledge learned from each client's data, while product-guided regularization
encourages consistency with the global product reference.
By balancing client-specific knowledge with globally shared
information, FedPA-LoRA maintains strong performance from near-IID to highly heterogeneous settings.

\subsubsection{Effect of Number of Clients}

\begin{table}[t]
\centering
\caption{
MNLI accuracy under different numbers of clients
$N\in\{3,10,50\}$ with LoRA rank $r=4$ and
Dirichlet concentration parameter $\beta=0.5$.
}
\label{tab:client_results}

\begin{tabular}{@{}lccc@{}}
\toprule
\textbf{Method}
& $\bm{N}=\mathbf{3}$
& $\bm{N}=\mathbf{10}$
& $\bm{N}=\mathbf{50}$ \\
\midrule

FedIT
& $0.869{\pm}0.001$
& $0.858{\pm}0.001$
& $0.877{\pm}0.001$ \\

FlexLoRA
& $0.869{\pm}0.002$
& $0.858{\pm}0.002$
& $0.740{\pm}0.175$ \\

FFA-LoRA
& $0.856{\pm}0.002$
& $0.840{\pm}0.001$
& $0.860{\pm}0.001$ \\

RoLoRA
& $0.862{\pm}0.000$
& $0.852{\pm}0.002$
& $0.882{\pm}0.002$ \\

FedRot-LoRA
& $0.868{\pm}0.002$
& $0.858{\pm}0.002$
& $0.872{\pm}0.001$ \\

FedDPA-LoRA
& \underline{$0.879{\pm}0.002$}
& $0.879{\pm}0.002$
& $0.883{\pm}0.014$ \\

FedSA-LoRA
& $0.874{\pm}0.002$
& \underline{$0.880{\pm}0.001$}
& \underline{$0.891{\pm}0.000$} \\

FedPA-LoRA
& $\bm{0.906{\pm}0.003}$
& $\bm{0.907{\pm}0.002}$
& $\bm{0.916{\pm}0.001}$ \\

\bottomrule
\end{tabular}
\end{table}

Table~\ref{tab:client_results} shows MNLI accuracy as the number of clients increases from $3$ to $50$. FedPA-LoRA consistently outperforms all baselines and maintains stable performance across
the evaluated client scales. In particular, its accuracy slightly
increases from $0.906$ with three clients to $0.916$ with fifty clients, demonstrating its robustness to an increasing number of participating clients.

\subsubsection{Effect of Rank}

\begin{figure}[htbp]
    \centering
    \includegraphics[
        scale=0.5
    ]{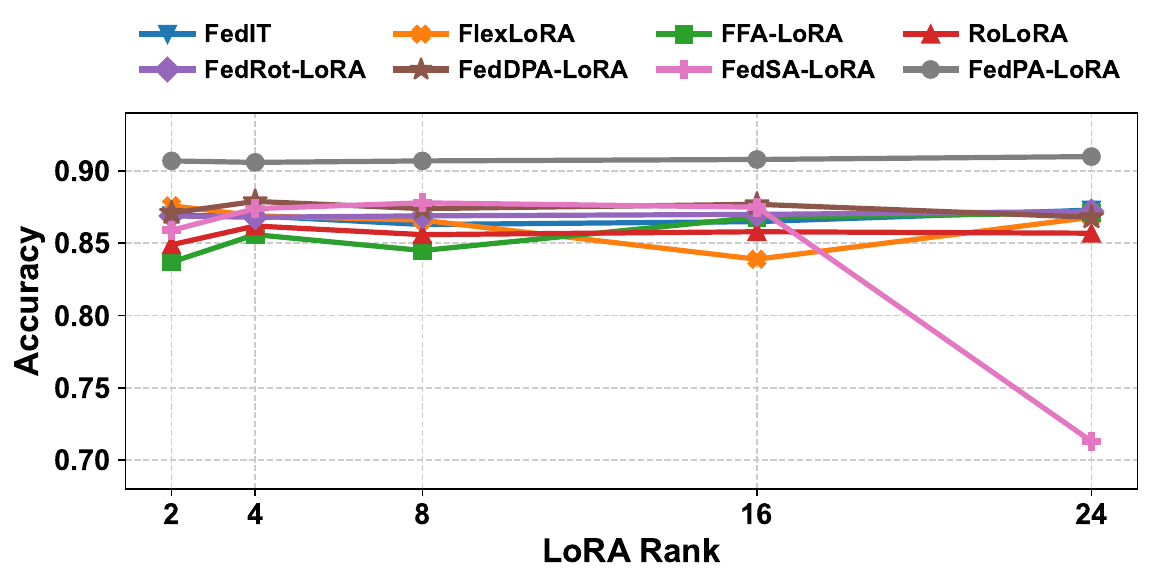}
    \caption{
    MNLI accuracy under different LoRA ranks
    $r\in\{2,4,8,16,24\}$ with $\beta=0.5$ and $N=3$.
    }
    \label{fig:rank_scale}
\end{figure}
Figure~\ref{fig:rank_scale} shows that FedPA-LoRA achieves the best
performance across all evaluated ranks, with accuracy remaining stable
between $0.906$ and $0.910$. In contrast, several baselines are
sensitive to rank selection and exhibit noticeable fluctuations, with
FedSA-LoRA dropping substantially at $r=24$. These results indicate
that FedPA-LoRA is robust to rank variation and performs well in both
capacity-constrained low-rank and higher-rank settings. Detailed results
are reported in Table~\ref{tab:rank_results} in
Appendix~\ref{app:extra_results}.

\subsection{Natural Language Generation}

We evaluate both tasks under the homogeneous-rank setting with
$r=8$. GSM8K uses three IID clients, while HumanEval is fine-tuned on
CodeSearchNet using six clients partitioned by programming language.

FedPA-LoRA achieves the best performance on both tasks, reaching
$0.4672$ accuracy on GSM8K and $0.4121$ pass@1 on HumanEval. Since
neither task uses client-specific classifier heads, these gains
indicate that local factor preservation and product-space aggregation
effectively mitigate factor-level initialization and aggregation
mismatch, respectively. Detailed results are provided in
Table~\ref{tab:nlg_results} of  Appendix \ref{app:extra_results}.

\subsection{In-Depth Analyses}

\subsubsection{Effect of Rank Heterogeneity}

We evaluate natural language understanding and generation under
heterogeneous client resources using three resource groups. For
FedPA-LoRA, we set $R_i=r_i$ for a controlled comparison. Detailed
rank configurations are provided in
Appendix~\ref{app:experimental_details}.

\textbf{(1) Natural Language Understanding.} 
Under heterogeneous ranks, Table~\ref{tab:nlu_results} shows that
FedPA-LoRA achieves the best performance on all five GLUE tasks,
attaining an average accuracy of $0.9226$ and outperforming HetLoRA
by $6.82$ percentage points. This demonstrates that FedPA-LoRA
effectively accommodates heterogeneous client capacities while
maintaining strong global performance.

\textbf{(2) Natural Language Generation.} 
FedPA-LoRA also achieves the best performance on GSM8K and HumanEval,
reaching $0.5018$ accuracy and $0.4048$ pass@1, respectively. It
outperforms Ravan and Fed-PLoRA, which distribute their full sets of
global modules, while transmitting only the global reference
corresponding to each client's assigned rank. Detailed results are
provided in Table~\ref{tab:hetero_nlg_results} of the Appendix~\ref{app:extra_results}.

\subsubsection{Effect of the Regularization Strength
\texorpdfstring{$\lambda$}{lambda}}

Figure~\ref{fig:lambda_results} examines the sensitivity of FedPA-LoRA
to the product-guided regularization strength $\lambda$. Under IID
data, $\lambda=0.01$ yields substantially lower accuracy, indicating
that weak guidance is insufficient to coordinate the locally
preserved factors. Performance improves and remains stable for
moderate values of $\lambda$, but declines at $\lambda=10$,
suggesting that overly strong regularization can restrict local
adaptation when client updates are already well aligned.

A different trend emerges in the non-IID setting. With
$\lambda\in\{0.01,0.1\}$, FedPA-LoRA exhibits lower accuracy and
greater variance due to insufficient global guidance. As $\lambda$
increases, both accuracy and stability improve, remaining consistently
strong for $\lambda\in\{1,2,5,10\}$. This highlights the importance of
product-guided regularization under data heterogeneity, where it
mitigates client drift while preserving local adaptation.
Appendix~\ref{app:adaptive_regularization} further discusses the potential use of client-specific adaptive regularization weights $\lambda_i$ to account for varying degrees of client drift.

\subsubsection{Effect of the Communication Budget
\texorpdfstring{$R$}{R}}

Figure~\ref{fig:R_results} shows that FedPA-LoRA maintains strong
performance across downlink communication ranks
$R\in\{1,2,4,8\}$, with even $R=1$ providing effective global
guidance. Under IID data, increasing $R$ offers no consistent
improvement, suggesting that a larger reference provides limited
additional benefit when client updates are already well aligned.
Under non-IID data, performance improves as $R$ increases, indicating
that richer global guidance is more useful under stronger data
heterogeneity. Overall, FedPA-LoRA can reduce downlink communication
by using a smaller reference rank while maintaining strong
performance.

\begin{figure}[t]
\centering

% ==================== Figure 4 ====================
\begin{minipage}[t]{0.49\textwidth}
\vspace{0pt}
\centering

\includegraphics[
    width=0.90\linewidth,
    trim=15 0 15 0,
    clip
]{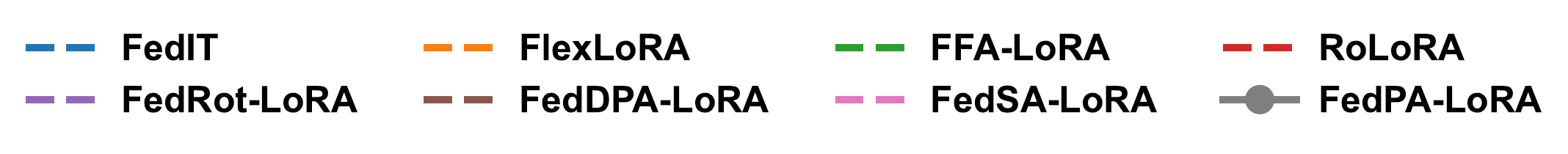}

\vspace{-2mm}

\begin{subfigure}[t]{0.48\linewidth}
    \centering
    \includegraphics[
        width=\linewidth,
        trim=5 5 5 0,
        clip
    ]{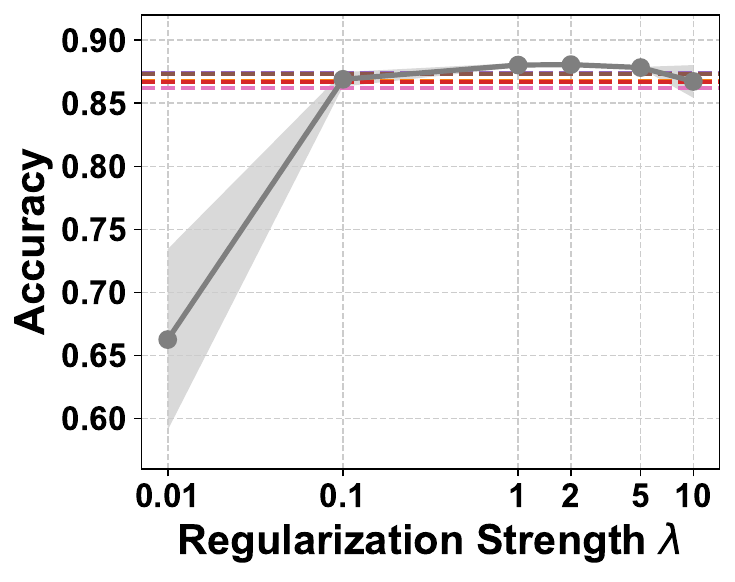}
    \caption{IID}
    \label{fig:lam_iid}
\end{subfigure}
\hfill
\begin{subfigure}[t]{0.48\linewidth}
    \centering
    \includegraphics[
        width=\linewidth,
        trim=5 5 5 0,
        clip
    ]{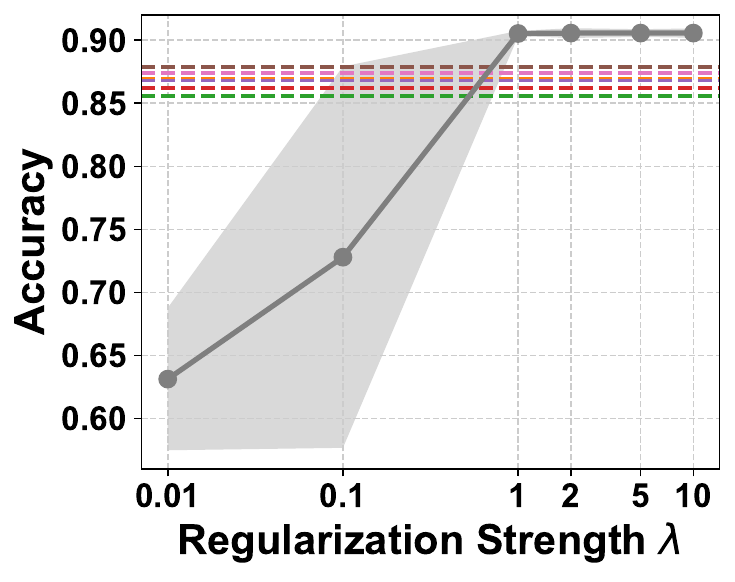}
    \caption{Non-IID}
    \label{fig:lam_niid}
\end{subfigure}

\vspace{-0.3em}

% \captionof{figure}{
\caption{
MNLI accuracy under different regularization strengths $\lambda$
with $N=3$ clients and $r=R=4$.
(a) IID with $\beta=100$.
(b) Non-IID with $\beta=0.5$.
Horizontal lines denote baseline means over three random seeds.
}
\label{fig:lambda_results}

\end{minipage}%
\hfill
% ==================== Figure 5 ====================
\begin{minipage}[t]{0.49\textwidth}
\vspace{0pt}
\centering

\includegraphics[
    width=0.90\linewidth,
    trim=15 0 15 0,
    clip
]{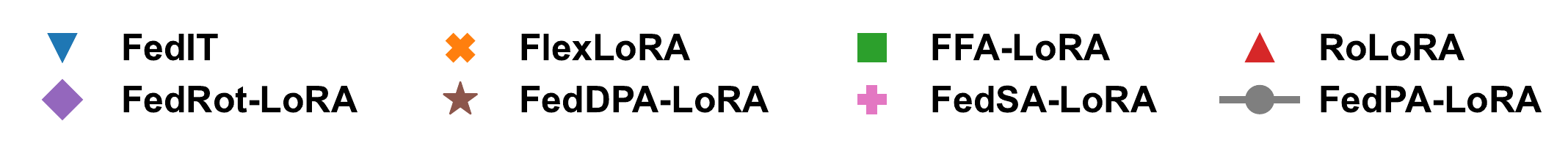}

\vspace{-2mm}

\begin{subfigure}[t]{0.48\linewidth}
    \centering
    \includegraphics[
        width=\linewidth,
        trim=5 5 5 0,
        clip
    ]{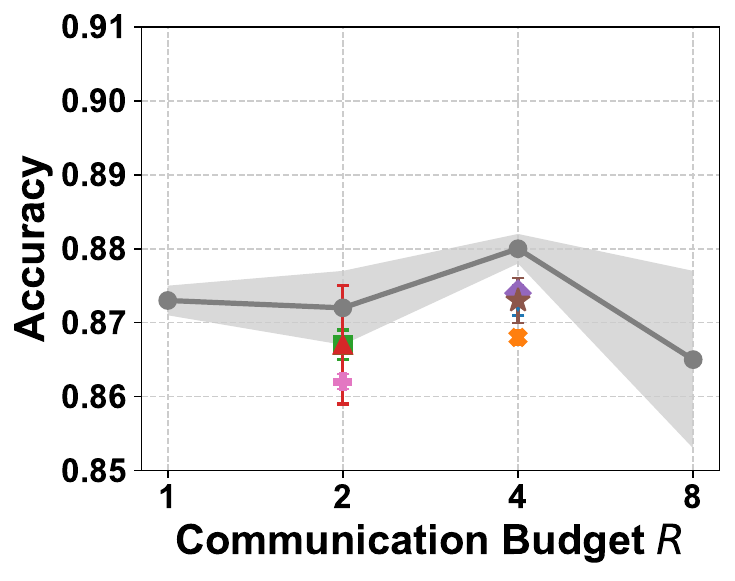}
    \caption{IID}
    \label{fig:R_iid}
\end{subfigure}
\hfill
\begin{subfigure}[t]{0.48\linewidth}
    \centering
    \includegraphics[
        width=\linewidth,
        trim=5 5 5 0,
        clip
    ]{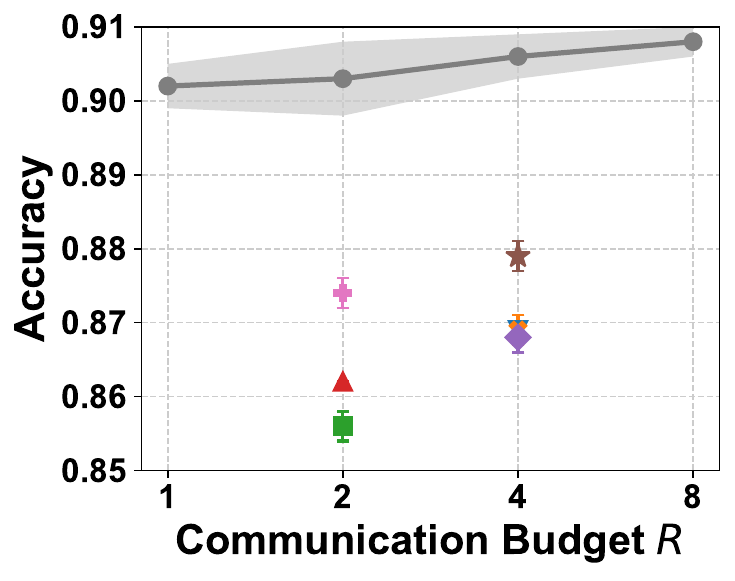}
    \caption{Non-IID}
    \label{fig:R_niid}
\end{subfigure}

\vspace{-0.3em}

% \captionof{figure}{
\caption{
MNLI accuracy under different communication ranks
$R\in\{1,2,4,8\}$ with local LoRA rank $r=4$ and $N=3$ clients.
(a) IID with $\beta=100$.
(b) Non-IID with $\beta=0.5$.
}
\label{fig:R_results}

\end{minipage}

\end{figure}

\subsubsection{Ablation Study}

\begin{table}[t]
\centering
\caption{
Component ablation results on MNLI with $N=3$ clients and
Dirichlet concentration parameter $\beta=0.5$.
}
\label{tab:ablation_results}

\begin{small}
\begin{tabular}{@{}lcccc@{}}
\toprule
\multirow{2}{*}{\textbf{Method}}
& \textbf{Local}
& \textbf{Local}
& \multirow{2}{*}{\textbf{Regularization}}
& \multirow{2}{*}{\textbf{Accuracy}} \\
& \textbf{Head}
& \textbf{Preservation}
& & \\
\midrule

FedIT
& $\times$
& $\times$
& $\times$
& $0.869{\pm}0.001$ \\

FedIT
& $\checkmark$
& $\times$
& $\times$
& $0.869{\pm}0.000$ \\

FlexLoRA
& $\times$
& $\times$
& $\times$
& $0.869{\pm}0.002$ \\

FlexLoRA
& $\checkmark$
& $\times$
& $\times$
& $0.870{\pm}0.001$ \\

\midrule

Ours
& $\times$
& $\times$
& $\bm B\bm A$
& $0.867{\pm}0.003$ \\

Ours
& $\checkmark$
& $\times$
& $\bm B\bm A$
& $0.877{\pm}0.002$ \\

Ours
& $\times$
& $\checkmark$
& $\bm B\bm A$
& $0.871{\pm}0.002$ \\

Ours
& $\checkmark$
& $\checkmark$
& $\bm B$
& $0.875{\pm}0.021$ \\

Ours
& $\checkmark$
& $\checkmark$
& $\bm A$
& $0.879{\pm}0.010$ \\

Ours
& $\checkmark$
& $\checkmark$
& $\bm B,\bm A$
& \underline{$0.896{\pm}0.008$} \\

Ours
& $\checkmark$
& $\checkmark$
& $\bm B\bm A$
& $\mathbf{0.906{\pm}0.003}$ \\

\bottomrule
\end{tabular}
\end{small}
\end{table}

Table~\ref{tab:ablation_results} shows that adding a local head alone
provides negligible gains to FedIT and FlexLoRA. Within our framework,
local preservation improves product-space regularization from $0.867$
to $0.871$. Combined with a local head, product-space regularization
achieves the highest accuracy of $0.906$, outperforming regularization
of $\bm B$, $\bm A$, or both factors separately. Since the local and
reference products share the same weight-space dimensions, it also
supports $R_i\neq r_i$ and enables independent control of the downlink
communication budget. With $R=2$, FedPA-LoRA achieves $0.903$, still
outperforming all factor-wise variants. The corresponding objectives are provided in
Appendix~\ref{app:ablation_objectives}.
\section{Conclusion}

We proposed FedPA-LoRA, a federated LoRA framework that preserves
local  factors while aggregating them in the product space,
jointly mitigating aggregation and factor-level initialization
mismatches. FedPA-LoRA provably converges under homogeneous and
heterogeneous client ranks and supports client-specific computation and
communication budgets.
%, and reconstructs a rank-constrained global
%adapter from heterogeneous-rank updates. 
Experiments on natural
language understanding and generation confirm consistent gains across
data- and resource-heterogeneous settings. Future work will explore
adaptive guidance  and communication budgets at scale.

\bibliographystyle{unsrtnat}
\bibliography{references}  %%% Uncomment this line and comment out the ``thebibliography'' section below to use the external .bib file (using bibtex) .

%%% Uncomment this section and comment out the \bibliography{references} line above to use inline references.
% \begin{thebibliography}{1}

% 	\bibitem{kour2014real}
% 	George Kour and Raid Saabne.
% 	\newblock Real-time segmentation of on-line handwritten arabic script.
% 	\newblock In {\em Frontiers in Handwriting Recognition (ICFHR), 2014 14th
% 			International Conference on}, pages 417--422. IEEE, 2014.

% 	\bibitem{kour2014fast}
% 	George Kour and Raid Saabne.
% 	\newblock Fast classification of handwritten on-line arabic characters.
% 	\newblock In {\em Soft Computing and Pattern Recognition (SoCPaR), 2014 6th
% 			International Conference of}, pages 312--318. IEEE, 2014.

% 	\bibitem{hadash2018estimate}
% 	Guy Hadash, Einat Kermany, Boaz Carmeli, Ofer Lavi, George Kour, and Alon
% 	Jacovi.
% 	\newblock Estimate and replace: A novel approach to integrating deep neural
% 	networks with existing applications.
% 	\newblock {\em arXiv preprint arXiv:1804.09028}, 2018.

% \end{thebibliography}

\appendix
\onecolumn
\section*{Appendices}

The appendices are organized as follows.
\begin{itemize}[noitemsep]
    \item[\textbf{\ref{appendix:additional_related_work}}] Appendix~\ref{appendix:additional_related_work} provides additional discussion of related work, expanding on the personalized federated-LoRA baselines used in our experiments.
    
    \item[\textbf{\ref{app:algorithm}}] Appendix~\ref{app:algorithm} presents the complete algorithmic pseudocode for FedPA-LoRA, formalizing the local preservation, alignment, and aggregation steps.
    
    \item[\textbf{\ref{app:exp_details}}] Appendix~\ref{app:exp_details} reports the hyperparameter configurations used across all experiments, together with detailed results across different LoRA ranks and generation tasks.
    
    \item[\textbf{\ref{appendix:convergence_analysis_all}}]
    Appendix~\ref{appendix:convergence_analysis} gives the full proofs of
    Theorem~\ref{theorem:conv_fedpa} and
    Proposition~\ref{proposition:conv_global_loss} for the homogeneous
    setting. Appendix~\ref{app:convergence_heterogeneous} then extends this
    analysis to heterogeneous local and reference ranks, introducing the
    bounded reference-truncation error that appears in the resulting rate.

    \item[\textbf{\ref{app:partial_participation}}] Appendix~\ref{app:partial_participation} discusses how our convergence results extend to the partial participation setting with $K$ active clients per round.
    
    \item[\textbf{\ref{app:adaptive_regularization}}] Appendix~\ref{app:adaptive_regularization} discusses three illustrative directions for adapting the regularization weight $\lambda_i^{(t)}$ using locally available task-loss and gradient information.

    \item[\textbf{\ref{app:complexity}}]
    Appendix~\ref{app:complexity} examines the practical efficiency of FedPA-LoRA by connecting its theoretical server-side costs with empirical runtime and approximation quality, complemented by client-side wall-clock results.
\end{itemize}

\section{Additional Discussion of Related Work}
\label{appendix:additional_related_work}

FedDPA-LoRA \citep{yang2024dualpersonalizing} maintains a globally
aggregated LoRA adapter alongside a personalized local adapter, and
proposes two variants for initializing the latter: FedDPA-F
re-initializes the personalized adapter from the freshly aggregated
global adapter each round, whereas FedDPA-T instead re-initializes it
from its own value at the end of the previous round, training it
locally alongside the temporarily frozen global adapter. Only the
global adapter is ever transmitted to the server, where it is
aggregated through conventional factor-wise averaging, leaving it
exposed to the same aggregation and factor-level initialization
mismatches as FedIT. FedDPA-T is the only method in which a component genuinely continues its own
trajectory across rounds rather than being replaced, but that
component is the personalized adapter, which was never exposed to
aggregation to begin with. The global adapter, which is what actually
goes through aggregation each round, still gets replaced by a fresh
average with no continuity at all.

FedSA-LoRA \citep{guo2025selective} is grounded in an asymmetry analysis showing that $\bm A$ matrices consistently learn task-general structure while $\bm B$ matrices capture client-specific adaptation, the same finding Section~\ref{sec:motivating_example} draws on to explain why $\bm B$ exhibits larger aggregation and initialization errors than $\bm A$ under naive averaging.

FedDPA-LoRA and FedSA-LoRA therefore both respond to the client-specific character of part of the model by permanently excluding that part from federation, either an entire adapter or one LoRA factor, rather than by resolving aggregation and initialization mismatches within it. This raises a natural question for our work, FedPA-LoRA: if $\bm B$ is indeed more client-specific, why aggregate it at all? Local factor preservation answers this without withholding $\bm B$ from the global model: because $\bm B$ is never overwritten by the server and is only nudged toward the global reference through the regularizer in Eq.~\ref{eq:product_guided_objective}, its client-specific character is preserved across rounds in the same spirit as FedSA-LoRA's design, while its information is still incorporated into the global adapter rather than withheld from it entirely.

\section{FedPA-LoRA: Detailed Algorithm}
\label{app:algorithm}

The FedPA-LoRA framework is summarized in Algorithm~\ref{alg:fedpa_lora}.

\begin{algorithm}[H]
\small
\setlength{\abovedisplayskip}{3pt}
\setlength{\belowdisplayskip}{3pt}

\caption{FedPA-LoRA}
\label{alg:fedpa_lora}

\textbf{Input}: Number of clients $N$, learning rate $\eta$,
number of local updates $\tau$, computation ranks
$\{r_i\}_{i=1}^{N}$, communication ranks
$\{R_i\}_{i=1}^{N}$, rounds $T$, and regularization weight $\lambda$\\
\textbf{Output}: Global LoRA factors
$(\bm B_g^{(T)},\bm A_g^{(T)})$

\begin{algorithmic}[1]

\STATE
$r_g\leftarrow\max_{1\leq i\leq N}r_i$,
$\quad
R_g\leftarrow\max_{1\leq i\leq N}R_i$

\STATE Initialize
$(\bm B_g^{(0)},\bm A_g^{(0)})$
with rank $r_g$

\FOR{$i=1,\ldots,N$}

    \STATE Initialize
    $(\bm B_i^{(0)},\bm A_i^{(0)})
    \leftarrow
    (\bm B_{g,i}^{(0)},\bm A_{g,i}^{(0)})$,
    where
    $(\bm B_{g,i}^{(0)},\bm A_{g,i}^{(0)})$
    retains the first $r_i$ columns of
    $\bm B_g^{(0)}$ and the first $r_i$ rows of
    $\bm A_g^{(0)}$

\ENDFOR

\FOR{$t=1,\ldots,T$}

    \FOR{$i=1,\ldots,N$ \textbf{in parallel}}

        \STATE Preserve the local factors
        $(\bm B_i^{(t,0)},\bm A_i^{(t,0)})
        \leftarrow
        (\bm B_i^{(t-1)},\bm A_i^{(t-1)})$

        \STATE Obtain the rank-$R_i$ global reference
        $(\bm B_{g,i}^{(t-1)},\bm A_{g,i}^{(t-1)})$
        by retaining the first $R_i$ columns of
        $\bm B_g^{(t-1)}$ and the first $R_i$ rows of
        $\bm A_g^{(t-1)}$

        \FOR{$k=0,\ldots,\tau-1$}

            \STATE Update
            $(\bm B_i^{(t,k+1)},\bm A_i^{(t,k+1)})$
            from
            $(\bm B_i^{(t,k)},\bm A_i^{(t,k)})$
            with learning rate $\eta$ by one optimization step on
            \begin{equation}
            \mathcal L_i^{(t)}(\bm B_i,\bm A_i)
            =
            f_i\left(\bm W_0+\bm B_i\bm A_i\right)
            +
            \frac{\lambda}{2}
            \left\|
            \bm B_i\bm A_i
            -
            \bm B_{g,i}^{(t-1)}
            \bm A_{g,i}^{(t-1)}
            \right\|_{\mathrm F}^{2}.
            \notag
            \end{equation}

        \ENDFOR

        \STATE Set
        $(\bm B_i^{(t)},\bm A_i^{(t)})
        \leftarrow
        (\bm B_i^{(t,\tau)},\bm A_i^{(t,\tau)})$

        \STATE Send
        $(\bm B_i^{(t)},\bm A_i^{(t)})$
        to the server

    \ENDFOR

    \STATE Construct the concatenated factors
    \begin{equation}
    \bm B_{\mathrm{cat}}^{(t)}
    \leftarrow
    \frac{1}{\sqrt{N}}
    [\bm B_1^{(t)},\ldots,\bm B_N^{(t)}],
    \qquad
    \bm A_{\mathrm{cat}}^{(t)}
    \leftarrow
    \frac{1}{\sqrt{N}}
    [(\bm A_1^{(t)})^\top,\ldots,
    (\bm A_N^{(t)})^\top]^\top.
    \notag
    \end{equation}

    \STATE Compute the reduced QR factorizations
    \begin{equation}
    (\bm Q_B^{(t)},\bm R_B^{(t)})
    \leftarrow
    \operatorname{QR}(\bm B_{\mathrm{cat}}^{(t)}),
    \qquad
    (\bm Q_A^{(t)},\bm R_A^{(t)})
    \leftarrow
    \operatorname{QR}
    \left((\bm A_{\mathrm{cat}}^{(t)})^\top\right).
    \notag
    \end{equation}

    \STATE Form the core matrix
    $\bm H^{(t)}
    =
    \bm R_B^{(t)}(\bm R_A^{(t)})^\top$
    and compute its rank-$R_g$ truncated SVD
    \begin{equation}
    (\bm U_{R_g}^{(t)},
    \bm\Sigma_{R_g}^{(t)},
    \bm V_{R_g}^{(t)})
    \leftarrow
    \operatorname{SVD}_{R_g}(\bm H^{(t)}).
    \notag
    \end{equation}

    \STATE Reconstruct the global LoRA factors
    \begin{equation}
    \bm B_g^{(t)}
    \leftarrow
    \bm Q_B^{(t)}
    \bm U_{R_g}^{(t)}
    \bm\Sigma_{R_g}^{(t)},
    \qquad
    \bm A_g^{(t)}
    \leftarrow
    \left(
    \bm Q_A^{(t)}
    \bm V_{R_g}^{(t)}
    \right)^\top.
    \notag
    \end{equation}

\ENDFOR

\end{algorithmic}
\end{algorithm}

\section{Experimental Details and Extra Results}
\label{app:exp_details}
\subsection{Experimental Details}
\label{app:experimental_details}

All experiments are implemented using FederatedScope-LLM
\citep{kuang2024federatedscope} and PyTorch, and conducted on Ubuntu
18.04.5 LTS with AMD EPYC 7543 processors, 1 TiB of system memory,
and NVIDIA RTX A6000 or RTX 6000 Ada GPUs.  We average results over
three random seeds ($0$, $13$, $123$). Tables~\ref{tab:hyperparameters}
and \ref{tab:heterogeneous_hyperparameters} report the configurations
selected for the homogeneous- and heterogeneous-rank settings,
respectively, based on validation performance.

\begin{table*}[t]
\centering

\caption{Selected hyperparameters for experiments under homogeneous-rank settings.}
\label{tab:hyperparameters}

\renewcommand{\arraystretch}{1.15}
\setlength{\tabcolsep}{2.5pt}

\resizebox{\textwidth}{!}{%
\begin{tabular}{@{}ccccc*{8}{c}cc@{}}
\toprule
\multirow{3}{*}{\textbf{Experiment}}
& \multirow{3}{*}{\textbf{Dataset}}
& \multirow{3}{*}{\shortstack{\textbf{Clients}\\($N$)}}
& \multirow{3}{*}{\shortstack{\textbf{Rank}\\($r$)}}
& \multirow{3}{*}{\shortstack{\textbf{Data}\\\textbf{Distribution}\\($\beta$)}}
& \multicolumn{8}{c}{\textbf{Optimal Learning Rate }$\boldsymbol{\eta}$}
& \multicolumn{2}{c}{\textbf{Optimal }$\boldsymbol{\lambda}$} \\
\cmidrule(lr){6-13}
\cmidrule(lr){14-15}

&
&
&
&
&
\raisebox{0.5ex}{\textbf{FedIT}}
&
\raisebox{0.5ex}{\textbf{FlexLoRA}}
& \shortstack{\textbf{FFA-}\\\textbf{LoRA}}
&
\raisebox{0.5ex}{\textbf{RoLoRA}}
& \shortstack{\textbf{FedRot-}\\\textbf{LoRA}}
& \shortstack{\textbf{FedDPA-}\\\textbf{LoRA}}
& \shortstack{\textbf{FedSA-}\\\textbf{LoRA}}
& \shortstack{\textbf{FedPA-}\\\textbf{LoRA}}
& \shortstack{\textbf{FedRot-}\\\textbf{LoRA}}
& \shortstack{\textbf{FedPA-}\\\textbf{LoRA}} \\
\midrule

\multirow{5}{*}{\shortstack{Natural Language\\Understanding}}
& SST-2
& \multirow{5}{*}{$10$}
& \multirow{5}{*}{$4$}
& \multirow{5}{*}{$0.5$}
& \texttt{1e-2}
& \texttt{1e-2}
& \texttt{2e-2}
& \texttt{1e-2}
& \texttt{1e-2}
& \texttt{5e-3}
& \texttt{2e-2}
& \texttt{2e-2}
& $0.6$
& $1.0$ \\

& QNLI
& & &
& \texttt{1e-2}
& \texttt{1e-2}
& \texttt{2e-2}
& \texttt{5e-3}
& \texttt{5e-3}
& \texttt{5e-3}
& \texttt{1e-3}
& \texttt{2e-2}
& $0.2$
& $2.0$ \\

& QQP
& & &
& \texttt{1e-2}
& \texttt{5e-3}
& \texttt{2e-2}
& \texttt{2e-2}
& \texttt{5e-3}
& \texttt{5e-3}
& \texttt{2e-2}
& \texttt{2e-2}
& $0.6$
& $2.0$ \\

& RTE
& & &
& \texttt{1e-2}
& \texttt{1e-2}
& \texttt{2e-2}
& \texttt{5e-3}
& \texttt{1e-2}
& \texttt{2e-2}
& \texttt{1e-3}
& \texttt{2e-2}
& $0.4$
& $1.0$ \\

& MNLI
& & &
& \texttt{1e-2}
& \texttt{5e-3}
& \texttt{2e-2}
& \texttt{5e-3}
& \texttt{1e-2}
& \texttt{1e-2}
& \texttt{1e-2}
& \texttt{2e-2}
& $0.4$
& $2.0$ \\

\midrule

\multirow{3}{*}{\shortstack{Data\\Heterogeneity}}
& \multirow{3}{*}{MNLI}
& \multirow{3}{*}{$3$}
& \multirow{3}{*}{$4$}
& $100$
& \texttt{1e-2}
& \texttt{5e-3}
& \texttt{2e-2}
& \texttt{1e-2}
& \texttt{1e-2}
& \texttt{1e-2}
& \texttt{5e-3}
& \texttt{2e-2}
& $0.8$
& $1.0$ \\

&
&
&
&
$1$
& \texttt{5e-3}
& \texttt{2e-2}
& \texttt{2e-2}
& \texttt{5e-3}
& \texttt{1e-2}
& \texttt{2e-2}
& \texttt{5e-3}
& \texttt{2e-2}
& $0.8$
& $1.0$ \\

&
&
&
&
$0.1$
& \texttt{1e-2}
& \texttt{1e-2}
& \texttt{2e-2}
& \texttt{5e-3}
& \texttt{1e-2}
& \texttt{1e-2}
& \texttt{1e-2}
& \texttt{2e-2}
& $0.4$
& $1.0$ \\

\midrule

\multirow{2}{*}{\shortstack{Number of\\Clients}}
& \multirow{2}{*}{MNLI}
& $3$
& \multirow{2}{*}{$4$}
& \multirow{2}{*}{$0.5$}
& \texttt{1e-2}
& \texttt{1e-2}
& \texttt{2e-2}
& \texttt{1e-2}
& \texttt{1e-2}
& \texttt{1e-2}
& \texttt{1e-2}
& \texttt{2e-2}
& $0.4$
& $5.0$ \\

&
&
$50$
&
&
& \texttt{2e-2}
& \texttt{1e-2}
& \texttt{2e-2}
& \texttt{5e-3}
& \texttt{1e-2}
& \texttt{1e-2}
& \texttt{1e-2}
& \texttt{1e-2}
& $0.4$
& $5.0$ \\

\midrule

\multirow{4}{*}{Rank Number}
& \multirow{4}{*}{MNLI}
& \multirow{4}{*}{$3$}
& $2$
& \multirow{4}{*}{$0.5$}
& \texttt{1e-2}
& \texttt{1e-2}
& \texttt{2e-2}
& \texttt{5e-3}
& \texttt{1e-2}
& \texttt{5e-3}
& \texttt{5e-3}
& \texttt{2e-2}
& $0.4$
& $5.0$ \\

&
&
&
$8$
&
& \texttt{5e-3}
& \texttt{5e-3}
& \texttt{1e-2}
& \texttt{5e-3}
& \texttt{1e-2}
& \texttt{1e-2}
& \texttt{1e-2}
& \texttt{2e-2}
& $0.4$
& $2.0$ \\

&
&
&
$16$
&
& \texttt{5e-3}
& \texttt{1e-2}
& \texttt{2e-2}
& \texttt{5e-3}
& \texttt{1e-2}
& \texttt{5e-3}
& \texttt{5e-3}
& \texttt{2e-2}
& $0.4$
& $1.0$ \\

&
&
&
$24$
&
& \texttt{1e-2}
& \texttt{5e-3}
& \texttt{2e-2}
& \texttt{5e-3}
& \texttt{1e-2}
& \texttt{5e-3}
& \texttt{5e-3}
& \texttt{2e-2}
& $0.4$
& $2.0$ \\

\midrule

\multirow{2}{*}{\shortstack{Natural Language\\Generation}}
& GSM8K
& $3$
& $8$
& IID
& \texttt{5e-3}
& \texttt{5e-3}
& \texttt{5e-3}
& \texttt{1e-3}
& \texttt{5e-3}
& --
& --
& \texttt{5e-3}
& $0.2$
& $1.0$ \\

& HumanEval
& $6$
& $8$
& Non-IID
& \texttt{5e-3}
& \texttt{5e-3}
& \texttt{5e-3}
& \texttt{5e-3}
& \texttt{5e-3}
& --
& --
& \texttt{5e-3}
& $0.2$
& $1.0$ \\

\bottomrule
\end{tabular}%
}

\end{table*}

\begin{table*}[t]
\centering

\caption{
Selected hyperparameters for experiments under heterogeneous-rank settings.
}
\label{tab:heterogeneous_hyperparameters}

\setlength{\tabcolsep}{2.5pt}
\renewcommand{\arraystretch}{1.15}

\resizebox{\textwidth}{!}{%
\begin{tabular}{@{}ccccc*{5}{c}c@{}}
\toprule
\multirow{3}{*}{\textbf{Experiment}}
& \multirow{3}{*}{\textbf{Dataset}}
& \multirow{3}{*}{\shortstack{\textbf{Clients}\\($N$)}}
& \multirow{3}{*}{\shortstack{\textbf{Rank}\\\textbf{Configuration}}}
& \multirow{3}{*}{\shortstack{\textbf{Data}\\\textbf{Distribution}\\($\beta$)}}
& \multicolumn{5}{c}{\textbf{Optimal Learning Rate }$\boldsymbol{\eta}$}
& \multicolumn{1}{c}{\textbf{Optimal }$\boldsymbol{\lambda}$} \\
\cmidrule(lr){6-10}
\cmidrule(lr){11-11}

&
&
&
&
&
\raisebox{0.5ex}{\textbf{FlexLoRA}}
&
\raisebox{0.5ex}{\textbf{HetLoRA}}
&
\raisebox{0.5ex}{\textbf{Ravan}}
& \shortstack{\textbf{Fed-}\\\textbf{PLoRA}}
& \shortstack{\textbf{FedPA-}\\\textbf{LoRA}}
& \shortstack{\textbf{FedPA-}\\\textbf{LoRA}} \\
\midrule

\multirow{5}{*}{\shortstack{Natural Language\\Understanding}}
& SST-2
& \multirow{5}{*}{$20$}
& \multirow{5}{*}{
    \shortstack[l]{
        Other methods: $r_i\in\{2,4,16\}$\\
        Ravan: $h_i\in\{1,2,8\}$, $r_h=64$
    }
}
& \multirow{5}{*}{$0.5$}
& \texttt{1e-2}
& \texttt{2e-2}
& \texttt{2e-2}
& \texttt{1e-2}
& \texttt{1e-2}
& $2.0$ \\

& QNLI
&
&
&
& \texttt{1e-2}
& \texttt{1e-2}
& \texttt{2e-2}
& \texttt{1e-2}
& \texttt{1e-2}
& $2.0$ \\

& QQP
&
&
&
& \texttt{1e-2}
& \texttt{1e-2}
& \texttt{1e-2}
& \texttt{5e-3}
& \texttt{2e-2}
& $1.0$ \\

& RTE
&
&
&
& \texttt{1e-2}
& \texttt{1e-2}
& \texttt{1e-2}
& \texttt{1e-2}
& \texttt{1e-2}
& $1.0$ \\

& MNLI
&
&
&
& \texttt{5e-3}
& \texttt{2e-2}
& \texttt{1e-2}
& \texttt{5e-3}
& \texttt{2e-2}
& $1.0$ \\

\midrule

\multirow{2}{*}{\shortstack{Natural Language\\Generation}}
& GSM8K
& $3$
& \multirow{2}{*}{
    \shortstack[l]{
        Other methods: $r_i\in\{2,4,16\}$\\
        Ravan: $h_i\in\{1,2,8\}$, $r_h=115$
    }
}
& IID
& \texttt{1e-3}
& \texttt{1e-3}
& \texttt{5e-3}
& \texttt{5e-3}
& \texttt{5e-3}
& $0.1$ \\

& HumanEval
& $6$
&
& Non-IID
& \texttt{1e-3}
& \texttt{5e-3}
& \texttt{5e-3}
& \texttt{5e-3}
& \texttt{5e-3}
& $0.1$ \\

\bottomrule
\end{tabular}%
}

\end{table*}

We search the learning rate over
$\eta\in\{\texttt{5e-4},\texttt{1e-3},\texttt{5e-3},
\texttt{1e-2},\texttt{2e-2}\}$.
For FedRot-LoRA, we use the alignment strength reported in the original
paper when available and otherwise search
$\lambda\in\{0.2,0.4,0.6,0.8,1.0\}$.
For FedPA-LoRA, we search the product-guided regularization strength
over $\lambda\in\{0.1,1.0,2.0,5.0,10.0\}$. The sensitivity study in Figure~\ref{fig:lambda_results} additionally evaluates $\lambda=0.01$.

In the heterogeneous-rank experiments, FlexLoRA, HetLoRA, and FedPA-LoRA use client ranks $r_i\in\{2,4,16\}$, while Fed-PLoRA uses the corresponding numbers of parallel rank-one modules. Ravan uses $h_i\in\{1,2,8\}$ active heads, with a per-head rank of $r_h=64$ for RoBERTa-Large and $r_h=115$ for Llama 3-8B. These configurations are chosen to approximately match the trainable parameter budgets of the corresponding resource groups across methods. All remaining training settings follow those described in the main text.

\subsection{Local Objectives of the Ablation Variants}
\label{app:ablation_objectives}

The ablation variants use the same task loss and server-side
aggregation as FedPA-LoRA, but differ in the regularization space.
Their local objectives are defined as
\begin{equation}
\begin{aligned}
\mathcal L_{i,\bm B}^{(t)}
&=
f_i(\bm W_0+\bm B_i\bm A_i)
+
\frac{\lambda}{2}
\left\|
\bm B_i-\bm B_{g,i}^{(t-1)}
\right\|_{\mathrm F}^{2},
\\
\mathcal L_{i,\bm A}^{(t)}
&=
f_i(\bm W_0+\bm B_i\bm A_i)
+
\frac{\lambda}{2}
\left\|
\bm A_i-\bm A_{g,i}^{(t-1)}
\right\|_{\mathrm F}^{2},
\\
\mathcal L_{i,\bm B,\bm A}^{(t)}
&=
f_i(\bm W_0+\bm B_i\bm A_i)
+
\frac{\lambda}{2}
\left(
\left\|
\bm B_i-\bm B_{g,i}^{(t-1)}
\right\|_{\mathrm F}^{2}
+
\left\|
\bm A_i-\bm A_{g,i}^{(t-1)}
\right\|_{\mathrm F}^{2}
\right),
\\
\mathcal L_{i,\bm B\bm A}^{(t)}
&=
f_i(\bm W_0+\bm B_i\bm A_i)
+
\frac{\lambda}{2}
\left\|
\bm B_i\bm A_i
-
\bm B_{g,i}^{(t-1)}\bm A_{g,i}^{(t-1)}
\right\|_{\mathrm F}^{2}.
\end{aligned}
\end{equation}

The first three variants regularize $\bm B$, $\bm A$, or both
factors separately, whereas FedPA-LoRA regularizes their product
directly.

\subsection{Extra Results}
\label{app:extra_results}
We now report additional quantitative results, including detailed MNLI results across LoRA ranks underlying Figure~\ref{fig:rank_scale}, as well as generation-task performance on GSM8K and HumanEval under both homogeneous- and heterogeneous-rank settings.

% number of rank
\begin{table}[t]
\centering

\caption{
MNLI accuracy under different LoRA ranks
$r\in\{2,4,8,16,24\}$ with Dirichlet concentration parameter
$\beta=0.5$ and $N=3$ clients.
}
\label{tab:rank_results}

{\small
\setlength{\tabcolsep}{2.0mm}
\begin{tabular}{@{}lccccc@{}}
\toprule
\textbf{Method}
& $\bm r=\mathbf{2}$
& $\bm r=\mathbf{4}$
& $\bm r=\mathbf{8}$
& $\bm r=\mathbf{16}$
& $\bm r=\mathbf{24}$ \\
\midrule

FedIT
& $0.869{\pm}0.002$
& $0.869{\pm}0.001$
& $0.863{\pm}0.001$
& $0.865{\pm}0.001$
& \underline{$0.873{\pm}0.000$} \\

FlexLoRA
& \underline{$0.876{\pm}0.004$} 
& $0.869{\pm}0.002$
& $0.866{\pm}0.001$
& $0.839{\pm}0.032$
& $0.868{\pm}0.001$ \\

FFA-LoRA
& $0.837{\pm}0.013$
& $0.856{\pm}0.002$
& $0.845{\pm}0.002$
& $0.868{\pm}0.001$
& $0.871{\pm}0.002$ \\

RoLoRA
& $0.849{\pm}0.005$
& $0.862{\pm}0.000$
& $0.856{\pm}0.002$
& $0.858{\pm}0.002$
& $0.857{\pm}0.004$ \\

FedRot-LoRA
& $0.869{\pm}0.001$
& $0.868{\pm}0.002$
& $0.869{\pm}0.002$
& $0.870{\pm}0.001$
& $0.872{\pm}0.002$ \\

FedDPA-LoRA
& $0.871{\pm}0.001$
& \underline{$0.879{\pm}0.002$}
& $0.874{\pm}0.005$
& \underline{$0.877{\pm}0.001$}
& $0.868{\pm}0.012$ \\

FedSA-LoRA
& $0.859{\pm}0.007$
& $0.874{\pm}0.002$
& \underline{$0.878{\pm}0.001$}
& $0.875{\pm}0.002$
& $0.713{\pm}0.232$ \\

FedPA-LoRA
& $\mathbf{0.907{\pm}0.002}$
& $\mathbf{0.906{\pm}0.003}$
& $\mathbf{0.907{\pm}0.001}$
& $\mathbf{0.908{\pm}0.001}$
& $\mathbf{0.910{\pm}0.003}$ \\

\bottomrule
\end{tabular}
}

\end{table}

\begin{table}[!t]
\centering
\setlength{\abovecaptionskip}{0pt}
\setlength{\belowcaptionskip}{4pt}

% ==================== Homogeneous ====================
\begin{minipage}[t]{0.48\textwidth}
\vspace{0pt}
\centering

\captionof{table}{
Generative task performance under the homogeneous-rank setting with
rank $r=8$. GSM8K uses $N=3$ clients with IID data, whereas HumanEval
uses $N=6$ clients with non-IID language-based partitions.
}
\label{tab:nlg_results}

{\small
\renewcommand{\arraystretch}{0.95}
\setlength{\tabcolsep}{2.2mm}

\begin{tabular}{@{}lcc@{}}
\toprule
\multirow{2}{*}{\textbf{Method}}
& \textbf{GSM8K}
& \textbf{HumanEval} \\
& \textbf{(Acc.)}
& \textbf{(pass@1)} \\
\midrule

FedIT
& $0.4316{\pm}0.003$
& \underline{$0.4020{\pm}0.002$} \\

FlexLoRA
& \underline{$0.4460{\pm}0.004$}
& $0.3873{\pm}0.010$ \\

FFA-LoRA
& $0.4316{\pm}0.003$
& $0.3902{\pm}0.008$ \\

RoLoRA
& $0.4233{\pm}0.016$
& $0.3751{\pm}0.003$ \\

FedRot-LoRA
& $0.4412{\pm}0.014$
& $0.4016{\pm}0.007$ \\

FedPA-LoRA
& $\mathbf{0.4672{\pm}0.006}$
& $\mathbf{0.4121{\pm}0.002}$ \\

\bottomrule
\end{tabular}
}

\end{minipage}
\hfill
% ==================== Heterogeneous ====================
\begin{minipage}[t]{0.48\textwidth}
\vspace{0pt}
\centering

\captionof{table}{
Generative task performance under the heterogeneous-rank setting.
GSM8K uses $N=3$ clients with IID data, whereas HumanEval uses
$N=6$ clients with non-IID language-based partitions. In both tasks,
clients are divided into three resource groups.
}
\label{tab:hetero_nlg_results}

{\small
\renewcommand{\arraystretch}{0.95}
\setlength{\tabcolsep}{2.2mm}

\begin{tabular}{@{}lcc@{}}
\toprule
\multirow{2}{*}{\textbf{Method}}
& \textbf{GSM8K}
& \textbf{HumanEval} \\
& \textbf{(Acc.)}
& \textbf{(pass@1)} \\
\midrule

FlexLoRA
& $0.4488{\pm}0.013$
& $0.4012{\pm}0.070$ \\

HetLoRA
& $0.4230{\pm}0.001$
& $0.3894{\pm}0.007$ \\

Ravan
& \underline{$0.4735{\pm}0.005$}
& $0.3959{\pm}0.009$ \\

Fed-PLoRA
& $0.4319{\pm}0.018$
& \underline{$0.4016{\pm}0.021$} \\

FedPA-LoRA
& $\mathbf{0.5018{\pm}0.017}$
& $\mathbf{0.4048{\pm}0.001}$ \\

\bottomrule
\end{tabular}
}

\end{minipage}

\end{table}

% Partial-participation experiments are deferred to arXiv v2.
% \subsubsection{Effect of Client Participation Rate}

% % \begin{table}[t]
% \begin{table}[H]
% \centering
% \caption{
% MNLI accuracy under different numbers of total clients $N$ and
% client participation rates $\rho$, with LoRA rank $r=4$ and
% Dirichlet concentration parameter $\beta=0.5$.
% }
% \label{tab:participation_rate_results}

% {\small
% \setlength{\tabcolsep}{7pt}
% \renewcommand{\arraystretch}{1.15}

% \begin{tabular}{@{}lccccc@{}}
% \toprule
% \multirow{2}{*}{\textbf{Method}}
% & \multicolumn{4}{c}{$\bm{N=50}$}
% & \multicolumn{1}{c}{$\bm{N=100}$} \\
% \cmidrule(lr){2-5}
% \cmidrule(lr){6-6}

% & $\bm{\rho=0.1}$
% & $\bm{\rho=0.3}$
% & $\bm{\rho=0.5}$
% & $\bm{\rho=1.0}$
% & $\bm{\rho=0.3}$ \\
% \midrule

% FedIT
% & --
% & --
% & $0.879$
% & $0.877{\pm}0.001$
% & -- \\

% FlexLoRA
% & --
% & --
% & --
% & $0.740{\pm}0.175$
% & -- \\

% FFA-LoRA
% & --
% & --
% & --
% & $0.860{\pm}0.001$
% & -- \\

% RoLoRA
% & --
% & --
% & --
% & $0.882{\pm}0.002$ 
% & -- \\

% FedRot-LoRA
% & --
% & --
% & --
% & $0.872{\pm}0.001$ 
% & -- \\

% FedDPA-LoRA
% & \underline{$0.875$}
% & $0.887$
% & $0.889$
% & $0.883{\pm}0.014$
% & $0.891$ \\

% FedSA-LoRA
% & $0.873$
% & \underline{$0.890$}
% & \underline{$0.890$}
% & \underline{$0.891{\pm}0.000$} 
% & \underline{$0.892$} \\

% FedPA-LoRA
% & $\mathbf{0.888}$
% & $\mathbf{0.915}$
% & --
% & $\mathbf{0.916{\pm}0.001}$ 
% & $\mathbf{0.903}$ \\

% \bottomrule
% \end{tabular}
% }
% \end{table}

%\clearpage
\section{Convergence Analysis}
\label{appendix:convergence_analysis_all}
\subsection{Convergence Analysis: Homogeneous Setting}
\label{appendix:convergence_analysis}

We provide the complete proofs of
Theorem~\ref{theorem:conv_fedpa} and
Proposition~\ref{proposition:conv_global_loss}.
Throughout this section, we consider homogeneous local and reference
ranks $r_i=R_i=r$, full client participation, and $\lambda>0$.
All one-step expectations are conditioned on the current iterates
unless stated otherwise.

\subsubsection{Preliminaries}
\label{appendix:convergence_preliminaries}

We write
$\bm X_i^{(t,k)}=\bm B_i^{(t,k)}\bm A_i^{(t,k)}$
and
$\bm W_i^{(t,k)}=\bm W_0+\bm X_i^{(t,k)}$
for the local LoRA update and model at local step $k$ of communication
round $t$, respectively.

During communication round $t$, the global reference
$\bm X_g^{(t-1)}=\bm B_g^{(t-1)}\bm A_g^{(t-1)}$, reconstructed by the
server in Eq.~\eqref{eq:efficient_global_lora_reconstruction}, remains
fixed. The stochastic weight-space objective
is
\begin{equation}
\mathcal L_i^{(t)}(\bm W;\xi_i^{(t,k)})
=
f_i(\bm W;\xi_i^{(t,k)})
+
\frac{\lambda}{2}
\left\|
\bm W-\bm W_0-\bm X_g^{(t-1)}
\right\|_{\mathrm F}^{2},
\label{eq:app_stochastic_objective}
\end{equation}
and its deterministic counterpart is
\begin{equation}
\mathcal L_i^{(t)}(\bm W)
=
f_i(\bm W)
+
\frac{\lambda}{2}
\left\|
\bm W-\bm W_0-\bm X_g^{(t-1)}
\right\|_{\mathrm F}^{2}.
\label{eq:app_deterministic_objective}
\end{equation}
At the end of communication round $t$, 
$\bm X_i^{(t)}=\bm X_i^{(t,\tau)}$
and
$\bm W_i^{(t)}=\bm W_i^{(t,\tau)}$.
We define the round-wise joint objective as
\begin{equation}
\Psi^{(t)}
=
\frac{1}{N}
\sum_{i=1}^{N}
\left[
f_i\left(\bm W_i^{(t)}\right)
+
\frac{\lambda}{2}
\left\|
\bm X_i^{(t)}-\bm X_g^{(t)}
\right\|_{\mathrm F}^{2}
\right].
\label{eq:app_joint_objective}
\end{equation}

% As assumed in Theorem~\ref{theorem:conv_fedpa},
% $\Psi^{(t)}\geq\Psi^\star$ for all $t$ and, for some constant $D>0$,
% \begin{equation}
% \mathbb E\left[\Psi^{(0)}\right]-\Psi^\star
% \leq
% D.
% \label{eq:app_initial_objective_gap}
% \end{equation}
As assumed in Theorem~\ref{theorem:conv_fedpa},
$\Psi^\star$ is a uniform lower bound on the average task loss satisfying
$\frac{1}{N}\sum_{i=1}^{N} f_i(\bm W)\geq\Psi^\star$ for all $\bm W$, and the initial objective gap is bounded as $\mathbb E[\Psi^{(0)}]-\Psi^\star\leq D$ for some constant $D>0$.

Since $\bm W_i^{(t,k)}-\bm W_0=\bm X_i^{(t,k)}$, the stochastic
weight-space gradient is
\begin{equation}
\nabla_{\bm W}\mathcal L_i^{(t)}
\left(
\bm W_i^{(t,k)};\xi_i^{(t,k)}
\right)
=
\nabla_{\bm W}f_i
\left(
\bm W_i^{(t,k)};\xi_i^{(t,k)}
\right)
+
\lambda
\left(
\bm X_i^{(t,k)}-\bm X_g^{(t-1)}
\right).
\label{eq:app_regularized_gradient}
\end{equation}
Because the regularization gradient is deterministic conditioned on
the current iterate,
Assumption~\ref{assumption:conv_stochastic_gradients} implies
\begin{equation}
\mathbb E_{\xi_i^{(t,k)}}
\left[
\nabla_{\bm W}\mathcal L_i^{(t)}
\left(
\bm W_i^{(t,k)};\xi_i^{(t,k)}
\right)
\right]
=
\nabla_{\bm W}\mathcal L_i^{(t)}
\left(
\bm W_i^{(t,k)}
\right).
\label{eq:app_unbiased_regularized_gradient}
\end{equation}

\subsubsection{Smoothness}

For any $\bm W$ and $\bm W'$,
Assumption~\ref{assumption:conv_smoothness} gives
\begin{equation}
\begin{aligned}
\left\|
\nabla_{\bm W}\mathcal L_i^{(t)}(\bm W)
-
\nabla_{\bm W}\mathcal L_i^{(t)}(\bm W')
\right\|_{\mathrm F}
&\leq
\left\|
\nabla_{\bm W}f_i(\bm W)
-
\nabla_{\bm W}f_i(\bm W')
\right\|_{\mathrm F}
+
\lambda
\left\|
\bm W-\bm W'
\right\|_{\mathrm F}
\\
&\leq
(L_s+\lambda)
\left\|
\bm W-\bm W'
\right\|_{\mathrm F}.
\end{aligned}
\label{eq:app_regularized_smoothness}
\end{equation}
Thus, $\mathcal L_i^{(t)}$ is $(L_s+\lambda)$-smooth and satisfies
\begin{equation}
\mathcal L_i^{(t)}(\bm W')
\leq
\mathcal L_i^{(t)}(\bm W)
+
\left\langle
\nabla_{\bm W}\mathcal L_i^{(t)}(\bm W),
\bm W'-\bm W
\right\rangle_{\mathrm F}
+
\frac{L_s+\lambda}{2}
\left\|
\bm W'-\bm W
\right\|_{\mathrm F}^{2}.
\label{eq:app_descent_lemma}
\end{equation}

\subsubsection{Bounded Regularized Gradients}

Assumption~\ref{assumption:conv_factor_regularity} implies
\begin{equation}
\left\|
\bm X_i^{(t,k)}
\right\|_{\mathrm F}
=
\left\|
\bm B_i^{(t,k)}\bm A_i^{(t,k)}
\right\|_{\mathrm F}
\leq
\left\|
\bm B_i^{(t,k)}
\right\|_{\mathrm F}
\left\|
\bm A_i^{(t,k)}
\right\|_{\mathrm F}
\leq
C_A C_B.
\label{eq:app_local_product_bound}
\end{equation}

At initialization, all clients receive the same rank-$r$ global
factors, so that
$\bm X_g^{(0)}=\bm X_i^{(1,0)}$ for every client $i$.
Therefore, Assumption~\ref{assumption:conv_factor_regularity} gives
\begin{equation}
\left\|
\bm X_g^{(0)}
\right\|_{\mathrm F}
=
\left\|
\bm B_i^{(1,0)}\bm A_i^{(1,0)}
\right\|_{\mathrm F}
\leq
\left\|
\bm B_i^{(1,0)}
\right\|_{\mathrm F}
\left\|
\bm A_i^{(1,0)}
\right\|_{\mathrm F}
\leq
C_A C_B.
\label{eq:app_initial_global_bound}
\end{equation}

We define
$\overline{\bm X}^{(t)}
=
N^{-1}\sum_{i=1}^{N}\bm X_i^{(t)}$.
Since $\bm X_g^{(t)}$ is the rank-$r$ truncated-SVD approximation of
$\overline{\bm X}^{(t)}$,
\begin{equation}
\left\|
\bm X_g^{(t)}
\right\|_{\mathrm F}
\leq
\left\|
\overline{\bm X}^{(t)}
\right\|_{\mathrm F}
\leq
\frac{1}{N}
\sum_{i=1}^{N}
\left\|
\bm X_i^{(t)}
\right\|_{\mathrm F}
\leq
C_A C_B.
\label{eq:app_global_product_bound}
\end{equation}
Consequently,
\begin{equation}
\left\|
\bm X_i^{(t,k)}-\bm X_g^{(t-1)}
\right\|_{\mathrm F}
\leq
\left\|
\bm X_i^{(t,k)}
\right\|_{\mathrm F}
+
\left\|
\bm X_g^{(t-1)}
\right\|_{\mathrm F}
\leq
2C_A C_B.
\label{eq:app_reference_difference_bound}
\end{equation}

Under Assumption~\ref{assumption:conv_stochastic_gradients}, the
stochastic task gradient satisfies
\begin{equation}
\left\|
\nabla_{\bm W}
f_i\left(
\bm W_i^{(t,k)};\xi_i^{(t,k)}
\right)
\right\|_{\mathrm F}
\leq
G_f.
\label{eq:app_task_gradient_bound}
\end{equation}
Combining
Eqs.~\eqref{eq:app_regularized_gradient},
\eqref{eq:app_reference_difference_bound}, and
\eqref{eq:app_task_gradient_bound} gives
\begin{equation}
\left\|
\nabla_{\bm W}\mathcal L_i^{(t)}
\left(
\bm W_i^{(t,k)};\xi_i^{(t,k)}
\right)
\right\|_{\mathrm F}
\leq
G_f+2\lambda C_A C_B.
\label{eq:app_stochastic_gradient_bound}
\end{equation}

We define
\begin{equation}
G_\lambda
=
G_f+2\lambda C_A C_B.
\label{eq:app_G_lambda}
\end{equation}
Then,
\begin{equation}
\left\|
\nabla_{\bm W}\mathcal L_i^{(t)}
\left(
\bm W_i^{(t,k)};\xi_i^{(t,k)}
\right)
\right\|_{\mathrm F}
\leq
G_\lambda,
\qquad
\left\|
\nabla_{\bm W}\mathcal L_i^{(t)}
\left(
\bm W_i^{(t,k)}
\right)
\right\|_{\mathrm F}
\leq
G_\lambda.
\label{eq:app_regularized_gradient_bounds}
\end{equation}
The second inequality follows from
Eq.~\eqref{eq:app_unbiased_regularized_gradient} and Jensen's
inequality.

\subsubsection{Proof of Theorem~\ref{theorem:conv_fedpa}}
\label{appendix:proof_convergence} 

\textbf{One-Step Descent.} Because
$\bm W_i^{(t,k)}
=
\bm W_0+\bm B_i^{(t,k)}\bm A_i^{(t,k)}$,
the chain rule gives
\begin{equation}
\begin{aligned}
\nabla_{\bm B}\mathcal L_i^{(t)}
(\bm W_i^{(t,k)};\xi_i^{(t,k)})
&=
\nabla_{\bm W}\mathcal L_i^{(t)}
(\bm W_i^{(t,k)};\xi_i^{(t,k)})
(\bm A_i^{(t,k)})^\top, \\
\nabla_{\bm A}\mathcal L_i^{(t)}
(\bm W_i^{(t,k)};\xi_i^{(t,k)})
&=
(\bm B_i^{(t,k)})^\top
\nabla_{\bm W}\mathcal L_i^{(t)}
(\bm W_i^{(t,k)};\xi_i^{(t,k)}).
\end{aligned}
\label{eq:app_factor_gradients}
\end{equation}
The simultaneous stochastic-gradient updates are therefore
\begin{equation}
\begin{aligned}
\bm B_i^{(t,k+1)}
&=
\bm B_i^{(t,k)}
-
\eta
\nabla_{\bm W}\mathcal L_i^{(t)}
(\bm W_i^{(t,k)};\xi_i^{(t,k)})
(\bm A_i^{(t,k)})^\top, \\
\bm A_i^{(t,k+1)}
&=
\bm A_i^{(t,k)}
-
\eta
(\bm B_i^{(t,k)})^\top
\nabla_{\bm W}\mathcal L_i^{(t)}
(\bm W_i^{(t,k)};\xi_i^{(t,k)}).
\end{aligned}
\label{eq:app_factor_updates}
\end{equation}

Since $\bm W_0$ is fixed,
\begin{equation}
\bm W_i^{(t,k+1)}-\bm W_i^{(t,k)}
=
\bm X_i^{(t,k+1)}-\bm X_i^{(t,k)}.
\label{eq:app_weight_product_increment}
\end{equation}
Expanding the product of the updated factors gives
\begin{equation}
\begin{aligned}
\bm X_i^{(t,k+1)}-\bm X_i^{(t,k)}
={}&
-\eta
\nabla_{\bm W}\mathcal L_i^{(t)}
(\bm W_i^{(t,k)};\xi_i^{(t,k)})
(\bm A_i^{(t,k)})^\top\bm A_i^{(t,k)} \\
&-
\eta
\bm B_i^{(t,k)}(\bm B_i^{(t,k)})^\top
\nabla_{\bm W}\mathcal L_i^{(t)}
(\bm W_i^{(t,k)};\xi_i^{(t,k)}) \\
&+
\eta^2
\nabla_{\bm W}\mathcal L_i^{(t)}
(\bm W_i^{(t,k)};\xi_i^{(t,k)})
(\bm A_i^{(t,k)})^\top
(\bm B_i^{(t,k)})^\top
\nabla_{\bm W}\mathcal L_i^{(t)}
(\bm W_i^{(t,k)};\xi_i^{(t,k)}).
\end{aligned}
\label{eq:app_product_increment}
\end{equation}

Applying Eq.~\eqref{eq:app_descent_lemma} with
$\bm W=\bm W_i^{(t,k)}$ and
$\bm W'=\bm W_i^{(t,k+1)}$ yields
\begin{equation}
\mathcal L_i^{(t)}(\bm W_i^{(t,k+1)})
\leq
\mathcal L_i^{(t)}(\bm W_i^{(t,k)})
+
\left\langle
\nabla_{\bm W}\mathcal L_i^{(t)}(\bm W_i^{(t,k)}),
\bm W_i^{(t,k+1)}-\bm W_i^{(t,k)}
\right\rangle_{\mathrm F}
+
\frac{L_s+\lambda}{2}
\left\|
\bm W_i^{(t,k+1)}-\bm W_i^{(t,k)}
\right\|_{\mathrm F}^{2}.
\label{eq:app_descent_application}
\end{equation}

Using Eq.~\eqref{eq:app_weight_product_increment}, we substitute
Eq.~\eqref{eq:app_product_increment} into the inner-product and
squared-increment terms of Eq.~\eqref{eq:app_descent_application}.
Taking conditional expectation of the inner-product term gives
\begin{equation}
\begin{aligned}
&
\mathbb E_{\xi_i^{(t,k)}}
\left[
\left\langle
\nabla_{\bm W}\mathcal L_i^{(t)}(\bm W_i^{(t,k)}),
\bm W_i^{(t,k+1)}-\bm W_i^{(t,k)}
\right\rangle_{\mathrm F}
\right] \\
={}&
-\eta
\mathbb E_{\xi_i^{(t,k)}}
\left[
\left\langle
\nabla_{\bm W}\mathcal L_i^{(t)}(\bm W_i^{(t,k)}),
\nabla_{\bm W}\mathcal L_i^{(t)}
(\bm W_i^{(t,k)};\xi_i^{(t,k)})
(\bm A_i^{(t,k)})^\top\bm A_i^{(t,k)}
\right\rangle_{\mathrm F}
\right] \\
&-
\eta
\mathbb E_{\xi_i^{(t,k)}}
\left[
\left\langle
\nabla_{\bm W}\mathcal L_i^{(t)}(\bm W_i^{(t,k)}),
\bm B_i^{(t,k)}(\bm B_i^{(t,k)})^\top
\nabla_{\bm W}\mathcal L_i^{(t)}
(\bm W_i^{(t,k)};\xi_i^{(t,k)})
\right\rangle_{\mathrm F}
\right] \\
&+
\eta^2
\mathbb E_{\xi_i^{(t,k)}}
\left[
\left\langle
\nabla_{\bm W}\mathcal L_i^{(t)}(\bm W_i^{(t,k)}),
\nabla_{\bm W}\mathcal L_i^{(t)}
(\bm W_i^{(t,k)};\xi_i^{(t,k)})
(\bm A_i^{(t,k)})^\top
(\bm B_i^{(t,k)})^\top
\nabla_{\bm W}\mathcal L_i^{(t)}
(\bm W_i^{(t,k)};\xi_i^{(t,k)})
\right\rangle_{\mathrm F}
\right].
\end{aligned}
\label{eq:app_expected_inner_product_expansion}
\end{equation}

Using Eq.~\eqref{eq:app_unbiased_regularized_gradient} and standard
properties of the Frobenius inner product, the first two terms reduce
to
\begin{equation}
-\eta
\left\|
\nabla_{\bm W}\mathcal L_i^{(t)}(\bm W_i^{(t,k)})
(\bm A_i^{(t,k)})^\top
\right\|_{\mathrm F}^{2}
-
\eta
\left\|
(\bm B_i^{(t,k)})^\top
\nabla_{\bm W}\mathcal L_i^{(t)}(\bm W_i^{(t,k)})
\right\|_{\mathrm F}^{2}.
\label{eq:app_first_order_terms}
\end{equation}

Assumption~\ref{assumption:conv_factor_regularity}, evaluated at the
current iterate, gives
\begin{equation}
\left\|
\nabla_{\bm W}\mathcal L_i^{(t)}(\bm W_i^{(t,k)})
(\bm A_i^{(t,k)})^\top
\right\|_{\mathrm F}^{2}
+
\left\|
(\bm B_i^{(t,k)})^\top
\nabla_{\bm W}\mathcal L_i^{(t)}(\bm W_i^{(t,k)})
\right\|_{\mathrm F}^{2}
\geq
c
\left\|
\nabla_{\bm W}\mathcal L_i^{(t)}(\bm W_i^{(t,k)})
\right\|_{\mathrm F}^{2}.
\label{eq:app_gradient_preservation_application}
\end{equation}
Multiplying this inequality by $-\eta<0$ reverses its direction.
Therefore,
\begin{equation}
-\eta
\left\|
\nabla_{\bm W}\mathcal L_i^{(t)}(\bm W_i^{(t,k)})
(\bm A_i^{(t,k)})^\top
\right\|_{\mathrm F}^{2}
-
\eta
\left\|
(\bm B_i^{(t,k)})^\top
\nabla_{\bm W}\mathcal L_i^{(t)}(\bm W_i^{(t,k)})
\right\|_{\mathrm F}^{2}
\leq
-c\eta
\left\|
\nabla_{\bm W}\mathcal L_i^{(t)}(\bm W_i^{(t,k)})
\right\|_{\mathrm F}^{2}.
\label{eq:app_first_order_descent}
\end{equation}

For the remaining interaction term, the Cauchy--Schwarz inequality,
submultiplicativity, and
Eq.~\eqref{eq:app_regularized_gradient_bounds} give
\begin{equation}
\begin{aligned}
&
\left|
\left\langle
\nabla_{\bm W}\mathcal L_i^{(t)}(\bm W_i^{(t,k)}),
\nabla_{\bm W}\mathcal L_i^{(t)}
(\bm W_i^{(t,k)};\xi_i^{(t,k)})
(\bm A_i^{(t,k)})^\top
(\bm B_i^{(t,k)})^\top
\nabla_{\bm W}\mathcal L_i^{(t)}
(\bm W_i^{(t,k)};\xi_i^{(t,k)})
\right\rangle_{\mathrm F}
\right| \\
&\qquad\leq
\left\|
\nabla_{\bm W}\mathcal L_i^{(t)}(\bm W_i^{(t,k)})
\right\|_{\mathrm F}
\left\|
\nabla_{\bm W}\mathcal L_i^{(t)}
(\bm W_i^{(t,k)};\xi_i^{(t,k)})
\right\|_{\mathrm F}^{2}
\left\|
\bm A_i^{(t,k)}
\right\|_{\mathrm F}
\left\|
\bm B_i^{(t,k)}
\right\|_{\mathrm F}
\leq
C_A C_B G_\lambda^3.
\end{aligned}
\label{eq:app_interaction_bound}
\end{equation}
Combining
Eqs.~\eqref{eq:app_expected_inner_product_expansion},
\eqref{eq:app_first_order_descent}, and
\eqref{eq:app_interaction_bound} yields
\begin{equation}
\mathbb E_{\xi_i^{(t,k)}}
\left[
\left\langle
\nabla_{\bm W}\mathcal L_i^{(t)}(\bm W_i^{(t,k)}),
\bm W_i^{(t,k+1)}-\bm W_i^{(t,k)}
\right\rangle_{\mathrm F}
\right]
\leq
-c\eta
\left\|
\nabla_{\bm W}\mathcal L_i^{(t)}(\bm W_i^{(t,k)})
\right\|_{\mathrm F}^{2}
+
\eta^2 C_A C_B G_\lambda^3.
\label{eq:app_expected_inner_product_bound}
\end{equation}

We next bound the squared weight increment. By
Eq.~\eqref{eq:app_weight_product_increment},
\begin{equation}
\left\|
\bm W_i^{(t,k+1)}-\bm W_i^{(t,k)}
\right\|_{\mathrm F}^{2}
=
\left\|
\bm X_i^{(t,k+1)}-\bm X_i^{(t,k)}
\right\|_{\mathrm F}^{2}.
\label{eq:app_weight_increment_equivalence}
\end{equation}
Hence, it is sufficient to bound the three terms in
Eq.~\eqref{eq:app_product_increment}.

For the first term, repeated application of Frobenius-norm
submultiplicativity gives
\begin{equation}
\begin{aligned}
&
\left\|
\eta
\nabla_{\bm W}\mathcal L_i^{(t)}(\bm W_i^{(t,k)};\xi_i^{(t,k)})
(\bm A_i^{(t,k)})^\top\bm A_i^{(t,k)}
\right\|_{\mathrm F}^{2} \\
&\qquad\leq
\eta^2
\left\|
\nabla_{\bm W}\mathcal L_i^{(t)}(\bm W_i^{(t,k)};\xi_i^{(t,k)})
\right\|_{\mathrm F}^{2}
\left\|
(\bm A_i^{(t,k)})^\top
\right\|_{\mathrm F}^{2}
\left\|
\bm A_i^{(t,k)}
\right\|_{\mathrm F}^{2} \\
&\qquad=
\eta^2
\left\|
\nabla_{\bm W}\mathcal L_i^{(t)}(\bm W_i^{(t,k)};\xi_i^{(t,k)})
\right\|_{\mathrm F}^{2}
\left\|
\bm A_i^{(t,k)}
\right\|_{\mathrm F}^{4}
\leq
\eta^2C_A^4G_\lambda^2.
\end{aligned}
\label{eq:app_first_increment_term_bound}
\end{equation}

Similarly, for the second term,
\begin{equation}
\begin{aligned}
&
\left\|
\eta
\bm B_i^{(t,k)}(\bm B_i^{(t,k)})^\top
\nabla_{\bm W}\mathcal L_i^{(t)}(\bm W_i^{(t,k)};\xi_i^{(t,k)})
\right\|_{\mathrm F}^{2} \\
&\qquad\leq
\eta^2
\left\|
\bm B_i^{(t,k)}
\right\|_{\mathrm F}^{2}
\left\|
(\bm B_i^{(t,k)})^\top
\right\|_{\mathrm F}^{2}
\left\|
\nabla_{\bm W}\mathcal L_i^{(t)}(\bm W_i^{(t,k)};\xi_i^{(t,k)})
\right\|_{\mathrm F}^{2} \\
&\qquad=
\eta^2
\left\|
\bm B_i^{(t,k)}
\right\|_{\mathrm F}^{4}
\left\|
\nabla_{\bm W}\mathcal L_i^{(t)}(\bm W_i^{(t,k)};\xi_i^{(t,k)})
\right\|_{\mathrm F}^{2}
\leq
\eta^2C_B^4G_\lambda^2.
\end{aligned}
\label{eq:app_second_increment_term_bound}
\end{equation}

For the second-order interaction term,
\begin{equation}
\begin{aligned}
&
\left\|
\eta^2
\nabla_{\bm W}\mathcal L_i^{(t)}(\bm W_i^{(t,k)};\xi_i^{(t,k)})
(\bm A_i^{(t,k)})^\top
(\bm B_i^{(t,k)})^\top
\nabla_{\bm W}\mathcal L_i^{(t)}(\bm W_i^{(t,k)};\xi_i^{(t,k)})
\right\|_{\mathrm F}^{2} \\
&\qquad\leq
\eta^4
\left\|
\nabla_{\bm W}\mathcal L_i^{(t)}(\bm W_i^{(t,k)};\xi_i^{(t,k)})
\right\|_{\mathrm F}^{4}
\left\|
\bm A_i^{(t,k)}
\right\|_{\mathrm F}^{2}
\left\|
\bm B_i^{(t,k)}
\right\|_{\mathrm F}^{2}
\leq
\eta^4C_A^2C_B^2G_\lambda^4.
\end{aligned}
\label{eq:app_third_increment_term_bound}
\end{equation}

Applying
$\|\bm P+\bm Q+\bm R\|_{\mathrm F}^{2}
\leq
3\|\bm P\|_{\mathrm F}^{2}
+
3\|\bm Q\|_{\mathrm F}^{2}
+
3\|\bm R\|_{\mathrm F}^{2}$
to Eq.~\eqref{eq:app_product_increment}, and then using
Eqs.~\eqref{eq:app_first_increment_term_bound},
\eqref{eq:app_second_increment_term_bound}, and
\eqref{eq:app_third_increment_term_bound}, gives
\begin{equation}
\mathbb E_{\xi_i^{(t,k)}}
\left[
\left\|
\bm W_i^{(t,k+1)}-\bm W_i^{(t,k)}
\right\|_{\mathrm F}^{2}
\right]
\leq
3\eta^2(C_A^4+C_B^4)G_\lambda^2
+
3\eta^4C_A^2C_B^2G_\lambda^4.
\label{eq:app_increment_squared_bound}
\end{equation}
For $0<\eta\leq1$, we have $\eta^4\leq\eta^2$, and hence
\begin{equation}
\mathbb E_{\xi_i^{(t,k)}}
\left[
\left\|
\bm W_i^{(t,k+1)}-\bm W_i^{(t,k)}
\right\|_{\mathrm F}^{2}
\right]
\leq
3\eta^2
\left[
(C_A^4+C_B^4)G_\lambda^2
+
C_A^2C_B^2G_\lambda^4
\right].
\label{eq:app_increment_squared_simplified}
\end{equation}

Define
\begin{equation}
M_\lambda
=
C_A C_B G_\lambda^3
+
\frac{3(L_s+\lambda)}{2}
\left[
(C_A^4+C_B^4)G_\lambda^2
+
C_A^2C_B^2G_\lambda^4
\right].
\label{eq:app_M_lambda}
\end{equation}
Substituting
Eqs.~\eqref{eq:app_expected_inner_product_bound} and
\eqref{eq:app_increment_squared_simplified} into
Eq.~\eqref{eq:app_descent_application} gives
\begin{equation}
\mathbb E_{\xi_i^{(t,k)}}
\left[
\mathcal L_i^{(t)}(\bm W_i^{(t,k+1)})
\right]
\leq
\mathcal L_i^{(t)}(\bm W_i^{(t,k)})
-
c\eta
\left\|
\nabla_{\bm W}\mathcal L_i^{(t)}(\bm W_i^{(t,k)})
\right\|_{\mathrm F}^{2}
+
M_\lambda\eta^2.
\label{eq:app_one_step_descent}
\end{equation}

\textbf{One-Round Descent}. Applying the law of total expectation and summing
Eq.~\eqref{eq:app_one_step_descent} over
$k=0,\ldots,\tau-1$ gives
\begin{equation}
\mathbb E
\left[
\mathcal L_i^{(t)}(\bm W_i^{(t,\tau)})
\right]
\leq
\mathbb E
\left[
\mathcal L_i^{(t)}(\bm W_i^{(t,0)})
\right]
-
c\eta
\sum_{k=0}^{\tau-1}
\mathbb E
\left[
\left\|
\nabla_{\bm W}\mathcal L_i^{(t)}(\bm W_i^{(t,k)})
\right\|_{\mathrm F}^{2}
\right]
+
\tau M_\lambda\eta^2.
\label{eq:app_local_round_descent}
\end{equation}

Local factor preservation implies
\begin{equation}
\bm X_i^{(t,0)}
=
\bm X_i^{(t-1)},
\qquad
\bm W_i^{(t,0)}
=
\bm W_i^{(t-1)}.
\label{eq:app_round_initialization}
\end{equation}
Using the definition of $\mathcal L_i^{(t)}$, we therefore have
\begin{equation}
\mathcal L_i^{(t)}(\bm W_i^{(t,0)})
=
f_i(\bm W_i^{(t-1)})
+
\frac{\lambda}{2}
\left\|
\bm X_i^{(t-1)}-\bm X_g^{(t-1)}
\right\|_{\mathrm F}^{2}.
\label{eq:app_initial_round_objective}
\end{equation}
Averaging this equality over the clients and using
Eq.~\eqref{eq:app_joint_objective} gives
\begin{equation}
\frac{1}{N}
\sum_{i=1}^{N}
\mathcal L_i^{(t)}(\bm W_i^{(t,0)})
=
\Psi^{(t-1)}.
\label{eq:app_initial_joint_objective}
\end{equation}

At the end of local training, let
$\bm X_i^{(t)}=\bm X_i^{(t,\tau)}$ and
$\bm W_i^{(t)}=\bm W_i^{(t,\tau)}$. Then,
\begin{equation}
\mathcal L_i^{(t)}(\bm W_i^{(t,\tau)})
=
f_i(\bm W_i^{(t)})
+
\frac{\lambda}{2}
\left\|
\bm X_i^{(t)}-\bm X_g^{(t-1)}
\right\|_{\mathrm F}^{2}.
\label{eq:app_preaggregation_objective}
\end{equation}

We next compare the product-level regularization terms before and
after server aggregation. Let
$\overline{\bm X}^{(t)}
=
N^{-1}\sum_{i=1}^{N}\bm X_i^{(t)}$.
For any matrix $\bm X$, the variance decomposition gives
\begin{equation}
\frac{1}{N}
\sum_{i=1}^{N}
\left\|
\bm X_i^{(t)}-\bm X
\right\|_{\mathrm F}^{2}
=
\frac{1}{N}
\sum_{i=1}^{N}
\left\|
\bm X_i^{(t)}-\overline{\bm X}^{(t)}
\right\|_{\mathrm F}^{2}
+
\left\|
\overline{\bm X}^{(t)}-\bm X
\right\|_{\mathrm F}^{2}.
\label{eq:app_variance_decomposition}
\end{equation}
The first term on the right-hand side is independent of $\bm X$.
Therefore, minimizing the left-hand side over matrices of rank at most
$r$ is equivalent to finding the best rank-$r$ approximation of
$\overline{\bm X}^{(t)}$. By the truncated-SVD aggregation rule,
\begin{equation}
\bm X_g^{(t)}
\in
\operatorname*{arg\,min}_{\operatorname{rank}(\bm X)\leq r}
\frac{1}{N}
\sum_{i=1}^{N}
\left\|
\bm X_i^{(t)}-\bm X
\right\|_{\mathrm F}^{2}.
\label{eq:app_rank_constrained_consensus}
\end{equation}
The previous global product $\bm X_g^{(t-1)}$ is feasible because
$\operatorname{rank}(\bm X_g^{(t-1)})\leq r$. Hence, the optimality of
$\bm X_g^{(t)}$ gives
\begin{equation}
\frac{1}{N}
\sum_{i=1}^{N}
\left\|
\bm X_i^{(t)}-\bm X_g^{(t)}
\right\|_{\mathrm F}^{2}
\leq
\frac{1}{N}
\sum_{i=1}^{N}
\left\|
\bm X_i^{(t)}-\bm X_g^{(t-1)}
\right\|_{\mathrm F}^{2}.
\label{eq:app_aggregation_nonincrease}
\end{equation}

The task-loss terms $f_i(\bm W_i^{(t)})$ are unchanged by server
aggregation. Combining this observation with
Eq.~\eqref{eq:app_aggregation_nonincrease} and
Eq.~\eqref{eq:app_joint_objective} yields
\begin{equation}
\Psi^{(t)}
\leq
\frac{1}{N}
\sum_{i=1}^{N}
\mathcal L_i^{(t)}(\bm W_i^{(t,\tau)}).
\label{eq:app_postaggregation_bound}
\end{equation}

Averaging Eq.~\eqref{eq:app_local_round_descent} over all clients and
using Eqs.~\eqref{eq:app_initial_joint_objective} and
\eqref{eq:app_postaggregation_bound} gives
\begin{equation}
\mathbb E[\Psi^{(t)}]
\leq
\mathbb E[\Psi^{(t-1)}]
-
\frac{c\eta}{N}
\sum_{i=1}^{N}
\sum_{k=0}^{\tau-1}
\mathbb E
\left[
\left\|
\nabla_{\bm W}\mathcal L_i^{(t)}(\bm W_i^{(t,k)})
\right\|_{\mathrm F}^{2}
\right]
+
\tau M_\lambda\eta^2.
\label{eq:app_one_round_descent}
\end{equation}

\textbf{Telescoping Over Communication Rounds}. Rearranging Eq.~\eqref{eq:app_one_round_descent} gives
\begin{equation}
\frac{c\eta}{N}
\sum_{i=1}^{N}
\sum_{k=0}^{\tau-1}
\mathbb E
\left[
\left\|
\nabla_{\bm W}\mathcal L_i^{(t)}(\bm W_i^{(t,k)})
\right\|_{\mathrm F}^{2}
\right]
\leq
\mathbb E[\Psi^{(t-1)}]
-
\mathbb E[\Psi^{(t)}]
+
\tau M_\lambda\eta^2.
\label{eq:app_rearranged_round_descent}
\end{equation}
Summing this inequality over $t=1,\ldots,T$ telescopes the joint
objective terms and gives
\begin{equation}
\frac{c\eta}{N}
\sum_{t=1}^{T}
\sum_{i=1}^{N}
\sum_{k=0}^{\tau-1}
\mathbb E
\left[
\left\|
\nabla_{\bm W}\mathcal L_i^{(t)}(\bm W_i^{(t,k)})
\right\|_{\mathrm F}^{2}
\right]
\leq
\mathbb E\left[\Psi^{(0)}\right]
-
\mathbb E\left[\Psi^{(T)}\right]
+
T\tau M_\lambda\eta^2.
\label{eq:app_telescoping_bound}
\end{equation}

% Since $\Psi^{(T)}\geq\Psi^\star$,
% Eq.~\eqref{eq:app_initial_objective_gap} implies
% \begin{equation}
% \mathbb E\left[\Psi^{(0)}\right]
% -
% \mathbb E\left[\Psi^{(T)}\right]
% \leq
% \mathbb E\left[\Psi^{(0)}\right]
% -
% \Psi^\star
% \leq
% D.
% \label{eq:app_objective_gap_bound}
% \end{equation}
Since the regularization terms are nonnegative,
$\Psi^{(T)} \geq \frac{1}{N}\sum_{i=1}^{N} f_i(\bm W_i^{(T)})$.
By Assumption~\ref{assumption:conv_lipschitz},
$f_i(\bm W_i^{(T)}) \geq f_i(\bm W_0)
-L_c\|\bm W_i^{(T)}-\bm W_0\|_{\mathrm F}$.
Using $\bm W_i^{(T)}-\bm W_0=\bm X_i^{(T)}$ and
$\|\bm X_i^{(T)}\|_{\mathrm F}\leq C_A C_B$ from
Eq.~\eqref{eq:app_local_product_bound}, we obtain
\begin{align}
\Psi^{(T)}
&\geq
\frac{1}{N}\sum_{i=1}^{N} f_i(\bm W_0)
-
\frac{L_c}{N}\sum_{i=1}^{N}
\|\bm X_i^{(T)}\|_{\mathrm F}
\nonumber\\
&\geq
\frac{1}{N}\sum_{i=1}^{N} f_i(\bm W_0)
-
L_c C_A C_B
\nonumber\\
&\geq
\Psi^\star-L_c C_A C_B,
\label{eq:app_joint_objective_lower_bound}
\end{align}
where the last inequality follows by evaluating the assumed lower bound
on the average task loss at the common model $\bm W=\bm W_0$.
Therefore,
\begin{align}
\mathbb E[\Psi^{(0)}]-\mathbb E[\Psi^{(T)}]
&\leq
\mathbb E[\Psi^{(0)}]-\Psi^\star
+
L_c C_A C_B
\nonumber\\
&\leq
D+L_c C_A C_B
=
\widetilde D,
\label{eq:app_objective_gap_bound}
\end{align}
with $\widetilde D:=D+L_c C_A C_B$.

Substituting this inequality into
Eq.~\eqref{eq:app_telescoping_bound} and dividing both sides by
$c\eta T\tau$ yields
\begin{equation}
\frac{1}{NT\tau}
\sum_{t=1}^{T}
\sum_{i=1}^{N}
\sum_{k=0}^{\tau-1}
\mathbb E
\left[
\left\|
\nabla_{\bm W}\mathcal L_i^{(t)}(\bm W_i^{(t,k)})
\right\|_{\mathrm F}^{2}
\right]
\leq
\frac{\widetilde D}{c\eta T\tau}
+
\frac{M_\lambda\eta}{c}.
\label{eq:app_average_gradient_bound}
\end{equation}
This proves Eq.~\eqref{eq:conv_average_gradient_bound} for
$0<\eta\leq1$.

Finally, when $\widetilde D\leq M_\lambda T\tau$, choose
\begin{equation}
\eta
=
\sqrt{
\frac{\widetilde D}{M_\lambda T\tau}
}
\leq
1.
\label{eq:app_optimal_stepsize}
\end{equation}
For this choice,
\begin{equation}
\frac{\widetilde D}{c\eta T\tau}
=
\frac{M_\lambda\eta}{c}
=
\frac{\sqrt{\widetilde D M_\lambda}}{c\sqrt{T\tau}}.
\label{eq:app_balanced_terms}
\end{equation}
Therefore,
\begin{equation}
\frac{1}{NT\tau}
\sum_{t=1}^{T}
\sum_{i=1}^{N}
\sum_{k=0}^{\tau-1}
\mathbb E
\left[
\left\|
\nabla_{\bm W}\mathcal L_i^{(t)}(\bm W_i^{(t,k)})
\right\|_{\mathrm F}^{2}
\right]
\leq
\frac{2\sqrt{\widetilde D M_\lambda}}
{c\sqrt{T\tau}}.
\label{eq:app_final_rate}
\end{equation}
Therefore, FedPA-LoRA achieves the
$\mathcal O((T\tau)^{-1/2})$ averaged stationarity rate stated in
Theorem~\ref{theorem:conv_fedpa}.
\hfill$\square$

\subsubsection{Proof of Proposition~\ref{proposition:conv_global_loss}}
\label{appendix:proof_global_loss}

Recall that
$\bm W_i^{(t)}=\bm W_0+\bm X_i^{(t)}$ and
$\bm W_g^{(t)}=\bm W_0+\bm X_g^{(t)}$.
By Assumption~\ref{assumption:conv_lipschitz},
\begin{equation}
f_i(\bm W_g^{(t)})
\leq
f_i(\bm W_i^{(t)})
+
L_c
\left\|
\bm W_g^{(t)}-\bm W_i^{(t)}
\right\|_{\mathrm F}.
\label{eq:app_lipschitz_global_local_weight}
\end{equation}
Because the frozen pre-trained weight $\bm W_0$ is shared by the local
and global models,
\begin{equation}
\bm W_g^{(t)}-\bm W_i^{(t)}
=
\bm X_g^{(t)}-\bm X_i^{(t)}.
\label{eq:app_global_local_weight_difference}
\end{equation}
Substituting
Eq.~\eqref{eq:app_global_local_weight_difference} into
Eq.~\eqref{eq:app_lipschitz_global_local_weight} gives
\begin{equation}
f_i(\bm W_g^{(t)})
\leq
f_i(\bm W_i^{(t)})
+
L_c
\left\|
\bm X_g^{(t)}-\bm X_i^{(t)}
\right\|_{\mathrm F}.
\label{eq:app_lipschitz_global_local}
\end{equation}

By Young's inequality, for any $a\geq0$ and $\lambda>0$,
\begin{equation}
L_c a
\leq
\frac{\lambda}{2}a^2
+
\frac{L_c^2}{2\lambda}.
\label{eq:app_young_scalar}
\end{equation}
Applying Eq.~\eqref{eq:app_young_scalar} with
$a=\|\bm X_i^{(t)}-\bm X_g^{(t)}\|_{\mathrm F}$ yields
\begin{equation}
L_c
\left\|
\bm X_i^{(t)}-\bm X_g^{(t)}
\right\|_{\mathrm F}
\leq
\frac{\lambda}{2}
\left\|
\bm X_i^{(t)}-\bm X_g^{(t)}
\right\|_{\mathrm F}^{2}
+
\frac{L_c^2}{2\lambda}.
\label{eq:app_young_global_local}
\end{equation}

Combining
Eqs.~\eqref{eq:app_lipschitz_global_local} and
\eqref{eq:app_young_global_local} gives
\begin{equation}
f_i(\bm W_g^{(t)})
\leq
f_i(\bm W_i^{(t)})
+
\frac{\lambda}{2}
\left\|
\bm X_i^{(t)}-\bm X_g^{(t)}
\right\|_{\mathrm F}^{2}
+
\frac{L_c^2}{2\lambda}.
\label{eq:app_client_global_loss_bound}
\end{equation}

Averaging Eq.~\eqref{eq:app_client_global_loss_bound} over all clients
gives
\begin{equation}
\frac{1}{N}
\sum_{i=1}^{N}
f_i(\bm W_g^{(t)})
\leq
\frac{1}{N}
\sum_{i=1}^{N}
\left[
f_i(\bm W_i^{(t)})
+
\frac{\lambda}{2}
\left\|
\bm X_i^{(t)}-\bm X_g^{(t)}
\right\|_{\mathrm F}^{2}
\right]
+
\frac{L_c^2}{2\lambda}.
\label{eq:app_average_global_loss_bound}
\end{equation}
By the definitions of
$F(\bm W)=N^{-1}\sum_{i=1}^{N}f_i(\bm W)$ and
$\Psi^{(t)}$ in Eq.~\eqref{eq:app_joint_objective},
Eq.~\eqref{eq:app_average_global_loss_bound} becomes
\begin{equation}
F(\bm W_g^{(t)})
\leq
\Psi^{(t)}
+
\frac{L_c^2}{2\lambda}.
\label{eq:app_global_loss_control}
\end{equation}
This proves
Proposition~\ref{proposition:conv_global_loss}.
\hfill$\square$

\subsection{Convergence Analysis: Heterogeneous Setting}
\label{app:convergence_heterogeneous}
We extend the convergence guarantees of FedPA‑LoRA to the heterogeneous setting where the local rank $r_i$ and reference rank $R_i$ may differ
across clients.  As we will see, the server‑side product‑space aggregation and global
product‑guided alignment naturally accommodate such heterogeneity.

\subsubsection{Setup and Additional Assumptions}
For client $i$ at round $t$, the preserved local LoRA product is
$\bm X_i^{(t)} = \bm B_i^{(t)}\bm A_i^{(t)}$
($\operatorname{rank}\le r_i$).  The server first forms the average
product
$
\overline{\bm X}^{(t)} = \frac{1}{N}\sum_{i=1}^{N}\bm X_i^{(t)}.
$
The global update is obtained by projecting this average onto the set
of matrices of rank at most $R_g=\max_i R_i$ as follows 
\begin{equation}
\bm X_g^{(t)} \in
\operatorname*{arg\,min}_{\operatorname{rank}(\bm X)\le R_g}
\bigl\| \bm X - \overline{\bm X}^{(t)} \bigr\|_{\mathrm F}.
\label{eq:het_global_proj}
\end{equation}
The reference sent to client $i$ for the next round is then the
best rank‑$R_i$ approximation of the global product given as 
\begin{equation}
\bm X_{g,i}^{(t)} \in
\operatorname*{arg\,min}_{\operatorname{rank}(\bm X)\le R_i}
\bigl\| \bm X - \bm X_g^{(t)} \bigr\|_{\mathrm F}.
\label{eq:het_ref_proj}
\end{equation}
The reduced-QR and core-SVD procedure in
Eqs.~\eqref{eq:reduced_qr}--\eqref{eq:efficient_global_lora_reconstruction}
computes these projections without ever forming dense matrices, as
shown by the optimality property~\eqref{eq:optimal_rank_projection}. In round $t$, client $i$ minimizes the regularized local objective
\begin{equation}
\label{eq:het_local_obj}
\mathcal L_i^{(t)}(\bm W) = f_i(\bm W) + \frac{\lambda}{2}\bigl\|\bm W - \bm W_0 - \bm X_{g,i}^{(t-1)}\bigr\|_{\mathrm F}^{2},
\end{equation}
starting from the preserved product
$\bm W_i^{(t,0)} = \bm W_0 + \bm X_i^{(t-1)}$ and performing $\tau$
local factor SGD steps.  Let
$\bm W_i^{(t,k)} = \bm W_0 + \bm B_i^{(t,k)}\bm A_i^{(t,k)}$ be the
weight matrix after $k$ steps, with
$(\bm B_i^{(t,0)},\bm A_i^{(t,0)}) = (\bm B_i^{(t-1)},\bm A_i^{(t-1)})$,
and $(\bm B_i^{(t)},\bm A_i^{(t)}) = (\bm B_i^{(t,\tau)},\bm A_i^{(t,\tau)})$.
The round‑wise joint objective is defined with the client‑specific
references that were actually used during local training:
\begin{equation}\label{eq:het_Psi}
\Psi^{(t)} = \frac{1}{N}\sum_{i=1}^{N}\Bigl[ f_i(\bm W_i^{(t)}) + \frac{\lambda}{2}\bigl\|\bm X_i^{(t)} - \bm X_{g,i}^{(t-1)}\bigr\|_{\mathrm F}^{2} \Bigr].
\end{equation}
For the analysis we also define the initial joint objective as
\begin{equation}\label{eq:het_Psi0}
\Psi^{(0)} = \frac{1}{N}\sum_{i=1}^{N}\Bigl[ f_i(\bm W_i^{(0)}) + \frac{\lambda}{2}\bigl\|\bm X_i^{(0)} - \bm X_g^{(0)}\bigr\|_{\mathrm F}^{2} \Bigr],
\end{equation}
where $\bm W_i^{(0)} = \bm W_0 + \bm X_i^{(0)}$ are the initial local
models.

% In addition to  Assumption \ref{assumption:conv_smoothness} and Assumption \ref{assumption:conv_stochastic_gradients}, we consider the following additional assumptions.
In addition to Assumptions~\ref{assumption:conv_smoothness}, \ref{assumption:conv_stochastic_gradients}, and \ref{assumption:conv_lipschitz}, we consider the following additional assumptions.

\begin{assumption}[LoRA factor regularity]
\label{ass:het_factor_regularity}
There exist constants $C_A,C_B>0$ such that for every client $i$,
round $t$, and local step $k$, we have
\begin{equation}
\bigl\|\bm A_i^{(t,k)}\bigr\|_{\mathrm F}\le C_A,\qquad
\bigl\|\bm B_i^{(t,k)}\bigr\|_{\mathrm F}\le C_B.
\label{eq:conv_bounded_factors_hetro}
\end{equation}
Furthermore, there exists $c>0$ such that for any objective of the form
$\mathcal L(\bm W)=f_i(\bm W)+\frac{\lambda}{2}\|\bm W-\bm W_0-\bm Z\|_{\mathrm F}^{2}$
with $\|\bm Z\|_{\mathrm F}\le C_g$, where $C_g$ is a uniform bound for the global product $\bm X_g^{(t)}$ and the client‑specific references $\bm X_{g,i}^{(t)}$, the gradient of $\mathcal L$ with
respect to the LoRA factors satisfies
\begin{equation}
\begin{aligned}
\bigl\|\nabla_{\bm W}\mathcal L(\bm W)\bm A^\top\bigr\|_{\mathrm F}^{2}
+\bigl\|\bm B^\top\nabla_{\bm W}\mathcal L(\bm W)\bigr\|_{\mathrm F}^{2}
\ge c\,\bigl\|\nabla_{\bm W}\mathcal L(\bm W)\bigr\|_{\mathrm F}^{2},
\end{aligned}
\label{eq:het_gradient_preservation}
\end{equation}
where $\bm W=\bm W_0+\bm B\bm A$ and $\|\bm A\|_{\mathrm F}\le C_A$,
$\|\bm B\|_{\mathrm F}\le C_B$.  The constant $c$ is independent of the
specific $\bm A,\bm B$ and  $\bm Z$.
\end{assumption}

\begin{assumption}[Bounded truncation error]\label{ass:trunc_err}
There exists $\delta \ge 0$ such that for every round $t$ and every
client $i$,
\begin{equation}
\bigl\|\bm X_g^{(t)} - \bm X_{g,i}^{(t)}\bigr\|_{\mathrm F} \le \delta .
\end{equation}
\end{assumption}
\noindent If the global product is approximately low‑rank or the reference ranks $R_i$ are chosen sufficiently large, the truncation error $\delta$ is small. In fact, the constant $\delta$ in Assumption~\ref{ass:trunc_err} can be made explicit. Since $\bm X_{g,i}^{(t)}$ is the best rank‑$R_i$ approximation of $\bm X_g^{(t)}$, the Eckart–Young–Mirsky theorem \citep{eckart1936approximation} gives
\begin{equation}
\bigl\|\bm X_g^{(t)} - \bm X_{g,i}^{(t)}\bigr\|_{\mathrm F}^{2}
= \sum_{k > R_i} \sigma_k\!\left(\bm X_g^{(t)}\right)^2,
\end{equation}
where $\sigma_k(\bm X_g^{(t)})$ are the singular values of $\bm X_g^{(t)}$ in non‑increasing order. Thus we may take
\begin{equation}
\delta
=
\max_{t\in\{0,\dots,T-1\}} \max_{i\in\{1,\dots,N\}}
\sqrt{\sum_{k > R_i} \sigma_k\!\left(\bm X_g^{(t)}\right)^2}.
\end{equation}
% When $\min_i R_i \ge \operatorname{rank}(\bm X_g^{(t)})$ for all $t$, the tail sums vanish and $\delta=0$, recovering the homogeneous setting.
When $\min_i R_i \ge \operatorname{rank}(\bm X_g^{(t)})$ for all $t$, the tail sums vanish and $\delta=0$, so the truncation-induced term in the heterogeneous bound vanishes, as in the homogeneous setting.

To handle the mismatch between the per‑client truncated references and
the global product, we introduce the \emph{ideal} local objective that
uses the full global product as a reference:
\begin{equation}\label{eq:ideal_obj}
\mathcal L_i^{\mathrm{ideal},(t)}(\bm W) = f_i(\bm W) + \frac{\lambda}{2}\bigl\|\bm W - \bm W_0 - \bm X_g^{(t-1)}\bigr\|_{\mathrm F}^{2}.
\end{equation}
Let $\bm \Delta_i^{(t-1)} = \bm X_{g,i}^{(t-1)} - \bm X_g^{(t-1)}$.
By Assumption~\ref{ass:trunc_err}, we have
$\|\bm \Delta_i^{(t-1)}\|_{\mathrm F}\le\delta$.  Taking the
gradients of both objectives, we obtain
\begin{align}
\nabla_{\bm W}\mathcal L_i^{(t)}(\bm W)
&= \nabla_{\bm W}f_i(\bm W) + \lambda\bigl(\bm W - \bm W_0 - \bm X_{g,i}^{(t-1)}\bigr), \nonumber\\
\nabla_{\bm W}\mathcal L_i^{\mathrm{ideal},(t)}(\bm W)
&= \nabla_{\bm W}f_i(\bm W) + \lambda\bigl(\bm W - \bm W_0 - \bm X_g^{(t-1)}\bigr).
\end{align}
Hence, we have
\begin{equation}
\nabla_{\bm W}\mathcal L_i^{(t)}(\bm W) = \nabla_{\bm W}\mathcal L_i^{\mathrm{ideal},(t)}(\bm W) - \lambda \bm \Delta_i^{(t-1)} ,
\label{eq:grad_bias}
\end{equation}
a relation that will be used repeatedly in the analysis.

\subsubsection{Convergence Result}
% \begin{theorem}[FedPA‑LoRA, heterogeneous ranks]\label{thm:het_conv}
% Under Assumptions \ref{assumption:conv_smoothness}, \ref{assumption:conv_stochastic_gradients}, \ref{ass:het_factor_regularity}, \ref{ass:trunc_err},
\begin{theorem}[FedPA-LoRA, heterogeneous ranks]\label{thm:het_conv}
Under Assumptions~\ref{assumption:conv_smoothness},
\ref{assumption:conv_stochastic_gradients},
\ref{assumption:conv_lipschitz},
\ref{ass:het_factor_regularity}, and
\ref{ass:trunc_err},
suppose that $\Psi^\star$ is a uniform lower bound on the average task loss, i.e.,
$\frac{1}{N}\sum_{i=1}^{N} f_i(\bm W)\ge\Psi^\star$ for all $\bm W$,
and that $\mathbb E[\Psi^{(0)}]-\Psi^\star \le D$ for some constant $D>0$.
% Then there exist constants $\bar M_\lambda>0$ and $\widetilde M_\lambda>0$, both independent of $T$ and $\tau$, such that for any $0<\eta\le 1$,
Defining $\widetilde D := D + L_c C_A C_B$, there exist constants $\bar M_\lambda>0$ and $\widetilde M_\lambda>0$, both independent of $T$ and $\tau$, such that for any $0<\eta\le 1$,
\begin{equation}
\begin{aligned}
\frac{1}{NT\tau}\sum_{t=1}^{T}\sum_{i=1}^{N}\sum_{k=0}^{\tau-1}
\mathbb E\Bigl[\bigl\|\nabla_{\bm W}\mathcal L_i^{(t)}(\bm W_i^{(t,k)})\bigr\|_{\mathrm F}^{2}\Bigr]
% \le \frac{8D}{c\eta T\tau} + \frac{\bar M_\lambda\eta}{c} + \frac{\widetilde M_\lambda\lambda\delta^{2}}{c}.
\le \frac{8\widetilde D}{c\eta T\tau} + \frac{\bar M_\lambda\eta}{c} + \frac{\widetilde M_\lambda\lambda\delta^{2}}{c}.
\end{aligned}
\label{eq:het_bound}
\end{equation}
When $\widetilde D\le \bar M_\lambda T\tau/8$, so that $\eta\le 1$, choosing $\eta = \sqrt{8\widetilde D/(\bar M_\lambda T\tau)}$ yields
\begin{equation}
\frac{1}{NT\tau}\sum_{t=1}^{T}\sum_{i=1}^{N}\sum_{k=0}^{\tau-1}
\mathbb E\Bigl[\bigl\|\nabla_{\bm W}\mathcal L_i^{(t)}(\bm W_i^{(t,k)})\bigr\|_{\mathrm F}^{2}\Bigr]
\le \frac{4\sqrt{2\widetilde D\bar M_\lambda}}{c\sqrt{T\tau}} + \frac{\widetilde M_\lambda\lambda\delta^{2}}{c}.
\label{eq:het_rate}
\end{equation}
\end{theorem}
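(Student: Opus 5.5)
I will follow the homogeneous proof of Theorem~\ref{theorem:conv_fedpa} step by step. Heterogeneity changes only two things: the bound on the reference, and the comparison between the pre- and post-aggregation regularizers.

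\textbf{Per-step descent.} First I bound the references. By the Eckart--Young property, $\|\bm X_g^{(t)}\|_{\mathrm F}\le\|\overline{\bm X}^{(t)}\|_{\mathrm F}\le C_AC_B$, and likewise $\|\bm X_{g,i}^{(t)}\|_{\mathrm F}\le\|\bm X_g^{(t)}\|_{\mathrm F}\le C_AC_B$. At $t=0$ I use the same initialization argument as in the homogeneous case, giving $C_g=C_AC_B$.

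The stochastic gradient of Eq.~\eqref{eq:het_local_obj} is then bounded by $G_\lambda=G_f+2\lambda C_AC_B$. Also, $\mathcal L_i^{(t)}$ is $(L_s+\lambda)$-smooth. Assumption~\ref{ass:het_factor_regularity} applies with $\bm Z=\bm X_{g,i}^{(t-1)}$. The one-step and one-round descent inequalities \eqref{eq:app_one_step_descent}--\eqref{eq:app_local_round_descent} therefore carry over verbatim for $\mathcal L_i^{(t)}$, with a constant of the form $M_\lambda$.

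\textbf{Round linking (the main obstacle).} The difficulty is linking rounds. With $\Psi^{(t)}$ defined via the reference $\bm X_{g,i}^{(t-1)}$, we have $\mathcal L_i^{(t+1)}(\bm W_i^{(t+1,0)})=f_i(\bm W_i^{(t)})+\frac{\lambda}{2}\|\bm X_i^{(t)}-\bm X_{g,i}^{(t)}\|^2$. This must be compared with $\frac{\lambda}{2}\|\bm X_i^{(t)}-\bm X_{g,i}^{(t-1)}\|^2$.

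I will route through the full global product. Write $\bm X_{g,i}^{(t)}=\bm X_g^{(t)}+\bm\Delta_i^{(t)}$ and use Young's inequality:
\[
\|\bm X_i^{(t)}-\bm X_{g,i}^{(t)}\|^2\le(1+\alpha)\|\bm X_i^{(t)}-\bm X_g^{(t)}\|^2+(1+\alpha^{-1})\delta^2 .
\]
The rank-$R_g$ optimality of $\bm X_g^{(t)}$ in the variance decomposition \eqref{eq:app_variance_decomposition} then gives
\[
\frac1N\sum_i\|\bm X_i^{(t)}-\bm X_g^{(t)}\|^2\le\frac1N\sum_i\|\bm X_i^{(t)}-\bm X_{g,i}^{(t-1)}\|^2 ,
\]
since $\operatorname{rank}\bm X_{g,i}^{(t-1)}\le R_i\le R_g$ makes it feasible. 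This holds for each fixed feasible comparator, so applying it clientwise requires care. The fix is to compare against the common feasible matrix $\bm X_g^{(t-1)}$ and pay another $\delta$-term through a second Young step.

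The $(1+\alpha)$ factor multiplies a quantity that is only bounded, not telescoping. I will therefore choose a fixed $\alpha$ (e.g.\ $\alpha=1$). The excess $\alpha\frac{\lambda}{2}\|\bm X_i-\bm X_g\|^2$ is then absorbed using either of two facts. The first is the uniform bound $4C_A^2C_B^2$ together with $\lambda\delta$-type slack. The second, preferably, is a potential $\Phi^{(t)}$ that weights the regularizer by a constant factor, so that the recursion becomes
\[
\mathbb E\Phi^{(t)}\le\mathbb E\Phi^{(t-1)}-\tfrac{c\eta}{8N}\sum_{i,k}\mathbb E\|\nabla\mathcal L_i^{(t)}\|^2+\tau\bar M_\lambda\eta^2+\tau\widetilde M_\lambda\lambda\delta^2 .
\]
The factor $8$ in the stated bound suggests exactly this loss from two successive factor-of-two Young splits. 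I would try that first. The key mechanism is to compare each client's post-round regularizer with the ideal objective \eqref{eq:ideal_obj} via the gradient relation \eqref{eq:grad_bias}, which yields
\[
\|\nabla\mathcal L_i^{(t)}\|^2\le2\|\nabla\mathcal L_i^{\mathrm{ideal},(t)}\|^2+2\lambda^2\delta^2 ,
\]
and conversely.

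\textbf{Telescoping.} Once the per-round recursion holds, I telescope over $t=1,\dots,T$. The lower bound $\Psi^{(T)}\ge\Psi^\star-L_cC_AC_B$ follows exactly as in \eqref{eq:app_joint_objective_lower_bound}, using nonnegativity of the regularizer and Assumption~\ref{assumption:conv_lipschitz}. I also need $\Phi^{(0)}\le$ const$\cdot\Psi^{(0)}$ from \eqref{eq:het_Psi0}. Dividing by $c\eta T\tau/8$ gives \eqref{eq:het_bound}.

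Finally, balancing the first two terms with $\eta=\sqrt{8\widetilde D/(\bar M_\lambda T\tau)}\le1$ gives $2\sqrt{8\widetilde D\bar M_\lambda/(T\tau)}/c=4\sqrt{2\widetilde D\bar M_\lambda}/(c\sqrt{T\tau})$, which is \eqref{eq:het_rate}.
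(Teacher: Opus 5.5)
Your per-step descent for the actual objective $\mathcal L_i^{(t)}$ is fine. The gap is in the round-linking step, and your proposed fix cannot produce the stated bound. You track descent of $\mathcal L_i^{(t)}$, whose reference is the client-specific truncation, and then pay at aggregation for the switch from $\bm X_{g,i}^{(t-1)}$ to $\bm X_{g,i}^{(t)}$. That payment is made once per round and carries no factor of $\eta$.

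With fixed Young parameters you leave a factor $(1+\alpha)(1+\beta)>1$ on a regularizer that does not telescope. Absorbing the excess through $\|\bm X_i^{(t)}-\bm X_g^{(t)}\|_{\mathrm F}\le 2C_AC_B$ costs order $\lambda C_A^2C_B^2$ per round, even when $\delta=0$. Reweighting the regularizer in the potential by a constant $\kappa$ cannot help, since closing the recursion would need $\kappa(1+\alpha)(1+\beta)\le 1\le\kappa$. Expanding exactly instead of using Young still costs $\lambda(4C_AC_B\delta+\delta^2)$ per round.

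After telescoping and dividing by $c\eta T\tau$, either version leaves a term of order $1/(c\eta\tau)$. That term depends on $\eta$ and $\tau$, and for $\eta\propto(T\tau)^{-1/2}$ it grows like $\sqrt{T/\tau}$. It is not the $\widetilde M_\lambda\lambda\delta^2/c$ floor in \eqref{eq:het_bound}. Even your target recursion, with additive $\tau\widetilde M_\lambda\lambda\delta^2$, divides to $8\widetilde M_\lambda\lambda\delta^2/(c\eta)$ rather than $8\widetilde M_\lambda\lambda\delta^2/c$. To match the statement, the $\delta^2$ cost must be incurred with a factor $\eta$ at every local step, not at aggregation.

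The missing idea is to move the truncation error from the aggregation step into a per-step gradient bias.

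\textbf{Per-step descent of the ideal objective.} Measure descent of $\mathcal L_i^{\mathrm{ideal},(t)}$ from \eqref{eq:ideal_obj} along the SGD trajectory of the actual objective.
\begin{itemize}
\item In the first-order term, substitute $\nabla_{\bm W}\mathcal L_i^{(t)}=\nabla_{\bm W}\mathcal L_i^{\mathrm{ideal},(t)}-\lambda\bm\Delta_i^{(t-1)}$.
\item Apply gradient preservation to the ideal objective, which is legitimate since $\|\bm X_g^{(t-1)}\|_{\mathrm F}\le C_g$. This gives $-\eta c\|\nabla_{\bm W}\mathcal L_i^{\mathrm{ideal},(t)}\|_{\mathrm F}^2$ plus a bias $\eta\lambda\delta(C_A^2+C_B^2)\|\nabla_{\bm W}\mathcal L_i^{\mathrm{ideal},(t)}\|_{\mathrm F}$.
\item Young turns that bias into $\frac{\eta c}{2}\|\nabla_{\bm W}\mathcal L_i^{\mathrm{ideal},(t)}\|_{\mathrm F}^2+\eta\lambda^2\delta^2C_1$.
\item Absorbing the $\eta^2$ interaction term costs another $\frac{\eta c}{4}$, leaving coefficient $c/4$.
\end{itemize}

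\textbf{Exact round linking.} Build the potential $\Phi^{(t)}$ from the untruncated $\bm X_g^{(t)}$. Then $\Phi^{(t-1)}=\frac1N\sum_i\mathcal L_i^{\mathrm{ideal},(t)}(\bm W_i^{(t-1)})$ holds with equality. Also $\Phi^{(t)}\le\frac1N\sum_i\mathcal L_i^{\mathrm{ideal},(t)}(\bm W_i^{(t)})$, by rank-$R_g$ optimality against the single common comparator $\bm X_g^{(t-1)}$. Finally, $\Phi^{(0)}=\Psi^{(0)}$ by \eqref{eq:het_Psi0}.

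\textbf{Conversion at the end.} Only after telescoping do you convert back, via $\|\nabla_{\bm W}\mathcal L_i^{(t)}\|_{\mathrm F}^2\le 2\|\nabla_{\bm W}\mathcal L_i^{\mathrm{ideal},(t)}\|_{\mathrm F}^2+2\lambda^2\delta^2$. The factor $8$ is this $2$ times the $4$ coming from $c/4$. It does not come from two Young splits on the regularizer.
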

Thus, up to the additive constant governed by $\delta$, FedPA‑LoRA
retains the $\mathcal O\bigl((T\tau)^{-1/2}\bigr)$ averaged stationarity
rate of the homogeneous case.

\begin{proposition}[Global LoRA loss]\label{prop:het_global_loss}
% Under the additional $L_c$-Lipschitz assumption,
Under Assumption~\ref{assumption:conv_lipschitz},
for any round $t \ge 1$, the global model from the previous round,
$\bm W_g^{(t-1)} = \bm W_0 + \bm X_g^{(t-1)}$, satisfies
\begin{equation}
F(\bm W_g^{(t-1)}) \le \Psi^{(t)} + \frac{L_c^{2}}{2\lambda} + L_c \delta,
\qquad
F(\bm W) = \frac{1}{N}\sum_{i=1}^{N} f_i(\bm W).
\label{eq:het_global_loss}
\end{equation}
\end{proposition}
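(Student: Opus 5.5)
We mirror the proof of Proposition~\ref{proposition:conv_global_loss}, with one extra triangle-inequality step to absorb the truncation gap. In the heterogeneous definition~\eqref{eq:het_Psi}, $\Psi^{(t)}$ penalizes the distance between the end-of-round local product $\bm X_i^{(t)}$ and the \emph{client-specific} reference $\bm X_{g,i}^{(t-1)}$ that was actually used during round $t$. This is why the statement compares against the previous-round global model $\bm W_g^{(t-1)}$ rather than $\bm W_g^{(t)}$.

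\textbf{Step 1: Lipschitz comparison.} For each client $i$, Assumption~\ref{assumption:conv_lipschitz} gives
\[
f_i(\bm W_g^{(t-1)})\le f_i(\bm W_i^{(t)})+L_c\bigl\|\bm X_g^{(t-1)}-\bm X_i^{(t)}\bigr\|_{\mathrm F},
\]
since $\bm W_0$ cancels in $\bm W_g^{(t-1)}-\bm W_i^{(t)}$.

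\textbf{Step 2: Insert the reference.} Write $\bm X_g^{(t-1)}-\bm X_i^{(t)}=(\bm X_{g,i}^{(t-1)}-\bm X_i^{(t)})-\bm\Delta_i^{(t-1)}$. The triangle inequality and Assumption~\ref{ass:trunc_err} then give
\[
L_c\bigl\|\bm X_g^{(t-1)}-\bm X_i^{(t)}\bigr\|_{\mathrm F}\le L_c\bigl\|\bm X_i^{(t)}-\bm X_{g,i}^{(t-1)}\bigr\|_{\mathrm F}+L_c\delta.
\]
Assumption~\ref{ass:trunc_err} is stated for rounds $t$, and we apply it at index $t-1\ge 0$. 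This is covered by the explicit choice of $\delta$, whose maximum ranges over $t\in\{0,\dots,T-1\}$.

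\textbf{Step 3: Young's inequality.} Apply the scalar Young inequality~\eqref{eq:app_young_scalar} with $a=\|\bm X_i^{(t)}-\bm X_{g,i}^{(t-1)}\|_{\mathrm F}$ to obtain
\[
L_c a\le \frac{\lambda}{2}a^2+\frac{L_c^2}{2\lambda}.
\]

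\textbf{Step 4: Average over clients.} Summing Steps 1--3 and averaging over $i$ yields
\[
F(\bm W_g^{(t-1)})\le \frac1N\sum_{i=1}^N\Bigl[f_i(\bm W_i^{(t)})+\frac{\lambda}{2}\bigl\|\bm X_i^{(t)}-\bm X_{g,i}^{(t-1)}\bigr\|_{\mathrm F}^2\Bigr]+\frac{L_c^2}{2\lambda}+L_c\delta,
\]
and the bracketed average is exactly $\Psi^{(t)}$ by~\eqref{eq:het_Psi}.

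There is no substantive obstacle. The only point needing care is the index alignment between the reference $\bm X_{g,i}^{(t-1)}$ used in $\Psi^{(t)}$ and the global product $\bm X_g^{(t-1)}$. Choosing to split around $\bm X_{g,i}^{(t-1)}$, rather than applying Young's inequality to the full distance $\|\bm X_g^{(t-1)}-\bm X_i^{(t)}\|_{\mathrm F}$, keeps the truncation cost linear, $L_c\delta$, instead of producing a $\lambda\delta^2$ term.
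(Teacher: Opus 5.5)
Your proof is correct and matches the paper's argument step for step. You use the Lipschitz comparison with $\bm W_0$ cancelling, then a triangle inequality through the client reference $\bm X_{g,i}^{(t-1)}$ bounded by $\delta$ via Assumption~\ref{ass:trunc_err}, then Young's inequality with $a=\|\bm X_i^{(t)}-\bm X_{g,i}^{(t-1)}\|_{\mathrm F}$ and averaging to recover $\Psi^{(t)}$; the only difference, immaterial, is that you apply Young's inequality per client before averaging whereas the paper averages first.
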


\subsubsection{Preliminary Lemmas}
We begin with two intermediate lemmas that we build on to prove Theorem \ref{thm:het_conv}.
\begin{lemma}[Boundedness and smoothness]\label{lem:bounds}
There exist constants $C_X, C_g, L, G>0$ such that for all rounds
$t$, clients $i$, and local steps $k$, the following properties hold.
\begin{enumerate}
  \item[(i)] \textbf{Bounded products.}
    $\|\bm X_i^{(t,k)}\|_{\mathrm F} \le C_X$,
    $\|\bm X_{g,i}^{(t)}\|_{\mathrm F} \le C_g$,
    $\|\bm X_g^{(t)}\|_{\mathrm F} \le C_g$.

  \item[(ii)] \textbf{Smoothness.}
    The objective $\mathcal L_i^{(t)}$ is $L$-smooth in $\bm W$:
    for any $\bm W,\bm W'$,
    \begin{align}
    \bigl\|\nabla_{\bm W}\mathcal L_i^{(t)}(\bm W) - \nabla_{\bm W}\mathcal L_i^{(t)}(\bm W')\bigr\|_{\mathrm F}
    \le L\,\|\bm W-\bm W'\|_{\mathrm F}.
    \end{align}
    The ideal objective $\mathcal L_i^{\mathrm{ideal},(t)}$ is also
    $L$-smooth in $\bm W$: for any $\bm W,\bm W'$,
    \begin{align}
    \bigl\|\nabla_{\bm W}\mathcal L_i^{\mathrm{ideal},(t)}(\bm W) - \nabla_{\bm W}\mathcal L_i^{\mathrm{ideal},(t)}(\bm W')\bigr\|_{\mathrm F}
    \le L\,\|\bm W-\bm W'\|_{\mathrm F},
    \end{align}
where $L = L_s+\lambda$.
\item[(iii)] \textbf{Bounded factor gradients.}
   For any mini‑batch $\xi$, the stochastic gradients of
    $\mathcal L_i^{(t)}$ with respect to the LoRA factors,
    $\tilde\nabla_{\bm B}\mathcal L_i^{(t)}$ and
    $\tilde\nabla_{\bm A}\mathcal L_i^{(t)}$, satisfy
    \begin{align}
    \|\tilde\nabla_{\bm B}\mathcal L_i^{(t)}\|_{\mathrm F}^{2}
    + \|\tilde\nabla_{\bm A}\mathcal L_i^{(t)}\|_{\mathrm F}^{2}
    \le G^{2}.
    \end{align}
\end{enumerate}
\end{lemma}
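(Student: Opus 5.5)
The plan is to prove the three parts of Lemma~\ref{lem:bounds} in order, mirroring the homogeneous argument in Appendix~\ref{appendix:convergence_analysis}, with explicit constants $C_X=C_AC_B$, $C_g=C_AC_B$, $L=L_s+\lambda$, and $G$ built from $G_f$, $\lambda$, $C_X$, $C_g$, $C_A$, $C_B$.

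For (i), submultiplicativity and Assumption~\ref{ass:het_factor_regularity} give
\begin{equation*}
\|\bm X_i^{(t,k)}\|_{\mathrm F}\le\|\bm B_i^{(t,k)}\|_{\mathrm F}\|\bm A_i^{(t,k)}\|_{\mathrm F}\le C_AC_B=:C_X .
\end{equation*}
For the global product with $t\ge1$, $\bm X_g^{(t)}$ is a truncated SVD of $\overline{\bm X}^{(t)}$, which discards singular values. Hence $\|\bm X_g^{(t)}\|_{\mathrm F}\le\|\overline{\bm X}^{(t)}\|_{\mathrm F}\le N^{-1}\sum_i\|\bm X_i^{(t)}\|_{\mathrm F}\le C_X$. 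The same truncation argument applied to $\bm X_g^{(t)}$ gives $\|\bm X_{g,i}^{(t)}\|_{\mathrm F}\le\|\bm X_g^{(t)}\|_{\mathrm F}$.

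The base case $t=0$ needs separate handling. $\bm X_g^{(0)}=\bm B_g^{(0)}\bm A_g^{(0)}$ is the rank-$r_g$ initialization, and each client holds a truncation of it. Since $\bm B_g^{(0)}=\bm 0$, we get $\bm X_g^{(0)}=\bm 0$ and the bound is trivial. If one prefers not to rely on the zero initialization, one can instead take $C_g:=\max\{C_AC_B,\|\bm X_g^{(0)}\|_{\mathrm F}\}$, which is still a constant independent of $t$, $\tau$, and $T$.

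For (ii), the gradient difference of $\mathcal L_i^{(t)}$ at $\bm W$ and $\bm W'$ equals $\nabla f_i(\bm W)-\nabla f_i(\bm W')+\lambda(\bm W-\bm W')$, because the reference term $\bm X_{g,i}^{(t-1)}$ cancels. The triangle inequality and Assumption~\ref{assumption:conv_smoothness} then give the constant $L_s+\lambda$. The ideal objective $\mathcal L_i^{\mathrm{ideal},(t)}$ is handled identically; equivalently, Eq.~\eqref{eq:grad_bias} shows that the two gradients differ by the constant matrix $\lambda\bm\Delta_i^{(t-1)}$, so they share the same Lipschitz constant.

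For (iii), the stochastic weight-space gradient is $\nabla f_i(\bm W;\xi)+\lambda(\bm X_i^{(t,k)}-\bm X_{g,i}^{(t-1)})$. By Assumption~\ref{assumption:conv_stochastic_gradients} and part (i), its norm is at most $G_W:=G_f+\lambda(C_X+C_g)$. By the chain rule (Eq.~\eqref{eq:app_factor_gradients}), the factor gradients are this matrix times $\bm A^\top$ and $\bm B^\top$ times it, respectively. Submultiplicativity therefore gives the sum of squared factor-gradient norms at most $G_W^2(C_A^2+C_B^2)=:G^2$.

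The only delicate point is the $t=0$ case of the global bound. For $t\ge1$ the claim follows directly from the norm-nonexpansiveness of truncated SVD, while $t=0$ relies on the initialization choice. Beyond that, one must make sure that $C_g$ is fixed before it is used in Assumption~\ref{ass:het_factor_regularity}, which is consistent because $C_g$ depends only on $C_A$, $C_B$, and the initialization.
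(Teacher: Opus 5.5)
Your proposal is correct and follows the paper's proof essentially step for step, with the same constants $C_X=C_g=C_AC_B$, $L=L_s+\lambda$, and $G$ built from $G_f+\lambda(C_X+C_g)$ multiplied by the factor bounds. The differences are minor. At $t=0$ you use the zero initialization $\bm B_g^{(0)}=\bm 0$, whereas the paper applies the factor bounds of Assumption~\ref{ass:het_factor_regularity} to the maximal-rank client, whose initial factors are the full global factors. In addition, your $G^2=G_W^2(C_A^2+C_B^2)$ omits the paper's unnecessary factor of $2$.
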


\begin{proof}
\textbf{Bounded products.} From the factor bounds (Assumption~\ref{ass:het_factor_regularity}), we have for any $\bm W = \bm W_0 + \bm B\bm A$ with $\|\bm A\|_{\mathrm F}\le C_A$ and $\|\bm B\|_{\mathrm F}\le C_B$,
\begin{equation}
\|\bm X\|_{\mathrm F} = \|\bm B\bm A\|_{\mathrm F} \le \|\bm B\|_{\mathrm F}\|\bm A\|_{\mathrm F} \le C_A C_B =: C_X.
\end{equation}
Thus, all local products are bounded by $C_X$.

For the global products $\bm X_g^{(t)}$ (with $t\ge 1$), by Eq.~\eqref{eq:het_global_proj}, $\bm X_g^{(t)}$ is the best rank‑$R_g$ approximation of the average product $\overline{\bm X}^{(t)}$. Since the Frobenius norm of a truncated SVD does not exceed the norm of the original matrix (indeed, for any matrix $\bm A$, $\|\bm A\|_{\mathrm F}^{2} = \sum_{k} \sigma_k^2(\bm A)$, and truncating the SVD removes only nonnegative terms from this sum), we have 
\begin{equation}
\|\bm X_g^{(t)}\|_{\mathrm F} \le \|\overline{\bm X}^{(t)}\|_{\mathrm F} \le \frac{1}{N}\sum_{i=1}^{N}\|\bm X_i^{(t)}\|_{\mathrm F} \le C_X.
\end{equation}
For the references $\bm X_{g,i}^{(t)}$, by Eq.~\eqref{eq:het_ref_proj}, each is the best rank‑$R_i$ approximation of $\bm X_g^{(t)}$. Again, truncation cannot increase the Frobenius norm, so
\begin{equation}
\|\bm X_{g,i}^{(t)}\|_{\mathrm F} \le \|\bm X_g^{(t)}\|_{\mathrm F} \le C_X.
\end{equation}
At initialization ($t=0$), the global factors $\bm B_g^{(0)}$ and $\bm A_g^{(0)}$ are constructed with rank $r_g = \max_i r_i$. The client with rank $r_g$ receives the full global factors, so by Assumption~\ref{ass:het_factor_regularity}, $\|\bm B_g^{(0)}\|_{\mathrm F}\le C_B$ and $\|\bm A_g^{(0)}\|_{\mathrm F}\le C_A$. The sub‑blocks sent to other clients are subsets of these factors, so their norms are also bounded. Hence $\bm X_g^{(0)}$ and all references $\bm X_{g,i}^{(0)}$ have Frobenius norm at most $C_X$. Therefore, for all rounds $t\ge 0$ and clients $i$,
\begin{equation}
\|\bm X_i^{(t,k)}\|_{\mathrm F}\le C_X,\qquad \|\bm X_g^{(t)}\|_{\mathrm F}\le C_X,\qquad \|\bm X_{g,i}^{(t)}\|_{\mathrm F}\le C_X.
\end{equation}
We set $C_g = C_X = C_A C_B$, because the rank‑$R_g$ projection does not increase the Frobenius norm, and the references $\bm X_{g,i}^{(t)}$ are truncations of $\bm X_g^{(t)}$; the initial global product and round‑one references are also bounded by $C_X$ via the same factor bounds.

\textbf{Smoothness.} The local objective $\mathcal L_i^{(t)}(\bm W) = f_i(\bm W) + \frac{\lambda}{2}\|\bm W - \bm W_0 - \bm X_{g,i}^{(t-1)}\|_{\mathrm F}^{2}$ is $L_s$-smooth in $f_i$ and quadratic in the regularizer, hence $L = (L_s+\lambda)$-smooth. The ideal objective $\mathcal L_i^{\mathrm{ideal},(t)}$ differs only by the reference matrix, which does not affect the Hessian, so it is also $L$-smooth.

\textbf{Bounded factor gradients.} For a mini‑batch $\xi$, the stochastic gradient of $\mathcal L_i^{(t)}$ with respect to $\bm W$ is
\begin{equation}
\tilde\nabla_{\bm W}\mathcal L_i^{(t)}(\bm W;\xi) = \tilde\nabla_{\bm W}f_i(\bm W;\xi) + \lambda(\bm W - \bm W_0 - \bm X_{g,i}^{(t-1)}),
\end{equation}
where $\tilde\nabla_{\bm W}f_i$ is bounded by $G_f$ (Assumption~\ref{assumption:conv_stochastic_gradients}). Moreover,
\begin{equation}
\|\bm W - \bm W_0 - \bm X_{g,i}^{(t-1)}\|_{\mathrm F} \le \|\bm B\bm A\|_{\mathrm F} + \|\bm X_{g,i}^{(t-1)}\|_{\mathrm F} \le C_X + C_g.
\end{equation}
Thus $\|\tilde\nabla_{\bm W}\mathcal L_i^{(t)}\|_{\mathrm F} \le G_f + \lambda(C_X + C_g)$. By the chain rule, the stochastic factor gradients are $\tilde\nabla_{\bm B}\mathcal L_i^{(t)} = \tilde\nabla_{\bm W}\mathcal L_i^{(t)} \bm A^\top$ and $\tilde\nabla_{\bm A}\mathcal L_i^{(t)} = \bm B^\top \tilde\nabla_{\bm W}\mathcal L_i^{(t)}$. Using the factor bounds,
\begin{equation}
\|\tilde\nabla_{\bm B}\mathcal L_i^{(t)}\|_{\mathrm F} \le \bigl(G_f + \lambda(C_X+C_g)\bigr) C_A, \quad
\|\tilde\nabla_{\bm A}\mathcal L_i^{(t)}\|_{\mathrm F} \le \bigl(G_f + \lambda(C_X+C_g)\bigr) C_B.
\end{equation}
Therefore,
\begin{equation}
\|\tilde\nabla_{\bm B}\mathcal L_i^{(t)}\|_{\mathrm F}^{2} + \|\tilde\nabla_{\bm A}\mathcal L_i^{(t)}\|_{\mathrm F}^{2}
\le 2\bigl(G_f + \lambda(C_X+C_g)\bigr)^{2}(C_A^{2}+C_B^{2}) =: G^{2}.
\end{equation}
This provides the uniform bound on the stochastic factor gradients.
\end{proof}

\begin{lemma}[Weight‑space descent]\label{lem:descent}
Consider a single local SGD step starting from $(\bm B,\bm A)$ with
$\bm W = \bm W_0 + \bm B\bm A$, using a mini‑batch $\xi$ to update the
factors on the actual objective $\mathcal L_i^{(t)}$:
\begin{align}
\bm B' = \bm B - \eta\,\tilde\nabla_{\bm B}\mathcal L_i^{(t)}(\bm B,\bm A;\xi),\qquad
\bm A' = \bm A - \eta\,\tilde\nabla_{\bm A}\mathcal L_i^{(t)}(\bm B,\bm A;\xi),
\end{align}
and let $\bm W' = \bm W_0 + \bm B'\bm A'$. Then, there exist constants $C_1, C_2>0$ such that for any
$0<\eta\le 1$, we have
\begin{equation}
\mathbb E_{\xi}\bigl[\mathcal L_i^{\mathrm{ideal},(t)}(\bm W')\bigr]
\le\ \mathcal L_i^{\mathrm{ideal},(t)}(\bm W) 
 - \frac{\eta c}{4}\bigl\|\nabla_{\bm W}\mathcal L_i^{\mathrm{ideal},(t)}(\bm W)\bigr\|_{\mathrm F}^{2}
   + \eta\lambda^{2}\delta^{2} C_1 + \eta^{2} C_2 .
\label{eq:descent_step}
\end{equation}
\end{lemma}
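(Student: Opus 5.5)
We follow the homogeneous one-step argument, but apply the descent lemma to the ideal objective $\mathcal L_i^{\mathrm{ideal},(t)}$ while the stochastic direction comes from the actual objective $\mathcal L_i^{(t)}$. Abbreviate $\bm G=\nabla_{\bm W}\mathcal L_i^{\mathrm{ideal},(t)}(\bm W)$ and $\tilde{\bm G}=\tilde\nabla_{\bm W}\mathcal L_i^{(t)}(\bm W;\xi)$. By Eq.~\eqref{eq:grad_bias} and Assumption~\ref{assumption:conv_stochastic_gradients}, $\mathbb E_\xi[\tilde{\bm G}]=\bm G-\lambda\bm\Delta_i^{(t-1)}$.

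\textbf{Step 1 (smoothness).} Lemma~\ref{lem:bounds}(ii) gives $L$-smoothness of the ideal objective, so
\begin{equation}
\mathcal L_i^{\mathrm{ideal},(t)}(\bm W')\le\mathcal L_i^{\mathrm{ideal},(t)}(\bm W)+\langle\bm G,\bm W'-\bm W\rangle_{\mathrm F}+\tfrac{L}{2}\|\bm W'-\bm W\|_{\mathrm F}^2 .
\end{equation}

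\textbf{Step 2 (expand the increment).} As in Eq.~\eqref{eq:app_product_increment},
\begin{equation}
\bm W'-\bm W=-\eta\tilde{\bm G}\bm A^\top\bm A-\eta\bm B\bm B^\top\tilde{\bm G}+\eta^2\tilde{\bm G}\bm A^\top\bm B^\top\tilde{\bm G}.
\end{equation}

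\textbf{Step 3 (bound the squared increment and the $\eta^2$ cross term).} Using $\|\tilde{\bm G}\|_{\mathrm F}\le G_f+2\lambda C_X$ (Lemma~\ref{lem:bounds}) and the factor bounds exactly as in Eqs.~\eqref{eq:app_first_increment_term_bound}--\eqref{eq:app_increment_squared_simplified}, the squared increment is $O(\eta^2)$ for $\eta\le1$. The $\eta^2$ part of the inner product is likewise $O(\eta^2)$, since $\|\bm G\|_{\mathrm F}$ is also bounded. Both contributions go into $C_2$.

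\textbf{Step 4 (first-order terms, the main obstacle).} Taking expectation, the first-order part of the inner product equals
\begin{equation}
-\eta\|\bm G\bm A^\top\|_{\mathrm F}^2-\eta\|\bm B^\top\bm G\|_{\mathrm F}^2+\eta\lambda\langle\bm G\bm A^\top,\bm\Delta\bm A^\top\rangle_{\mathrm F}+\eta\lambda\langle\bm B^\top\bm G,\bm B^\top\bm\Delta\rangle_{\mathrm F}.
\end{equation}
Apply Young's inequality $ab\le\tfrac12a^2+\tfrac12b^2$ to the two bias terms. This absorbs half of each negative term and leaves
\begin{equation}
\tfrac{\eta\lambda^2}{2}\bigl(\|\bm\Delta\bm A^\top\|_{\mathrm F}^2+\|\bm B^\top\bm\Delta\|_{\mathrm F}^2\bigr)\le\tfrac{\eta\lambda^2\delta^2}{2}(C_A^2+C_B^2),
\end{equation}
which defines $C_1$. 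The remainder is $-\tfrac{\eta}{2}\bigl(\|\bm G\bm A^\top\|_{\mathrm F}^2+\|\bm B^\top\bm G\|_{\mathrm F}^2\bigr)$.

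The delicate point is applying gradient preservation to $\bm G$. The ideal objective has reference $\bm Z=\bm X_g^{(t-1)}$ with $\|\bm Z\|_{\mathrm F}\le C_g$ by Lemma~\ref{lem:bounds}(i). Assumption~\ref{ass:het_factor_regularity} is stated for every such $\bm Z$, so it yields $\ge c\|\bm G\|_{\mathrm F}^2$. This gives $-\tfrac{\eta c}{2}\|\bm G\|_{\mathrm F}^2$, which is stronger than the claimed $\tfrac{c}{4}$.

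If one instead insisted on applying preservation only to the actual gradient, write $\bm G=\nabla\mathcal L_i^{(t)}+\lambda\bm\Delta$ and use $\|\bm G\|_{\mathrm F}^2\le2\|\nabla\mathcal L_i^{(t)}\|_{\mathrm F}^2+2\lambda^2\delta^2$. This is where the factor $\tfrac14$ and an extra $\lambda^2\delta^2$ term would arise, also absorbed into $C_1$.

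Combining Steps 1--4 gives Eq.~\eqref{eq:descent_step}.
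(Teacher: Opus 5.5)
Your proof is correct and follows the paper's argument: the descent lemma on $\mathcal L_i^{\mathrm{ideal},(t)}$, the expansion of the product increment, unbiasedness with bias $-\lambda\bm\Delta_i^{(t-1)}$, gradient preservation applied to the ideal gradient via Assumption~\ref{ass:het_factor_regularity} with $\bm Z=\bm X_g^{(t-1)}$, and $O(\eta^2)$ bounds on the second-order terms. The differences are only in bookkeeping: you apply Young's inequality to $\|\bm G\bm A^\top\|_{\mathrm F}$ and $\|\bm B^\top\bm G\|_{\mathrm F}$ before invoking preservation, which gives a $c$-free $C_1=(C_A^2+C_B^2)/2$ instead of the paper's $(C_A^2+C_B^2)^2/(2c)$; and you bound the $\eta^2$ cross term directly using boundedness of $\|\bm G\|_{\mathrm F}$, whereas the paper absorbs it by AM--GM into $\tfrac{\eta c}{4}\|\bm G\|_{\mathrm F}^2$, and that absorption, not the route sketched in your last paragraph, is where the paper's $\tfrac14$ comes from.
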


\begin{proof}
By the $L$-smoothness of $\mathcal L_i^{\mathrm{ideal},(t)}$ in $\bm W$ (Lemma~\ref{lem:bounds}), we have
\begin{equation}
\mathcal L_i^{\mathrm{ideal},(t)}(\bm W')
\le \mathcal L_i^{\mathrm{ideal},(t)}(\bm W)
   + \bigl\langle\nabla_{\bm W}\mathcal L_i^{\mathrm{ideal},(t)},\,
            \Delta\bm W\bigr\rangle
   + \frac{L}{2}\|\Delta\bm W\|_{\mathrm F}^{2},
\end{equation}
where $\Delta\bm W = \bm W' - \bm W$. We now bound the two terms on the right‑hand side separately.

\medskip
\noindent\textbf{Step 1: Expected inner product.}
Expanding the product of the updated LoRA factors gives the stochastic weight change 
\begin{equation}
\Delta\bm W = -\eta(\bm g_B \bm A + \bm B \bm g_A) + \eta^{2}\,\bm g_B \bm g_A,
\end{equation}
where $\bm g_B = \tilde\nabla_{\bm B}\mathcal L_i^{(t)}$ and $\bm g_A = \tilde\nabla_{\bm A}\mathcal L_i^{(t)}$ are the stochastic gradients with respect to the LoRA factors. Taking expectation over the mini‑batch $\xi$, and using the unbiasedness of the stochastic gradients ($\mathbb E[\bm g_B] = \nabla_{\bm B}\mathcal L_i^{(t)} =: \bm G_B$ and $\mathbb E[\bm g_A] = \nabla_{\bm A}\mathcal L_i^{(t)} =: \bm G_A$), yields
\begin{equation}
\mathbb E[\Delta\bm W] = -\eta(\bm G_B\bm A + \bm B\bm G_A) + \eta^{2}\,\mathbb E[\bm g_B\bm g_A].
\end{equation}

Using the chain rule and the gradient bias relation $\nabla_{\bm W}\mathcal L_i^{(t)} = \nabla_{\bm W}\mathcal L_i^{\mathrm{ideal},(t)} - \lambda \bm\Delta$ (with $\|\bm\Delta\|_{\mathrm F}\le\delta$), we expand the inner product:
\begin{align*}
\bigl\langle \nabla_{\bm W}\mathcal L_i^{\mathrm{ideal},(t)}, \bm G_B\bm A + \bm B\bm G_A \bigr\rangle
&= \bigl\|\nabla_{\bm W}\mathcal L_i^{\mathrm{ideal},(t)}\bm A^\top\bigr\|_{\mathrm F}^{2}
   + \bigl\|\bm B^\top\nabla_{\bm W}\mathcal L_i^{\mathrm{ideal},(t)}\bigr\|_{\mathrm F}^{2}   \\ &-\lambda\bigl\langle \nabla_{\bm W}\mathcal L_i^{\mathrm{ideal},(t)}, \bm\Delta \bm A^\top\bm A + \bm B\bm B^\top\bm\Delta \bigr\rangle.
\end{align*}
By Cauchy–Schwarz, submultiplicativity, and the factor bounds,
\begin{equation}
\bigl|\bigl\langle \nabla_{\bm W}\mathcal L_i^{\mathrm{ideal},(t)}, \bm\Delta \bm A^\top\bm A + \bm B\bm B^\top\bm\Delta \bigr\rangle\bigr|
\le \delta(C_A^2+C_B^2)\,\|\nabla_{\bm W}\mathcal L_i^{\mathrm{ideal},(t)}\|_{\mathrm F}.
\end{equation}
Applying Young's inequality in the form $ab \le \frac{\epsilon}{2}a^2 + \frac{1}{2\epsilon}b^2$, 
with $a = \|\nabla_{\bm W}\mathcal L_i^{\mathrm{ideal},(t)}\|_{\mathrm F}$, 
$b = \lambda\delta(C_A^2+C_B^2)$, and choosing $\epsilon = c$ 
(so that the coefficient of $\|\nabla\|^2$ becomes $c/2$), gives
\begin{equation}
\lambda \delta(C_A^2+C_B^2)\|\nabla_{\bm W}\mathcal L_i^{\mathrm{ideal},(t)}\|_{\mathrm F}
\le \frac{c}{2}\|\nabla_{\bm W}\mathcal L_i^{\mathrm{ideal},(t)}\|_{\mathrm F}^{2}
   + \frac{\lambda^2\delta^2(C_A^2+C_B^2)^2}{2c}.
\end{equation}
Thus, the perturbation contributes exactly $-\frac{c}{2}\|\nabla_{\bm W}\mathcal L_i^{\mathrm{ideal},(t)}\|_{\mathrm F}^{2}$ plus a term controlled by $C_1 = \frac{(C_A^2+C_B^2)^2}{2c}$.

Next, the $\eta^{2}$ term from the stochastic interaction is bounded using Cauchy–Schwarz and the bound $\mathbb E[\|\bm g_B\bm g_A\|_{\mathrm F}] \le \sqrt{\mathbb E[\|\bm g_B\|_{\mathrm F}^2]\,\mathbb E[\|\bm g_A\|_{\mathrm F}^2]} \le G^{2}$ from Lemma~\ref{lem:bounds}(iii):
\begin{equation}
\bigl|\langle\nabla_{\bm W}\mathcal L_i^{\mathrm{ideal},(t)},\, \eta^2\mathbb E[\bm g_B\bm g_A]\rangle\bigr| \le \eta^2 G^2\|\nabla_{\bm W}\mathcal L_i^{\mathrm{ideal},(t)}\|_{\mathrm F}.
\end{equation}
By AM–GM, we have
\begin{align}
\eta^{2} G^{2}\,\|\nabla_{\bm W}\mathcal L_i^{\mathrm{ideal},(t)}\|_{\mathrm F}
\le \frac{\eta c}{4}\|\nabla_{\bm W}\mathcal L_i^{\mathrm{ideal},(t)}\|_{\mathrm F}^{2}
   + \frac{\eta^{3} G^{4}}{c}
\le \frac{\eta c}{4}\|\nabla_{\bm W}\mathcal L_i^{\mathrm{ideal},(t)}\|_{\mathrm F}^{2}
   + \frac{\eta^{2} G^{4}}{c},
\end{align}
where the last inequality uses $\eta\le1$. Combining these results, the expected inner product is bounded by
\begin{equation}
\mathbb E_{\xi}\bigl[\langle \nabla_{\bm W}\mathcal L_i^{\mathrm{ideal},(t)}, \Delta\bm W \rangle\bigr]
\le -\frac{\eta c}{4}\|\nabla_{\bm W}\mathcal L_i^{\mathrm{ideal},(t)}\|_{\mathrm F}^{2}
   + \eta\lambda^{2}\delta^{2} C_1 + \eta^{2}\frac{G^4}{c}.
\label{eq:inner_prod_bound}
\end{equation}

\medskip
\noindent\textbf{Step 2: Expected squared increment.}
Expanding $\|\Delta \bm W\|_{\mathrm F}^2$ and using $\|a+b\|_{\mathrm F}^{2}\le 2\|a\|_{\mathrm F}^{2}+2\|b\|_{\mathrm F}^{2}$, we obtain
\begin{align*}
\|\Delta\bm W\|_{\mathrm F}^{2}
\le 2\eta^{2}\|\bm g_B\bm A + \bm B\bm g_A\|_{\mathrm F}^{2}
   + 2\eta^{4}\|\bm g_B\bm g_A\|_{\mathrm F}^{2} \le 4\eta^{2}\bigl(\|\bm g_B\|_{\mathrm F}^{2}\|\bm A\|_{\mathrm F}^{2} + \|\bm B\|_{\mathrm F}^{2}\|\bm g_A\|_{\mathrm F}^{2}\bigr) + 2\eta^{4}G^{4}.
\end{align*}
Taking expectation and using $\mathbb E[\|\bm g_B\|_{\mathrm F}^2], \mathbb E[\|\bm g_A\|_{\mathrm F}^2]\le G^{2}$, together with the factor bounds $\|\bm A\|_{\mathrm F}\le C_A$, $\|\bm B\|_{\mathrm F}\le C_B$, gives
\begin{equation}
\frac{L}{2}\mathbb E_{\xi}[\|\Delta \bm W\|_{\mathrm F}^2]
\le \eta^2 \frac{L}{2}\bigl(4G^2(C_A^2+C_B^2)+2G^4\bigr).
\end{equation}

\medskip
\noindent\textbf{Step 3: Combine and conclude.}
Merging the $\eta^2\frac{G^4}{c}$ term from Step 1 with the $\eta^2$ term from Step 2, we define
\begin{equation}
C_2 = \frac{G^4}{c} + \frac{L}{2}\bigl(4G^2(C_A^2+C_B^2)+2G^4\bigr).
\end{equation}
Substituting the bounds from Steps 1 and 2 into the original smoothness expansion yields the claimed inequality \eqref{eq:descent_step}.

\end{proof}
\subsubsection{Proof of Theorem~\ref{thm:het_conv}}
This proof follows the same reasoning as in the homogeneous case, but uses the
truncated references.
\begin{proof}
  Define the ideal local objective
\begin{equation}
\mathcal L_i^{\mathrm{ideal},(t)}(\bm W) = f_i(\bm W) + \frac{\lambda}{2}\bigl\|\bm W - \bm W_0 - \bm X_g^{(t-1)}\bigr\|_{\mathrm F}^{2},
\end{equation}
and recall that from \eqref{eq:grad_bias} their gradients differ by
$-\lambda\bm\Delta_i^{(t-1)}$ with
$\|\bm\Delta_i^{(t-1)}\|_{\mathrm F}\le\delta$.

Apply Lemma~\ref{lem:descent} to each local step $k=0,\dots,\tau-1$ of client $i$ in round $t$. For each step, take the conditional expectation given the state at the beginning of that step. This yields
\begin{equation}
\mathbb E_{\xi_i^{(t,k)}}\bigl[\mathcal L_i^{\mathrm{ideal},(t)}(\bm W_i^{(t,k+1)})\bigr]
\le \mathcal L_i^{\mathrm{ideal},(t)}(\bm W_i^{(t,k)})
   - \frac{\eta c}{4}\bigl\|\nabla_{\bm W}\mathcal L_i^{\mathrm{ideal},(t)}(\bm W_i^{(t,k)})\bigr\|_{\mathrm F}^{2}
   + \eta\lambda^{2}\delta^{2} C_1 + \eta^{2} C_2 .
\end{equation}
Summing this inequality over $k=0,\dots,\tau-1$ and taking the total expectation over all mini‑batch noise, we obtain
\begin{equation}
\begin{aligned}
\mathbb E\bigl[\mathcal L_i^{\mathrm{ideal},(t)}(\bm W_i^{(t)})\bigr]
\le
\mathbb E\bigl[\mathcal L_i^{\mathrm{ideal},(t)}(\bm W_i^{(t-1)})\bigr]
   - \frac{\eta c}{4}\sum_{k=0}^{\tau-1}
      \mathbb E\Bigl[\bigl\|\nabla_{\bm W}\mathcal L_i^{\mathrm{ideal},(t)}(\bm W_i^{(t,k)})\bigr\|_{\mathrm F}^{2}\Bigr] 
 + \tau\eta\lambda^{2}\delta^{2} C_1 + \tau\eta^{2} C_2 ,
\end{aligned}
\label{eq:tau_descent_proof}
\end{equation}
where the first term on the right is obtained from the initial condition $\bm W_i^{(t,0)}=\bm W_i^{(t-1)}$ (local factor preservation). Now introduce the auxiliary joint objective based on the full global product:
\begin{equation}
\Phi^{(t)} = \frac{1}{N}\sum_{i=1}^{N}\left[ f_i(\bm W_i^{(t)}) + \frac{\lambda}{2}\bigl\|\bm X_i^{(t)} - \bm X_g^{(t)}\bigr\|_{\mathrm F}^{2} \right].
\label{eq:Phi_def}
\end{equation}
We first record an identity relating $\Phi^{(t-1)}$ to $\mathcal L_i^{\mathrm{ideal},(t)}$: since $\bm W_i^{(t-1)} = \bm W_0 + \bm X_i^{(t-1)}$,
\begin{equation}
\mathcal L_i^{\mathrm{ideal},(t)}(\bm W_i^{(t-1)}) = f_i(\bm W_i^{(t-1)}) + \frac{\lambda}{2}\bigl\|\bm X_i^{(t-1)} - \bm X_g^{(t-1)}\bigr\|_{\mathrm F}^{2},
\end{equation}
which is exactly the $i$-th term of $\Phi^{(t-1)}$; averaging over $i$ gives
\begin{equation}
\Phi^{(t-1)} = \frac{1}{N}\sum_{i=1}^{N} \mathcal L_i^{\mathrm{ideal},(t)}(\bm W_i^{(t-1)}).
\label{eq:Phi_tm1_identity}
\end{equation}
By definition~\eqref{eq:het_global_proj}, $\bm X_g^{(t)}$ is the best rank‑$R_g$ approximation of $\overline{\bm X}^{(t)}$. Hence, for any matrix $\bm Z$ with $\operatorname{rank}(\bm Z)\le R_g$,
\begin{equation}
\sum_{i=1}^{N}\bigl\|\bm X_i^{(t)} - \bm X_g^{(t)}\bigr\|_{\mathrm F}^{2}
\le \sum_{i=1}^{N}\bigl\|\bm X_i^{(t)} - \bm Z\bigr\|_{\mathrm F}^{2}.
\label{eq:proj_opt}
\end{equation}
Choosing $\bm Z = \bm X_g^{(t-1)}$ (which has rank $\le R_g$) gives
\begin{equation}
\frac{1}{N}\sum_{i=1}^{N}\bigl\|\bm X_i^{(t)} - \bm X_g^{(t)}\bigr\|_{\mathrm F}^{2}
\le \frac{1}{N}\sum_{i=1}^{N}\bigl\|\bm X_i^{(t)} - \bm X_g^{(t-1)}\bigr\|_{\mathrm F}^{2}.
\end{equation}
Recalling $\Phi^{(t)}$ from \eqref{eq:Phi_def} and using
\begin{align}
\mathcal L_i^{\mathrm{ideal},(t)}(\bm W_i^{(t)}) = f_i(\bm W_i^{(t)}) + \frac{\lambda}{2}\bigl\|\bm X_i^{(t)} - \bm X_g^{(t-1)}\bigr\|_{\mathrm F}^{2},
\end{align}
which follows from $\bm W_i^{(t)} = \bm W_0 + \bm X_i^{(t)}$, substituting the projection inequality into the definition of $\Phi^{(t)}$ gives
\begin{equation}
\Phi^{(t)}
\le \frac{1}{N}\sum_{i=1}^{N}\mathcal L_i^{\mathrm{ideal},(t)}(\bm W_i^{(t)}).
\label{eq:Phi_ineq_proof}
\end{equation}
Separately, averaging \eqref{eq:tau_descent_proof} over $i=1,\dots,N$, taking total expectation over all randomness, and rewriting the right-hand side using the identity \eqref{eq:Phi_tm1_identity}, we obtain
\begin{align}
\frac{1}{N}\sum_{i=1}^N\mathbb E\bigl[\mathcal L_i^{\mathrm{ideal},(t)}(\bm W_i^{(t)})\bigr]
\le \mathbb E[\Phi^{(t-1)}]
   - \frac{\eta c}{4N}\sum_{i=1}^{N}\sum_{k=0}^{\tau-1}
      \mathbb E\Bigl[\bigl\|\nabla_{\bm W}\mathcal L_i^{\mathrm{ideal},(t)}(\bm W_i^{(t,k)})\bigr\|_{\mathrm F}^{2}\Bigr]
   + \tau\eta\lambda^{2}\delta^{2} C_1 + \tau\eta^{2} C_2 .
\label{eq:avg_descent_proof_corrected}
\end{align}
Taking expectations in \eqref{eq:Phi_ineq_proof} and combining with \eqref{eq:avg_descent_proof_corrected} gives the round‑wise descent for $\Phi$:
\begin{align}
\mathbb E[\Phi^{(t)}]
&\le \mathbb E[\Phi^{(t-1)}]
   - \frac{\eta c}{4N}\sum_{i=1}^{N}\sum_{k=0}^{\tau-1}
      \mathbb E\Bigl[\bigl\|\nabla_{\bm W}\mathcal L_i^{\mathrm{ideal},(t)}(\bm W_i^{(t,k)})\bigr\|_{\mathrm F}^{2}\Bigr]
   + \tau\eta\lambda^{2}\delta^{2} C_1 + \tau\eta^{2} C_2 .
\label{eq:Phi_descent_proof_corrected}
\end{align}

Summing \eqref{eq:Phi_descent_proof_corrected} for $t=1,\dots,T$ and rearranging,
\begin{align}
&\frac{\eta c}{4N}\sum_{t=1}^{T}\sum_{i=1}^{N}\sum_{k=0}^{\tau-1}
   \mathbb E\Bigl[\bigl\|\nabla_{\bm W}\mathcal L_i^{\mathrm{ideal},(t)}(\bm W_i^{(t,k)})\bigr\|_{\mathrm F}^{2}\Bigr] 
\le \mathbb E[\Phi^{(0)}] - \mathbb E[\Phi^{(T)}]
          + T\tau\eta\lambda^{2}\delta^{2} C_1 + T\tau\eta^{2} C_2 .
\end{align}

% Since the regularization terms are nonnegative, the lower bound on the
% average task loss gives
% $\Phi^{(T)} \ge \frac{1}{N}\sum_{i=1}^{N} f_i(\bm W_i^{(T)}) \ge \Psi^\star$,
% and by definition $\mathbb E[\Phi^{(0)}] = \mathbb E[\Psi^{(0)}]$ (see \eqref{eq:het_Psi0}). Hence $\mathbb E[\Phi^{(0)}] - \mathbb E[\Phi^{(T)}] \le D$. Dividing by $NT\tau\eta c/4$ yields
Following the same Lipschitz-based derivation used to obtain
Eq.~\eqref{eq:app_joint_objective_lower_bound} and using
$\|\bm X_i^{(T)}\|_{\mathrm F}\leq C_A C_B$ from
Lemma~\ref{lem:bounds}, we have
$\Phi^{(T)}\geq\Psi^\star-L_c C_A C_B$.
Since
$\mathbb E[\Phi^{(0)}]=\mathbb E[\Psi^{(0)}]$
and
$\mathbb E[\Psi^{(0)}]-\Psi^\star\leq D$,
it follows that
$\mathbb E[\Phi^{(0)}]-\mathbb E[\Phi^{(T)}]
\leq D+L_c C_A C_B=\widetilde D$.

Dividing by $\eta c T\tau/4$ yields
\begin{equation}
\frac{1}{NT\tau}\sum_{t=1}^{T}\sum_{i=1}^{N}\sum_{k=0}^{\tau-1}
\mathbb E\Bigl[\bigl\|\nabla_{\bm W}\mathcal L_i^{\mathrm{ideal},(t)}(\bm W_i^{(t,k)})\bigr\|_{\mathrm F}^{2}\Bigr]
\le \frac{4\widetilde D}{c\eta T\tau} + \frac{4\lambda^{2}\delta^{2} C_1}{c} + \frac{4\eta C_2}{c}. \label{eq:ideal_grad_bound_proof}
\end{equation}

Finally, we relate the gradient of the actual local objective to the
ideal one.  From \eqref{eq:grad_bias},
\begin{equation}
\bigl\|\nabla_{\bm W}\mathcal L_i^{(t)}(\bm W)\bigr\|_{\mathrm F}^{2}
\le 2\bigl\|\nabla_{\bm W}\mathcal L_i^{\mathrm{ideal},(t)}(\bm W)\bigr\|_{\mathrm F}^{2}
   + 2\lambda^{2}\delta^{2}.
\end{equation}
Averaging this inequality over all rounds and local steps and
substituting \eqref{eq:ideal_grad_bound_proof} gives
\begin{align}
\frac{1}{NT\tau}\sum_{t=1}^{T}\sum_{i=1}^{N}\sum_{k=0}^{\tau-1}
   \mathbb E\Bigl[\bigl\|\nabla_{\bm W}\mathcal L_i^{(t)}(\bm W_i^{(t,k)})\bigr\|_{\mathrm F}^{2}\Bigr] 
 \le \frac{8\widetilde D}{c\eta T\tau}
          + \frac{8\lambda^{2}\delta^{2} C_1}{c}
          + \frac{8\eta C_2}{c}
          + 2\lambda^{2}\delta^{2}.
\end{align}
Set $\bar M_\lambda = 8C_2$ and
$\widetilde M_\lambda = \lambda(8C_1 + 2c)$; the three $\eta$- and
$\delta$-dependent terms then combine into
$\frac{\bar M_\lambda\eta}{c} + \frac{\widetilde M_\lambda\lambda\delta^{2}}{c}$,
which establishes \eqref{eq:het_bound}.
Choosing $\eta = \sqrt{8\widetilde D/(\bar M_\lambda T\tau)}$  yields
\eqref{eq:het_rate}.
\end{proof}
\subsubsection{Proof of Proposition~\ref{prop:het_global_loss}}

\begin{proof}
For each client $i$, the $L_c$-Lipschitz continuity of $f_i$ gives
\begin{align}
f_i(\bm W_g^{(t-1)})
\le f_i(\bm W_i^{(t)}) + L_c\bigl\|\bm W_g^{(t-1)} - \bm W_i^{(t)}\bigr\|_{\mathrm F}
= f_i(\bm W_i^{(t)}) + L_c\bigl\|\bm X_g^{(t-1)} - \bm X_i^{(t)}\bigr\|_{\mathrm F}.
\end{align}
By the triangle inequality,
\begin{align}
\|\bm X_g^{(t-1)} - \bm X_i^{(t)}\|_{\mathrm F}
\le \|\bm X_i^{(t)} - \bm X_{g,i}^{(t-1)}\|_{\mathrm F}
   + \|\bm X_{g,i}^{(t-1)} - \bm X_g^{(t-1)}\|_{\mathrm F}.
\end{align}
The second term is exactly the truncation error from Assumption~\ref{ass:trunc_err}:
\begin{equation}
\|\bm X_{g,i}^{(t-1)} - \bm X_g^{(t-1)}\|_{\mathrm F} \le \delta.
\end{equation}

Summing the inequality for $f_i$ over $i=1,\dots,N$ and dividing by $N$ yields
\begin{align}
F(\bm W_g^{(t-1)})
\le \frac{1}{N}\sum_{i=1}^{N} f_i(\bm W_i^{(t)})
   + L_c\,\frac{1}{N}\sum_{i=1}^{N} \|\bm X_i^{(t)} - \bm X_{g,i}^{(t-1)}\|_{\mathrm F}
   + L_c \delta.
\end{align}
Now apply Young's inequality:
$L_c a \le \frac{\lambda}{2}a^{2} + \frac{L_c^{2}}{2\lambda}$ with $a = \|\bm X_i^{(t)} - \bm X_{g,i}^{(t-1)}\|_{\mathrm F}$, which gives
\begin{align}
L_c\,\frac{1}{N}\sum_{i=1}^{N} a_i
\le \frac{\lambda}{2}\,\frac{1}{N}\sum_{i=1}^{N} a_i^{2} + \frac{L_c^{2}}{2\lambda}.
\end{align}
Therefore,
\begin{align}
F(\bm W_g^{(t-1)})
&\le \frac{1}{N}\sum_{i=1}^{N} f_i(\bm W_i^{(t)})
   + \frac{\lambda}{2}\,\frac{1}{N}\sum_{i=1}^{N}
      \bigl\|\bm X_i^{(t)} - \bm X_{g,i}^{(t-1)}\bigr\|_{\mathrm F}^{2}
   + \frac{L_c^{2}}{2\lambda}
   + L_c \delta \\
&= \Psi^{(t)} + \frac{L_c^{2}}{2\lambda} + L_c \delta,
\end{align}
where the last equality follows from the definition of $\Psi^{(t)}$ in~\eqref{eq:het_Psi}.
\end{proof}
%\clearpage

\section{Convergence with Partial Participation}
\label{app:partial_participation}

Our earlier analysis assumed all $N$ clients participate every round. We now extend to the setting where only a uniformly sampled subset $\mathcal S_t \subseteq \{1,\dots,N\}$ of size $K$ is active at round $t$. In Theorem \ref{thm:partial_participation}, we provide a convergence guarantee for this partial participation setting.

\begin{theorem}[Convergence under partial participation]
\label{thm:partial_participation}
Assume the following:
\begin{enumerate}
    \item Each local loss $f_i$ is $L_s$-smooth and $L_c$-Lipschitz, and its stochastic gradients are unbiased with norm bounded by $G_f$ (Assumptions~\ref{assumption:conv_smoothness}, \ref{assumption:conv_stochastic_gradients}, and~\ref{assumption:conv_lipschitz}).
    \item The LoRA factors satisfy $\|\bm A_i^{(t,k)}\|_{\mathrm F}\le C_A$ and $\|\bm B_i^{(t,k)}\|_{\mathrm F}\le C_B$, and the gradient preservation property holds with constant $c>0$ (Assumption~\ref{assumption:conv_factor_regularity}).
    \item $\Psi^\star$ is a uniform lower bound on the average task loss, i.e., $\frac{1}{N}\sum_{i=1}^{N} f_i(\bm W)\ge\Psi^\star$ for all $\bm W$, and the initial joint objective gap satisfies $\mathbb E[\Psi^{(0)}]-\Psi^\star\le D$ for some constant $D>0$.
\end{enumerate}
Define $\widetilde D:=D+L_cC_AC_B$ and
$C_X:=C_AC_B$, and let $M_\lambda$ denote the per-step descent-error constant from the homogeneous proof of Theorem~\ref{theorem:conv_fedpa}.
Suppose that at each round $t$, a uniformly sampled subset $\mathcal S_t \subseteq \{1,\dots,N\}$ of size $K$, drawn independently of the mini-batch noise and of the past, is active. Clients not in $\mathcal S_t$ perform no local updates at round $t$ and retain their factors, so $\bm W_i^{(t)}=\bm W_i^{(t-1)}$ and $\bm X_i^{(t)}=\bm X_i^{(t-1)}$ for $i\notin\mathcal S_t$. The server reconstructs the global update $\bm X_g^{(t)}$ as the best rank-$r$ approximation of the sampled average $\hat{\bm X}^{(t)} = \frac{1}{K}\sum_{i\in \mathcal S_t}\bm X_i^{(t)}$. For $i\notin\mathcal S_t$, the iterates $\bm W_i^{(t,k)}$ in Eq.~\eqref{eq:partial_bound} denote the virtual local trajectory that client $i$ would follow if it performed the round-$t$ local updates; this is well defined because the sampling is independent of the mini-batch noise. Then, with $B = 6\lambda C_X^2$, for any $0 < \eta \le 1$,
\begin{equation}
\frac{1}{NT\tau}\sum_{t=1}^{T}\sum_{i=1}^{N}\sum_{k=0}^{\tau-1}
\mathbb E\left[\left\|\nabla_{\bm W}\mathcal L_i^{(t)}(\bm W_i^{(t,k)})\right\|_{\mathrm F}^2\right]
\le
\frac{\widetilde D N}{c\eta K T\tau}
+ \frac{M_\lambda\eta}{c}
+ \frac{N B}{c\eta K \tau}\left(1-\frac{K}{N}\right).
\label{eq:partial_bound}
\end{equation}
When $K=N$, the third term vanishes, and the bound reduces exactly to the full-participation bound $\frac{\widetilde D}{c\eta T\tau} + \frac{M_\lambda\eta}{c}$. 
\end{theorem}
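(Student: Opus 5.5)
The plan is to rerun the round-wise potential argument from the proof of Theorem~\ref{theorem:conv_fedpa}, with the same joint objective $\Psi^{(t)}$ averaged over \emph{all} $N$ clients. The per-client one-round descent, Eq.~\eqref{eq:app_local_round_descent}, is reused unchanged. The new work is in the aggregation step and in the averaging over the random subset $\mathcal S_t$. Throughout, write $\Psi_i^{(t)} := f_i(\bm W_i^{(t)})+\frac{\lambda}{2}\|\bm X_i^{(t)}-\bm X_g^{(t)}\|_{\mathrm F}^2$, so that $\Psi^{(t)}=N^{-1}\sum_i\Psi_i^{(t)}$.

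\emph{Step 1 (split by activity).} Fix round $t$ and condition on the past and on $\mathcal S_t$. I split $N\Psi^{(t)}$ into a sum over active clients $i\in\mathcal S_t$ and a sum over inactive clients $i\notin\mathcal S_t$.

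For the active part, $\bm X_g^{(t)}$ is the best rank-$r$ approximation of $\hat{\bm X}^{(t)}$. By the variance decomposition, Eq.~\eqref{eq:app_variance_decomposition}, restricted to $\mathcal S_t$, it therefore minimizes $\sum_{i\in\mathcal S_t}\|\bm X_i^{(t)}-\bm X\|_{\mathrm F}^2$ over matrices of rank at most $r$. Since $\bm X_g^{(t-1)}$ is feasible, this gives
\begin{equation*}
\sum_{i\in\mathcal S_t}\Psi_i^{(t)}\le\sum_{i\in\mathcal S_t}\mathcal L_i^{(t)}\bigl(\bm W_i^{(t,\tau)}\bigr).
\end{equation*}
Applying Eq.~\eqref{eq:app_local_round_descent} then yields, for each active client,
\begin{equation*}
\mathcal L_i^{(t)}\bigl(\bm W_i^{(t,\tau)}\bigr)\;\text{(in expectation)}\;\le\;\Psi_i^{(t-1)}-c\eta\sum_k\bigl\|\nabla_{\bm W}\mathcal L_i^{(t)}\bigl(\bm W_i^{(t,k)}\bigr)\bigr\|_{\mathrm F}^2+\tau M_\lambda\eta^2 .
\end{equation*}

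For an inactive client, $f_i$ is unchanged, but its reference moves from $\bm X_g^{(t-1)}$ to $\bm X_g^{(t)}$. All relevant products are bounded by $C_X$ (Eqs.~\eqref{eq:app_local_product_bound} and \eqref{eq:app_global_product_bound}, the latter applied to the sampled average). Hence
\begin{equation*}
\bigl|\|\bm a-\bm b\|_{\mathrm F}^2-\|\bm a-\bm c\|_{\mathrm F}^2\bigr|\le\bigl(\|\bm a-\bm b\|_{\mathrm F}+\|\bm a-\bm c\|_{\mathrm F}\bigr)\|\bm b-\bm c\|_{\mathrm F}\le 8C_X^2 .
\end{equation*}
So $\Psi_i^{(t)}\le\Psi_i^{(t-1)}+4\lambda C_X^2\le\Psi_i^{(t-1)}+B$, which leaves slack in $B=6\lambda C_X^2$.

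\emph{Step 2 (average over sampling).} Every client, active or not, has a virtual local trajectory, and it is driven only by mini-batch noise. Because $\mathcal S_t$ is independent of that noise, each client is active with probability $K/N$ independently of its virtual gradient norms. Taking expectation over $\mathcal S_t$ therefore gives
\begin{equation*}
\mathbb E[\Psi^{(t)}]\le\mathbb E[\Psi^{(t-1)}]-\frac{K}{N}\cdot\frac{c\eta}{N}\sum_{i=1}^N\sum_{k=0}^{\tau-1}\mathbb E\Bigl[\bigl\|\nabla_{\bm W}\mathcal L_i^{(t)}\bigl(\bm W_i^{(t,k)}\bigr)\bigr\|_{\mathrm F}^2\Bigr]+\frac{K}{N}\tau M_\lambda\eta^2+\Bigl(1-\frac{K}{N}\Bigr)B .
\end{equation*}

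\emph{Step 3 (telescope).} Summing over $t=1,\dots,T$ telescopes the $\Psi$ terms. The lower bound $\Psi^{(T)}\ge\Psi^\star-L_cC_AC_B$ follows exactly as in Eq.~\eqref{eq:app_joint_objective_lower_bound}, so $\mathbb E[\Psi^{(0)}]-\mathbb E[\Psi^{(T)}]\le\widetilde D$. Dividing by $c\eta KT\tau/N$ produces
\begin{equation*}
\frac{\widetilde D N}{c\eta KT\tau}+\frac{M_\lambda\eta}{c}+\frac{NB}{c\eta K\tau}\Bigl(1-\frac{K}{N}\Bigr),
\end{equation*}
which is Eq.~\eqref{eq:partial_bound}. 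The $1/T$ cancels in the last term because the aggregation error $B$ is paid once per round.

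\emph{Main obstacle.} I expect the delicate step to be the probabilistic bookkeeping in Step 2. It requires the virtual trajectories of inactive clients to be well-defined and independent of $\mathcal S_t$, so that the indicator $\mathbf 1[i\in\mathcal S_t]$ factors out of the expectation of the gradient sum. It also requires applying the descent inequality only to the realized active clients while still charging all $N$ clients in the final average. A secondary point is checking that the norm bound on $\bm X_g^{(t)}$ still holds when the sampled average $\hat{\bm X}^{(t)}$ replaces $\overline{\bm X}^{(t)}$; it does, because $\|\hat{\bm X}^{(t)}\|_{\mathrm F}\le C_X$ as well.
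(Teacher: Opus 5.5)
Your proposal is correct and follows essentially the same route as the paper. Both use the all-client potential $\Psi^{(t)}$, split clients into sampled and unsampled ones (sampled-consensus optimality with $\bm X_g^{(t-1)}$ as the feasible competitor, and a constant reference-shift bound, respectively), average over the uniform sampling, and telescope using $\Psi^{(T)}\geq\Psi^\star-L_cC_AC_B$; the only differences are cosmetic: you track each client directly instead of using the paper's normalized sampled objective $\Psi_{\mathcal S_t}^{(t)}$, you state more explicitly why the inclusion indicator factors out of the virtual-trajectory gradient terms, and your reverse-triangle estimate gives $4\lambda C_X^2$ rather than the paper's $6\lambda C_X^2$, which is still within $B$.
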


\begin{proof}
Throughout the proof, expectations at round $t$ are taken conditionally on the state at the start of round $t$; the total-expectation statements used for telescoping follow by the tower property. Recall the joint objective
\begin{equation}
\Psi^{(t)}
=
\frac{1}{N}\sum_{i=1}^{N}
\left[
f_i(\bm W_i^{(t)})
+
\frac{\lambda}{2}\left\|\bm X_i^{(t)}-\bm X_g^{(t)}\right\|_{\mathrm F}^{2}
\right],
\end{equation}
and define the sampled joint objective
\begin{equation}
\Psi_{\mathcal S_t}^{(t)}
=
\frac{1}{K}\sum_{i\in\mathcal S_t}
\left[
f_i(\bm W_i^{(t)})
+
\frac{\lambda}{2}\left\|\bm X_i^{(t)}-\bm X_g^{(t)}\right\|_{\mathrm F}^{2}
\right].
\end{equation}
Since $\bm X_g^{(t)}$ is optimal for the sampled clients and the previous global product $\bm X_g^{(t-1)}$, of rank at most $r$, is feasible in the sampled rank-constrained consensus problem, the same one-round descent proof applies to $\Psi_{\mathcal S_t}^{(t)}$. Taking expectations over the mini-batch noise and the random sampling $\mathcal S_t$, and using the unbiasedness of the gradient estimates and of the sampling, we obtain
\begin{equation}
\mathbb E[\Psi_{\mathcal S_t}^{(t)}]
\le
\Psi^{(t-1)}
-
\frac{c\eta}{N}
\sum_{i=1}^{N}\sum_{k=0}^{\tau-1}
\mathbb E\left[\left\|\nabla_{\bm W}\mathcal L_i^{(t)}(\bm W_i^{(t,k)})\right\|_{\mathrm F}^2\right]
+
\tau M_\lambda\eta^2.
\label{eq:sampled_descent_expectation}
\end{equation}

Now relate $\Psi^{(t)}$ to $\Psi_{\mathcal S_t}^{(t)}$. For any realization of $\mathcal S_t$, we have 
\begin{equation}
\Psi^{(t)}
=
\frac{K}{N}\Psi_{\mathcal S_t}^{(t)}
+
\frac{1}{N}\sum_{i\notin \mathcal S_t}
\left[
f_i(\bm W_i^{(t)})
+
\frac{\lambda}{2}\left\|\bm X_i^{(t)}-\bm X_g^{(t)}\right\|_{\mathrm F}^{2}
\right],
\end{equation}
\begin{equation}
\Psi^{(t-1)}
=
\frac{K}{N}\Psi_{\mathcal S_t}^{(t-1)}
+
\frac{1}{N}\sum_{i\notin \mathcal S_t}
\left[
f_i(\bm W_i^{(t-1)})
+
\frac{\lambda}{2}\left\|\bm X_i^{(t-1)}-\bm X_g^{(t-1)}\right\|_{\mathrm F}^{2}
\right].
\end{equation}
Subtracting the two decompositions, using that, by assumption, non-participating clients ($i\notin\mathcal S_t$) keep $\bm W_i^{(t)}=\bm W_i^{(t-1)}$ and $\bm X_i^{(t)}=\bm X_i^{(t-1)}$ so that their task-loss terms cancel, and taking expectations gives
\begin{align}
\mathbb E[\Psi^{(t)}-\Psi^{(t-1)}]
=
\frac{K}{N}\,\mathbb E\bigl[\Psi_{\mathcal S_t}^{(t)}-\Psi_{\mathcal S_t}^{(t-1)}\bigr]
+
\mathbb E\left[
\frac{1}{N}\sum_{i\notin \mathcal S_t}
\frac{\lambda}{2}
\left(
\bigl\|\bm X_i^{(t-1)}-\bm X_g^{(t)}\bigr\|_{\mathrm F}^{2}
-
\bigl\|\bm X_i^{(t-1)}-\bm X_g^{(t-1)}\bigr\|_{\mathrm F}^{2}
\right)
\right].
\label{eq:partial_difference}
\end{align}
Let $\Delta_g^{(t)} = \bm X_g^{(t)} - \bm X_g^{(t-1)}$. Then
\begin{align}
 \frac{\lambda}{2}\left(\|\bm X_i^{(t-1)} - \bm X_g^{(t)}\|_{\mathrm F}^2 - \|\bm X_i^{(t-1)} - \bm X_g^{(t-1)}\|_{\mathrm F}^2\right) 
&= \frac{\lambda}{2}\left( -2\langle \bm X_i^{(t-1)} - \bm X_g^{(t-1)}, \Delta_g^{(t)} \rangle + \|\Delta_g^{(t)}\|_{\mathrm F}^2 \right) \nonumber\\
&\le \frac{\lambda}{2}\left( \|\bm X_i^{(t-1)} - \bm X_g^{(t-1)}\|_{\mathrm F}^2 + \|\Delta_g^{(t)}\|_{\mathrm F}^2 + \|\Delta_g^{(t)}\|_{\mathrm F}^2 \right) \nonumber\\
&= \frac{\lambda}{2}\|\bm X_i^{(t-1)} - \bm X_g^{(t-1)}\|_{\mathrm F}^2 + \lambda \|\Delta_g^{(t)}\|_{\mathrm F}^2,
\end{align}
where the inequality is the elementary bound $-2\langle \bm u, \bm v\rangle \le \|\bm u\|_{\mathrm F}^2 + \|\bm v\|_{\mathrm F}^2$. Since truncating singular values cannot increase the Frobenius norm,
\begin{equation}
\|\bm X_g^{(t)}\|_{\mathrm F} \le \|\hat{\bm X}^{(t)}\|_{\mathrm F} \le \frac{1}{K}\sum_{i\in\mathcal S_t}\|\bm X_i^{(t)}\|_{\mathrm F} \le C_X,
\end{equation}
and the same bound holds for $\bm X_g^{(t-1)}$; at initialization, $\|\bm X_g^{(0)}\|_{\mathrm F}\le C_X$ because all clients start from the same rank-$r$ factors. Hence,
\begin{equation}
\|\Delta_g^{(t)}\|_{\mathrm F} \le \|\bm X_g^{(t)}\|_{\mathrm F} + \|\bm X_g^{(t-1)}\|_{\mathrm F} \le 2C_X,
\qquad
\lambda \|\Delta_g^{(t)}\|_{\mathrm F}^2 \le 4\lambda C_X^2.
\end{equation}
Therefore, for each non-participating client, the summand in the last term of Eq.~\eqref{eq:partial_difference} is at most
\begin{equation}
\frac{\lambda}{2}\left\|\bm X_i^{(t-1)} - \bm X_g^{(t-1)}\right\|_{\mathrm F}^{2} + 4\lambda C_X^2.
\end{equation}
Averaging over the $N-K$ non-participating clients and taking expectations, the last term of Eq.~\eqref{eq:partial_difference} is bounded by
\begin{equation}
\mathbb E\left[\frac{1}{N}\sum_{i\notin \mathcal S_t} \frac{\lambda}{2}\left\|\bm X_i^{(t-1)} - \bm X_g^{(t-1)}\right\|_{\mathrm F}^{2}\right]
+
\left(1-\frac{K}{N}\right) 4\lambda C_X^2.
\end{equation}
The first expectation equals $\left(1-\frac{K}{N}\right) R^{(t-1)}$, where $R^{(t-1)} = \frac{1}{N}\sum_{i=1}^{N} \frac{\lambda}{2}\|\bm X_i^{(t-1)} - \bm X_g^{(t-1)}\|_{\mathrm F}^{2}$ is the regularization part of $\Psi^{(t-1)}$. Since $R^{(t-1)} \le 2\lambda C_X^2$ (using the same product bound), the last term of Eq.~\eqref{eq:partial_difference} is at most
\begin{equation}
\left(1-\frac{K}{N}\right) 2\lambda C_X^2 + \left(1-\frac{K}{N}\right) 4\lambda C_X^2
=
\left(1-\frac{K}{N}\right) 6\lambda C_X^2.
\end{equation}
Substituting Eq.~\eqref{eq:sampled_descent_expectation} into Eq.~\eqref{eq:partial_difference}, using $\mathbb E[\Psi_{\mathcal S_t}^{(t-1)}] = \Psi^{(t-1)}$ (unbiasedness of uniform sampling) together with the excess bound above, we obtain
\begin{equation}
\mathbb E[\Psi^{(t)}]
\le
\Psi^{(t-1)}
-
\frac{c\eta K}{N^2}
\sum_{i=1}^{N}\sum_{k=0}^{\tau-1}
\mathbb E\left[\left\|\nabla_{\bm W}\mathcal L_i^{(t)}(\bm W_i^{(t,k)})\right\|_{\mathrm F}^2\right]
+
\frac{K}{N}\tau M_\lambda\eta^2
+
\left(1-\frac{K}{N}\right) 6\lambda C_X^2.
\end{equation}
Following the same Lipschitz-based derivation used to obtain
Eq.~\eqref{eq:app_joint_objective_lower_bound} and using
$\|\bm X_i^{(T)}\|_{\mathrm F}\leq C_A C_B$, we have
$\Psi^{(T)}\geq\Psi^\star-L_c C_A C_B$.
Hence,
$\mathbb E[\Psi^{(0)}]-\mathbb E[\Psi^{(T)}]
\leq D+L_c C_A C_B=\widetilde D$.
Taking total expectations, telescoping over $t=1,\dots,T$, and dividing by
$c\eta K T\tau/N$ yields the bound in
Eq.~\eqref{eq:partial_bound}. This completes the proof.
\end{proof}
When $K=N$, the bias term vanishes, recovering the full-participation rate. For $K<N$, the bound implies convergence to a neighborhood of stationarity, with radius governed by $K/N$ and $B$. A refined analysis exploiting the concentration of uniform client sampling, rather than the worst-case reference-shift bound used here, may tighten or remove this term; we leave this to future work.

\section{Adaptive Regularization Strength}
\label{app:adaptive_regularization}

The regularization weight $\lambda$ in
Eq.~\eqref{eq:product_guided_objective} is shared across clients and
communication rounds, and is selected offline through grid search.
However, Figure~\ref{fig:lambda_results} suggests that an appropriate value may vary with data heterogeneity. 
% Near-IID settings may require only weak regularization, whereas stronger regularization may be needed under non-IID data to control client drift. 
Near-IID settings may require weaker regularization than highly non-IID settings, whereas stronger regularization may be needed under non-IID data to control client drift.
This motivates client- and round-specific weights $\lambda_i^{(t)}$ determined from
quantities available locally at each client. We discuss three possible
directions as illustrative alternatives rather than a definitive adaptive rule.

\subsection{Task-loss-based adaptation}
The local task loss reflects how strongly a client still needs to
adapt to its data. One possible rule is
\begin{equation}
\lambda_{i,\mathrm{loss}}^{(t)}
=
\frac{
\lambda_0
}{
1+
\alpha
f_i\left(
\bm W_i^{(t,0)};
\xi_i^{(t,0)}
\right)
},
\label{eq:loss_based_lambda}
\end{equation}
where $\xi_i^{(t,0)}$ denotes the first mini-batch sampled at client $i$ in round $t$, $\lambda_0>0$ is a base regularization weight and $\alpha>0$ controls its sensitivity to the initial local loss.
Since the task loss is nonnegative, the resulting weight naturally satisfies
$0 < \lambda_{i,\mathrm{loss}}^{(t)} \leq \lambda_0$.
A large task loss weakens the global pull and prioritizes local adaptation, whereas a smaller loss restores stronger drift control. This quantity is readily available during training, although its
behavior may depend on the loss scale and the sampled mini-batch.

\subsection{Gradient-aware drift adaptation}
The task-gradient magnitude provides a direct estimate of the local
update strength. In LoRA, the trainable variables are the factors
$\bm B_i$ and $\bm A_i$, whose stochastic gradients on the initial
mini-batch $\xi_i^{(t,0)}$ satisfy
$\nabla_{\bm B_i}f_i=\nabla_{\bm W}f_i\bm A_i^\top$ and
$\nabla_{\bm A_i}f_i=\bm B_i^\top\nabla_{\bm W}f_i$.
Because these gradients are scaled by the complementary factors, we
use the normalized surrogate
\begin{equation}
\widehat g_{f,i}^{(t)}
=
\sqrt{
\frac{
\left\|
\nabla_{\bm B_i}
f_i\left(
\bm W_i^{(t,0)};
\xi_i^{(t,0)}
\right)
\right\|_{\mathrm F}^{2}
}{
\left(
\|\bm A_i^{(t,0)}\|_{\mathrm F}
+
\varepsilon
\right)^{2}
}
+
\frac{
\left\|
\nabla_{\bm A_i}
f_i\left(
\bm W_i^{(t,0)};
\xi_i^{(t,0)}
\right)
\right\|_{\mathrm F}^{2}
}{
\left(
\|\bm B_i^{(t,0)}\|_{\mathrm F}
+
\varepsilon
\right)^{2}
}
},
\label{eq:grad_surrogate}
\end{equation}
where $\varepsilon > 0$ is a small constant for numerical stability. Combining this estimate with the current product mismatch yields

\begin{equation}
\lambda_{i,\mathrm{grad}}^{(t)}
=
\max\left(
\lambda_{\min},
\min\left(
\alpha
\frac{
\widehat g_{f,i}^{(t)}
}{
\left\|
\bm X_i^{(t-1)}
-
\bm X_{g,i}^{(t-1)}
\right\|_{\mathrm F}
+
\varepsilon
},
\lambda_{\max}
\right)
\right),
\label{eq:grad_based_lambda}
\end{equation}
where
$\bm X_i^{(t-1)}
=
\bm B_i^{(t-1)}
\bm A_i^{(t-1)}$,  \(\bm X_{g,i}^{(t-1)}\) denotes the rank‑$R_i$ global reference product used in Eq.~\eqref{eq:product_guided_objective}, $\alpha>0$ is a scaling factor controlling the influence of the gradient surrogate, and $\lambda_{\max}>\lambda_{\min}\ge 0$ are clipping bounds that confine the adaptive weight to a prescribed safe range.
This rule adjusts the regularization strength according to the
estimated local update magnitude relative to the client's existing
deviation from the global reference. Its robustness may still be
limited by small factor norms and variability in the sampled mini-batch.

\subsection{Gradient-direction-based adaptation}
A third possibility is to consider the directional relationship
between the task and regularization gradients. Let
$\bm g_{f,i}^{(t)}$ and $\bm g_{r,i}^{(t)}$ denote the task and
product-regularization factor gradients, respectively, concatenated
across all LoRA layers after normalization by the corresponding complementary factor norms. We define
\begin{equation}
\lambda_{i,\mathrm{dir}}^{(t)}
=
\lambda_0
\left(
1-
\frac{
\left\langle
\bm g_{f,i}^{(t)},
\bm g_{r,i}^{(t)}
\right\rangle
}{
\left\|
\bm g_{f,i}^{(t)}
\right\|_2
\left\|
\bm g_{r,i}^{(t)}
\right\|_2
+
\varepsilon
}
\right).
\label{eq:direction_based_lambda}
\end{equation}
The cosine similarity measures the directional agreement between task optimization and product-space regularization. A larger value indicates that the two objectives favor similar updates, leading to a smaller regularization weight, whereas a smaller or negative value increases the weight to place greater emphasis on limiting client drift. With the stabilizing term $\varepsilon$ in the denominator, the resulting weight  stays strictly within the range $(0,2\lambda_0)$ and avoids division by zero.

Overall, these alternatives provide initial directions for adaptive regularization. They introduce global hyperparameters, such as $\lambda_0$ and $\alpha$, while dynamically adjusting the effective regularization strength based on the local state of each client, such as its current task loss or gradient magnitude. For example, in Eq.~(\ref{eq:loss_based_lambda}), a larger local loss leads to a smaller regularization weight, allowing greater local adaptation, whereas a smaller loss results in stronger global guidance. Such client-specific adaptation may reduce the need for heterogeneity-specific tuning of the regularization strength. In our preliminary exploration, we use fixed scale constants, e.g., $\lambda_0=1$, $\alpha=1$, $\lambda_{\min}=0$, and $\lambda_{\max}=10$, across the considered heterogeneity levels. The clipping bounds are used as safety limits based on preliminary experiments. Whether this fixed parameterization transfers robustly across different heterogeneous settings remains an empirical question for future work.

\section{Complexity Analysis}
\label{app:complexity}

We first analyze the per-layer server-side complexity of FedPA-LoRA and representative federated LoRA methods under the homogeneous-rank setting. We consider $N$ participating clients, a $d\times d$ weight matrix, and a common local and global LoRA rank $r=R_g$. We denote the total concatenated rank by $r_{\mathrm{tot}}=Nr$ and focus on the typical low-rank regime $r_{\mathrm{tot}}\ll d$. The memory analysis assumes that all client uploads are retained during server aggregation. We then complement the server-side analysis with client-side wall-clock measurements that include local optimization and method-specific operations such as factor alignment, dual-adapter training, and product-space regularization.

\subsection{Server-Side Complexity Analysis}
\label{app:server_complexity}

\subsubsection{Server-Side Asymptotic Complexity}
\label{app:server_asymptotic_complexity}

\begin{table*}[t]
\centering
\caption{
Dominant per-layer server-side complexity under the homogeneous-rank
setting with $Nr\ll d$. The randomized complexity assumes a sketch
dimension $\ell=R_g+p=\mathcal O(r)$.
}
\label{tab:server_complexity}
{\small
\setlength{\tabcolsep}{3.5pt}
\renewcommand{\arraystretch}{1.15}
\begin{tabular}{@{}lllcc@{}}
\toprule
\textbf{Method}
&
\textbf{Client Upload}
&
\textbf{Server Operation}
&
\textbf{Computation}
&
\textbf{Memory}
\\
\midrule

FedIT
&
$\bm B_i,\bm A_i$
&
Two-Factor Averaging
&
$\mathcal{O}(Ndr)$
&
$\mathcal{O}(Ndr)$
\\

FedRot-LoRA
&
$\widetilde{\bm B}_i,\widetilde{\bm A}_i$
&
Two-Factor Averaging
&
$\mathcal{O}(Ndr)$
&
$\mathcal{O}(Ndr)$
\\

FedDPA-LoRA
&
$\bm B_{g,i},\bm A_{g,i}$
&
Two-Factor Averaging
&
$\mathcal{O}(Ndr)$
&
$\mathcal{O}(Ndr)$
\\

FFA-LoRA
&
$\bm B_i$
&
Single-Factor Averaging
&
$\mathcal{O}(Ndr)$
&
$\mathcal{O}(Ndr)$
\\

RoLoRA
&
$\bm B_i$ or $\bm A_i$
&
Single-Factor Averaging
&
$\mathcal{O}(Ndr)$
&
$\mathcal{O}(Ndr)$
\\

FedSA-LoRA
&
$\bm A_i$
&
Single-Factor Averaging
&
$\mathcal{O}(Ndr)$
&
$\mathcal{O}(Ndr)$
\\

\midrule

FlexLoRA
&
$\bm B_i,\bm A_i$
&
Dense Product Aggregation and SVD
&
$\mathcal{O}(d^3)$
&
$\mathcal{O}(d^2)$
\\

FedPA-LoRA
&
$\bm B_i,\bm A_i$
&
Reduced QR and Core SVD
&
$\mathcal{O}(N^2dr^2)$
&
$\mathcal{O}(Ndr)$
\\

FedPA-LoRA (Randomized)
&
$\bm B_i,\bm A_i$
&
Factored Randomized Reconstruction
&
$\mathcal{O}(Ndr^2)$
&
$\mathcal{O}(Ndr)$
\\

\bottomrule
\end{tabular}
}
\end{table*}

Table~\ref{tab:server_complexity} compares the dominant per-layer
server-side computation and memory complexities. We include both the
exact reduced-QR reconstruction of FedPA-LoRA and its randomized
extension with sketch dimension $\ell=R_g+p$.

FedIT, FedRot-LoRA, and FedDPA-LoRA average both LoRA factors, whereas
FFA-LoRA, FedSA-LoRA, and RoLoRA average a single factor per round.
All these factor-wise methods require $\mathcal{O}(Ndr)$ server-side
computation. FlexLoRA instead constructs the dense $d\times d$
product-space aggregate and performs SVD, requiring
$\mathcal{O}(Nd^2r+d^3)$ computation and
$\mathcal{O}(d^2+Ndr)$ memory. In contrast, FedPA-LoRA applies reduced
QR factorizations to the $d\times Nr$ concatenated factors and
performs SVD only on the resulting $Nr\times Nr$ core matrix. Its full
computation and memory complexities are
$\mathcal{O}(N^2dr^2+N^3r^3+Ndr^2)$ and
$\mathcal{O}(Ndr+N^2r^2)$, respectively. Under $Nr\ll d$, the
dominant costs reduce to $\mathcal{O}(d^3)$ and
$\mathcal{O}(d^2)$ for FlexLoRA, and
$\mathcal{O}(N^2dr^2)$ and $\mathcal{O}(Ndr)$ for FedPA-LoRA.
Thus, FedPA-LoRA avoids constructing and decomposing the dense
aggregate while recovering the same optimal rank-$r$ approximation
as truncated SVD of the dense product-space aggregate.

The exact reconstruction retains a quadratic dependence on $N$
through $r_{\mathrm{tot}}=Nr$. To reduce this dependence, we further
consider a randomized extension with sketch dimension
$\ell=R_g+p$. For a general $\ell$, its computation and memory
complexities are
$\mathcal{O}(Ndr\ell+d\ell^2)$ and
$\mathcal{O}(Ndr+d\ell+Nr\ell)$, respectively. When
$\ell=\mathcal{O}(r)$, these reduce to the dominant terms
$\mathcal{O}(Ndr^2)$ and $\mathcal{O}(Ndr)$. The randomized
extension therefore changes the dependence on the number of clients
from quadratic to linear while allowing the sketch dimension to
control the trade-off between computational cost and reconstruction
accuracy.

\subsubsection{Randomized Server-Side Reconstruction}
\label{app:randomized_server_reconstruction}

We now provide a lightweight server-side reconstruction to further
reduce the quadratic dependence on the number of clients $N$
established in Appendix~\ref{app:server_asymptotic_complexity}.
\begin{remark}[Randomized Server-Side Reconstruction]
\label{remark:randomized_reconstruction}

The proposed server-side reconstruction can be generalized using a
randomized sketch applied directly to
$\Delta \bm W_{\mathrm{ideal}}^{(t)}
=
\bm B_{\mathrm{cat}}^{(t)}
\bm A_{\mathrm{cat}}^{(t)}$
through its factored form. Specifically, by drawing a Gaussian test
matrix
$\bm\Omega^{(t)}
\in
\mathbb R^{d\times(R_g+p)}$,
the server forms
$\bm Y^{(t)}
=
\bm B_{\mathrm{cat}}^{(t)}
(\bm A_{\mathrm{cat}}^{(t)}\bm\Omega^{(t)})$
and applies the randomized range finder of
\citep{halko2011finding}, followed by the rank-$R_g$ truncated-SVD
reconstruction in the sampled subspace.

When $p$ is a small oversampling parameter and
$R_g+p=\mathcal O(r)$, this procedure avoids both the dense
$d\times d$ aggregate and the full $r_{\mathrm{tot}}$-dimensional
basis. Its dominant computation is reduced from
$\mathcal O(N^2dr^2)$ to $\mathcal O(Ndr^2)$, at the cost of replacing
the exact optimality guarantee in
Eq.~\eqref{eq:optimal_rank_projection} with the following
in-expectation bound \citep[Theorem 1.1]{halko2011finding}:
\begin{equation}
\mathbb{E}
\left\|
\bm B_g^{(t)}\bm A_g^{(t)}
-
\Delta \bm W_{\mathrm{ideal}}^{(t)}
\right\|_2
\leq
\left[
2
+
4\sqrt{\frac{R_g+p}{p-1}}\sqrt d
\right]
\sigma_{R_g+1}
\left(
\Delta \bm W_{\mathrm{ideal}}^{(t)}
\right),
\notag
\end{equation}
where $\sigma_{R_g+1}(\cdot)$ denotes the $(R_g+1)$-th singular value and $p\geq2$.

At the other extreme, setting $R_g+p=r_{\mathrm{tot}}$ captures the column space of
$\Delta \bm W_{\mathrm{ideal}}^{(t)}$ almost surely in exact arithmetic.
The resulting reconstruction is therefore an optimal rank-$R_g$
approximation equivalent to that obtained by the reduced-QR and
core-SVD procedure, while its dominant computation becomes
$\mathcal O(dr_{\mathrm{tot}}^2)
=
\mathcal O(N^2dr^2)$.
Thus, the sketch size $R_g+p$ generalizes the proposed reconstruction
from a lower-cost randomized approximation to the original exact
reconstruction. We empirically examine this trade-off below through
wall-clock measurements and downstream performance under different
sketch dimensions.
\end{remark}

\subsubsection{Server-Side Wall-Clock Evaluation}
\label{app:server_wallclock_evaluation}

To complement the asymptotic complexity analysis, we measure the average server-side aggregation time per communication round for RoBERTa-Large and Llama 3-8B under the homogeneous-rank setting. RoBERTa-Large uses $N=10$ and $R_g=4$, while Llama 3-8B uses $N=6$ and $R_g=8$. All methods are evaluated in the same environment. As shown in Table~\ref{tab:server_wallclock}, factor-wise averaging incurs the lowest server-side cost, whereas dense product aggregation and SVD become substantially more expensive as the model dimension increases. FedPA-LoRA avoids the dense $d\times d$ decomposition by performing SVD only on the smaller core matrix. Randomized reconstruction further provides lower-cost operating points through the sketch dimension $\ell$, with its wall-clock time generally increasing as $\ell$ approaches the full concatenated rank $r_{\mathrm{tot}}$.
We additionally include $p=0$ as an empirical minimum-sketch configuration in the wall-clock evaluation. Although the randomized reconstruction remains computationally well defined in this setting,
the expectation bound in Remark~\ref{remark:randomized_reconstruction} does not apply, as it
requires $p\geq2$.

\begin{table*}[t]
\centering
\caption{
Average per-round server-side aggregation wall-clock time under the
homogeneous-rank setting. For randomized reconstruction, the sketch
dimension is $\ell=R_g+p$. RoBERTa-Large uses
$N=10$ clients and $R_g=4$, yielding
$r_{\mathrm{tot}}=NR_g=40$, whereas Llama 3-8B uses
$N=6$ clients and $R_g=8$, yielding
$r_{\mathrm{tot}}=48$.
}
\label{tab:server_wallclock}
{\small
\setlength{\tabcolsep}{7pt}
\renewcommand{\arraystretch}{1.15}
\begin{tabular}{@{}lcccc@{}}
\toprule

\multirow{2}{*}{\textbf{Server Operation}}
&
\multicolumn{2}{c}{
\textbf{RoBERTa-Large}
$(N=10,\; R_g=4)$
}
&
\multicolumn{2}{c}{
\textbf{Llama 3-8B}
$(N=6,\; R_g=8)$
}
\\

\cmidrule(lr){2-3}
\cmidrule(l){4-5}

&
\textbf{Sketch Dim.} $\bm{\ell}$
&
\textbf{Time (s)}
&
\textbf{Sketch Dim.} $\bm{\ell}$
&
\textbf{Time (s)}
\\
\midrule

Two-Factor Averaging
& -- & $0.0298$
& -- & $0.0199$
\\

Single-Factor Averaging
& -- & $0.0206$
& -- & $0.0109$
\\

Dense Product Aggregation and SVD
& -- & $4.7340$
& -- & $152.3957$
\\

Reduced QR and Core SVD
& -- & $0.0840$
& -- & $0.1169$
\\

\midrule

Randomized Reconstruction ($p=0$)
& $4$  & $0.0473$
& $8$  & $0.0596$
\\

Randomized Reconstruction ($p=8$)
& $12$ & $0.0493$
& $16$ & $0.0638$
\\

Randomized Reconstruction ($p=16$)
& $20$ & $0.0529$
& $24$ & $0.0706$
\\

Randomized Reconstruction ($p=32$)
& $36$ & $0.0615$
& $40$ & $0.0901$
\\

\bottomrule
\end{tabular}
}
\end{table*}

% \subsection{Effect of the Sketch Dimension} 
% \label{app:effect_sketch_dimension}

\subsubsection{Effect of the Sketch Dimension} 
\label{app:effect_sketch_dimension}

We further evaluate how the sketch dimension affects downstream performance. Experiments are conducted on MNLI under the homogeneous-rank setting with $N=3$ clients, $r=R_g=4$, and Dirichlet concentration parameter $\beta=0.5$. We consider $\ell=R_g+p\in\{4,5,6,8,12\}$, corresponding to $p\in\{0,1,2,4,8\}$. The largest sketch dimension $\ell=12=r_{\mathrm{tot}}$ uses the full concatenated rank. Although the bound in
Remark~\ref{remark:randomized_reconstruction} requires $p\geq2$, we also evaluate $p\in\{0,1\}$ empirically to assess further cost reductions without claiming the stated theoretical guarantee.

\begin{figure}[t]
    \centering
    \includegraphics[
        width=0.6\linewidth, height=6.7cm
    ]{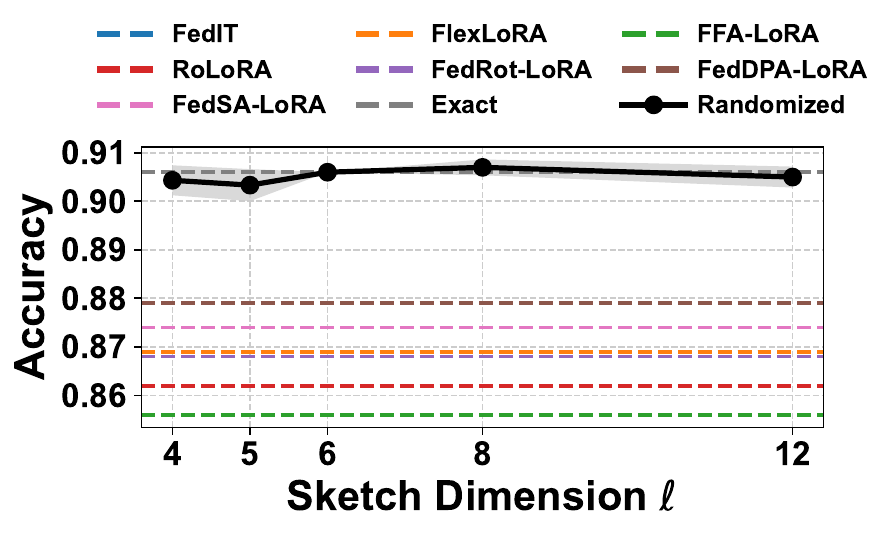}
    \caption{
    MNLI accuracy of randomized server-side reconstruction under
    different sketch dimensions
    $\ell=R_g+p\in\{4,5,6,8,12\}$.
    Experiments use the homogeneous-rank setting with $N=3$ clients,
    $r=R_g=4$, and $\beta=0.5$.
    The corresponding oversampling parameters are
    $p\in\{0,1,2,4,8\}$, and
    $\ell=12=r_{\mathrm{tot}}$ uses the full concatenated rank.
    Horizontal dashed lines denote baseline accuracies averaged over
    three random seeds.
    }
    \label{fig:randomized_accuracy}
\end{figure}

As shown in Figure~\ref{fig:randomized_accuracy}, FedPA-LoRA remains robust to randomized approximation across different sketch dimensions, preserving downstream accuracy while reducing server-side computation.

\subsection{Client-Side Wall-Clock Evaluation}
\label{app:client_wallclock_evaluation}

We measure the client-side wall-clock time of representative local
update strategies on RoBERTa-Large and Llama 3-8B. The time is first
averaged across participating clients within each communication round
and then across rounds. The evaluated strategies include
single-factor and full LoRA updates, post-training alignment in
FedRot-LoRA, joint optimization of global and local adapters in
FedDPA-LoRA, and product-space regularization in FedPA-LoRA.

\begin{table*}[t]
\centering
\caption{
Average client-side wall-clock time in seconds, first averaged across
participating clients within each communication round and then across
rounds. The first round is excluded to avoid one-time initialization
overhead.
}
\label{tab:client_wallclock}
{\small
\setlength{\tabcolsep}{7pt}
\renewcommand{\arraystretch}{1.15}
\begin{tabular}{@{}llcc@{}}
\toprule
\textbf{Local Update Strategy}
&
\textbf{Representative Methods}
&
\textbf{RoBERTa-Large}
&
\textbf{Llama 3-8B}
\\
\midrule

Single-Factor LoRA Update
&
FFA-LoRA, RoLoRA, FedSA-LoRA
&
$\mathbf{9.1388}$
&
$\mathbf{3.8015}$
\\

Full LoRA Update
&
FedIT, FlexLoRA
&
$9.2053$
&
$3.8324$
\\

LoRA Update + Alignment
&
FedRot-LoRA
&
$9.2121$
&
$3.9323$
\\

Global and Local LoRA Update
&
FedDPA-LoRA
&
$10.3431$
&
$3.8371$
\\

Product-Regularized LoRA Update
&
FedPA-LoRA
&
$9.4603$
&
$5.1343$
\\

\bottomrule
\end{tabular}
}
\end{table*}

As shown in Table~\ref{tab:client_wallclock}, single-factor LoRA
updates incur the lowest client-side cost for both models. On
RoBERTa-Large, FedRot-LoRA introduces negligible overhead over a full
LoRA update, requiring $9.2121$ and $9.2053$ seconds, respectively.
FedDPA-LoRA is the most expensive at $10.3431$ seconds because it
optimizes both global and local adapters. FedPA-LoRA requires
$9.4603$ seconds, corresponding to a $2.8\%$ overhead over the full
LoRA update while remaining less expensive than FedDPA-LoRA.

On Llama 3-8B, FedRot-LoRA and FedDPA-LoRA require $3.9323$ and
$3.8371$ seconds, respectively, exhibiting comparable client-side
costs despite their different update procedures. FedPA-LoRA requires
$5.1343$ seconds, representing a $34.0\%$ increase over the full LoRA
update but an absolute overhead of only about $1.30$ seconds per
client and round. Thus, product-space regularization introduces additional
client-side computation, particularly in the larger-model setting,
which represents a limitation of FedPA-LoRA despite its substantial
performance improvements.

\end{document}